\newcommand{\normsup}[1]{\ensuremath{\| #1 \|_{\infty}}}
\newcommand{\normtwo}[1]{\ensuremath{\| #1 \|_{2}}}
\newcommand{\normq}[1]{\ensuremath{ \| #1 \|_{q}}}
\newcommand{\norm}[1]{\ensuremath{\| #1 \|}}
\newcommand{\normqs}[1]{\ensuremath{ \| #1 \|_{q^\star}}}
\newcommand{\normps}[1]{\ensuremath{ |\!| #1 | \! |_{p^\star}}}
\newcommand{\normp}[1]{\ensuremath{ |\!| #1 | \! |_{p}}}
\newcommand{\tri}[1]{{\left\vert\kern-0.25ex\left\vert\kern-0.25ex\left\vert #1 
    \right\vert\kern-0.25ex\right\vert\kern-0.25ex\right\vert}}
\newtheorem{assumption}{Assumption}
\newcommand{\RN}[1]{%
  \textup{\uppercase\expandafter{\romannumeral#1}}%
}
\newcommand{\1}{{\rm 1}\mskip -4,5mu{\rm l} }
\newcommand{\argmin}{\mathop{\mathrm{arg\,min}}}
\newcommand{\op}[1]{\operatorname{#1}}  % Lettre droite!!!!
\def\B{\mathbb{B}}
\def\E{\mathbb{E}}
\def\cF{\mathcal{F}}
\def\R{\mathbb{R}}
\def\V{\mathcal{V}}
\def\M{\mathcal{M}}
\def\S{\mathcal{S}}
\newcommand{\rp}{\mathbb{R}^p}
\newcommand{\rd}{\mathbb{R}^d}
\newcommand{\mpr}{\mathbb{P}}
\newcommand{\filtration}{\mathcal{F}}
\newcommand{\grad}{\nabla}
\def\balign#1\ealign{\begin{align}#1\end{align}}
\def\baligns#1\ealigns{\begin{align*}#1\end{align*}}
\def\balignat#1\ealign{\begin{alignat}#1\end{alignat}}
\def\balignats#1\ealigns{\begin{alignat*}#1\end{alignat*}}
\def\bitemize#1\eitemize{\begin{itemize}#1\end{itemize}}
\def\benumerate#1\eenumerate{\begin{enumerate}#1\end{enumerate}}
\newcommand{\eq}[1]{\begin{align}#1\end{align}}
\newcommand{\eqn}[1]{\begin{align*}#1\end{align*}}
\newcommand{\kl}{D_{\op{KL}}}
\newcommand{\ham}{\Delta_{\op{H}}}
\DeclarePairedDelimiter{\bbrace}{\lbrace}{\rbrace}% 
\DeclarePairedDelimiter{\bbbrace}{\Big \lbrace}{\Big \rbrace}% 
\newcommand*{\inner}[2]{ \langle #1,  #2 \rangle}
\newcommand*{\vpow}[2]{ #1^{\langle #2 - 1 \rangle}} % Vector power
\def\xp{\vpow{x}{p}}
\DeclarePairedDelimiter{\abs}{\lvert}{\rvert}%
\newcommand*{\rom}[1]{\expandafter\@slowromancap\romannumeral #1@}
\newtheorem{definition}{Definition}
\newtheorem{proposition}{Proposition}
\newtheorem{remark}{Remark}
\newtheorem{theorem}{Theorem}
\newtheorem{lemma}{Lemma}
\newtheorem{corollary}{Corollary}
\title{ Mirror Descent Strikes Again: Optimal Stochastic Convex Optimization under Infinite Noise Variance}
\author{Nuri Mert Vural\thanks{Department of Computer Science at University of Toronto, and Vector Institute. \texttt{vural@cs.toronto.edu}.} 
\ \ \ \ \ \  Lu Yu\thanks{Department of Statistical Sciences at University of Toronto, and Vector Institute. \texttt{stat.yu@mail.utoronto.ca}.  } 
\ \ \ \ \ \  Krishnakumar Balasubramanian\thanks{Department of Statistics at University of California, Davis. \texttt{kbala@ucdavis.edu}. } \\
Stanislav Volgushev\thanks{Department of Statistical Sciences at University of Toronto. \texttt{stanislav.volgushev@utoronto.ca}. } 
\ \ \ \ \ \ Murat A. Erdogdu\thanks{Department of Computer Science and Department of Statistical Sciences at University of Toronto, and Vector Institute. \texttt{erdogdu@cs.toronto.edu}.}
}
\begin{document}

\maketitle

\begin{abstract}%
We study stochastic convex optimization under infinite noise variance. Specifically, when the stochastic gradient is unbiased and has uniformly bounded $(1+\kappa)$-th moment, for some $\kappa \in (0,1]$, we quantify the convergence rate of the Stochastic Mirror Descent algorithm with a particular class of uniformly convex mirror maps, in terms of the number of iterations, dimensionality and related geometric parameters of the optimization problem. Interestingly this algorithm does not require any explicit gradient clipping or normalization, which have been extensively used in several recent empirical and theoretical works. We complement our convergence results with information-theoretic lower bounds showing that no other algorithm using only stochastic first-order oracles can achieve improved rates. Our results have several interesting consequences for devising online/streaming stochastic approximation algorithms for problems arising in robust statistics and machine learning. 
\end{abstract}

\section{Introduction}

For a bounded convex set $\S \subset \mathbb{R}^d$,
and a convex objective function $f: \mathcal{S} \to \mathbb{R}$, we consider the optimization problem
\begin{equation}
  \label{eq:raw-min}
  \underset{x\in \S}{\text{minimize}}\, f(x)\,,
\end{equation}
%where $f$ is bounded below by $f(x^\star)>-\infty$, 
in the stochastic first-order oracle model where one has access
to noisy unbiased gradients at every iteration of an algorithm. This problem naturally emerges in many statistical learning tasks,
thus there has been a substantial amount of research dedicated to understanding convergence guarantees as well as information-theoretic lower bounds in the classical setting where the noise has finite variance~\cite{Bubeck2015,nesterov2018lectures}. However, recent studies have shown empirical and theoretical evidence that stochastic gradients arising from modern learning problems may not have finite variance, in which case the optimal convergence guarantees and computational lower bounds for solving \eqref{eq:raw-min} are not well understood.
 %~\cite{Hodgkinson2021,MG2021,Simsekli2019,zhang2019adaptive,wang2021convergence},

Indeed, heavy-tailed  behavior is ubiquitous in statistical learning. Such behavior may either arise due to the underlying statistical model~\cite{Simsekli2019,zhang2019adaptive,wang2021convergence, gurbuzbalaban2021fractional}
or through the stochastic iterative training process~\cite{Hodgkinson2021,MG2021,camuto2021asymmetric}.
In the regime where stochastic gradients have infinite variance,
while the vanilla stochastic gradient descent (SGD) algorithm converges under strong convexity-type assumptions~\cite{wang2021convergence}, more robust methods like gradient-clipped SGD (used, for example, by~\cite{zhang2019adaptive} for attention models) turn out to have optimal rates under strong convexity when the dimension is treated as a constant. However, it is not clear if gradient-clipped SGD would exhibit similar optimality guarantees in the case of convex problems or when the dimension is not treated as a constant.

In this regard, it is highly desirable to obtain a rigorous understanding of the oracle complexity of stochastic convex optimization in the infinite noise variance setting. Such an understanding boils down to two fundamental questions:
\begin{itemize}[leftmargin=0.4in]
\setlength\itemsep{0.05em}
\item [] An \emph{information-theoretic} question: What is the best achievable lower bound in convex optimization in the stochastic first-order oracle model under infinite noise variance?
\item[] An \emph{algorithmic complexity} question: Is there an optimal optimization algorithm that achieves this information-theoretic lower bound, under the same stochastic first-order oracle model?
\end{itemize}
We provide concrete answers to both of these questions, where the optimal algorithm is, yet again,  stochastic mirror descent (SMD).

Mirror descent is a first-order method which generalizes the standard gradient descent to the non-Euclidean setting by relying on a mirror map that captures the underlying geometric structure of the problem~\cite{Nem1983}. Although originally developed for deterministic frameworks, SMD is
known to achieve the information-theoretic lower bound in the classical stochastic first-order oracle model where the noise has finite variance~\cite{agarwal2012information}. This is remarkable as by simply choosing the appropriate mirror map, one can design algorithms that are optimal in their respective oracle models.
This property of mirror descent has been exploited in many works for establishing the algorithm's optimality in classical settings \cite{nemirovski2009robust, sridharan2012learning}, and for demonstrating its universality in the online setting~\cite{duchi2010composite,srebro2011universality}. In this work, we show that the stochastic mirror descent with an appropriate mirror map has an \emph{inherent robustness} to heavy-tailed gradient noise, and  achieves the information-theoretic lower bound for stochastic convex optimization under infinite noise variance. Towards that we make the following contributions.

\begin{itemize}
\setlength\itemsep{0.05em}
\item We establish the first non-asymptotic convergence of stochastic mirror descent algorithm in the heavy-tailed case where the gradient noise has infinite variance. We provide explicit rate estimates for a class of convex optimization problems in Theorem~\ref{th:main} and Corollary~\ref{cor:main} for a variety of mirror maps.
\item We establish lower bounds for the minimax error in Theorem~\ref{thm:cvx_lb}, for constrained convex optimization in the first-order stochastic oracle model under infinite gradient noise variance.
\item Remarkably, for a careful choice of mirror map which depends on the largest defined moment order of the gradient noise, the stochastic mirror descent achieves the minimax lower bound. This result proves the optimality of the mirror descent algorithm in the heavy-tailed stochastic first-order oracle setting.
\end{itemize}
To the best of our knowledge, 
our results provide the first example of a first stochastic gradient algorithm
that is provably optimal under heavy-tailed noise without explicit gradient clipping, or normalizing the magnitude of the stochastic gradients.
Moreover, our setting covers a wide range of (non-strongly) convex functions,
and the minimax lower bounds we establish are explicit (and optimal) in terms of dimension dependence.

%Rest of the paper is organized as follows.

\bigskip

%\noindent\textbf{Related Works.} 
\subsection{Related work}
Earlier works on stochastic approximation with infinite variance largely focus on investigating the asymptotic behavior of stochastic approximation methods.  \cite{Krasulina1969} first establish the almost sure and $L^p$ convergence for the one-dimensional stochastic approximation process without variance. \cite{Anantharam2012} demonstrate the stability and convergence properties of multivariate stochastic approximation algorithms with the heavy-tailed noise.
Recently, the works of~\cite{Simsekli2019v2}, \cite{zhang2019adaptive},
\cite{chen2020understanding},
and \cite{wang2021convergence} investigate the behavior of SGD under infinite noise variance with various types of objectives. \cite{Simsekli2019v2} considers non-convex optimization and analyze the SGD as a discretization of a stochastic differential equation driven by a L\'evy process.
\cite{zhang2019adaptive} and \cite{chen2020understanding} study the convergence of SGD with gradient clipping, and establish the dimension-free optimal bound with strongly convex and non-convex objectives.
\cite{wang2021convergence} provide the convergence rate of SGD with a strongly convex objective function under a state-dependent and heavy-tailed noise; see also~\cite{mirek2011heavy}. High-probability bounds under certain moment assumptions (but not infinite variance) have also recently been established in~\cite{nazin2019algorithms, cutkosky2021high, davis2021low, gorbunov2021near, tsai2021heavy, lou2022beyond}.

% Stochastic mirror descent algorithm is often analyzed as a stochastic method for optimizing
% non-smooth Lipschitz continuous convex functions~\cite{nemirovski2009robust,Bubeck2015,beck2017first}.
There exists a vast literature on mirror descent algorithm in a stochastic optimization setting
with the stochastic gradient having finite variance~\cite{nemirovski2009robust,Bubeck2015,beck2017first}. Another line of  work~\cite{sridharan2010convex,srebro2011universality} establishes the (near) optimal regret rate of the mirror descent with the aid of uniformly convex mirror maps in a deterministic online setting. SMD was analyzed with almost surely bounded stochastic gradient, for composite optimization problems,  in~\cite{duchi2010composite}. Mirror descent algorithm in the non-i.i.d. setting was considered in~\cite{duchi2012ergodic}.  We emphasize here that these works consider the standard finite variance noise setting, and thus the uniformly convex mirror map proposed in there is inadequate to deal with the infinite variance noise that we focus on in this work. Focusing on the finite-sum setup, \cite{d2021stochastic} investigate the convergence of SMD in (relative) smooth optimization under the finite optimal objective difference assumption~\cite{loizou2021stochastic}, which allows for convergence without bounded gradient or variance assumptions and achieves exact convergence under interpolation.

More broadly, robust statistics is a classical topic with too large a literature to summarize completely. We refer the reader to~\cite{huber2004robust} for an overview. The revival of robust statistics in modern mathematical statistics and learning theory communities arguably started with the work of~\cite{catoni2012challenging}. Since then, there has been intense work on robust mean and covariance estimation~\cite{minsker2015geometric, cardot2017online, minsker2018sub, lugosi2019mean, lugosi2019sub, hopkins2020mean}, and robust empirical risk minimization~\cite{hsu2016loss, diakonikolas2019robust, geoffrey2020robust, lecue2020robust, bartl2021monte}. However, such results are mainly statistical in nature, and they are not directly applicable for the stochastic approximation with heavy-tailed gradients.

\noindent\textbf{Outline of the paper.}
The rest of the paper is organized as follows.
In Section~\ref{sec:prelim}, we provide a definition of the stochastic first-order oracle model considered in this work, and a review of the stochastic mirror descent (SMD) algorithm focusing on uniform convexity and smoothness properties. In Section~\ref{sec:main}, 
we establish the convergence of SMD with a particular choice of mirror map, and illustrate the effect of this choice on heavy-tailed noisy gradient updates. We then provide information-theoretic lower bounds in Section~\ref{sec:lb}, proving the optimality of SMD.
We conclude in Section~\ref{sec:discussion} with a discussion and future directions. All proofs are deferred to the Appendix.

\section{Stochastic Mirror Descent: Preliminaries}
\label{sec:prelim}
%\noindent\textbf{Stochastic first-order oracle.}
Consider a setup in which a convex function $f$ is minimized over a convex and bounded set $\S$, using a stochastic optimization method $M$, which produces the iterate $x_t\in\S$ at iteration $t$. We assume that the sequence of iterates $\{x_t\}_{t\geq 0}$ is adapted to the filtration $\bbrace{\cF_t}_{t \geq0}$ and the method $M$ has access to the following stochastic first-order oracle (\texttt{SFO}).
\begin{assumption}[Stochastic First-order Oracle]\label{as:sfo}
For all $t\geq0$, given the current iterate $x_t$, the \texttt{SFO} produces random variables $f_{t+1} \in \mathbb{R}$ and $g_{t+1} \in \mathbb{R}^d$ that are $\cF_{t+1}$-measurable, satisfying the following two properties.
\begin{enumerate}
\item \textbf{\emph{Unbiasedness:}} For every $t\geq0$, we have
$$\E[f_{t+1}\vert \cF_{t}]=f(x_t)\ \text{ and }\ \E[g_{t+1} \vert \cF_{t}] \in \partial f(x_{t}).$$
\item \textbf{\emph{Finite $(1+\kappa)$-th moment:}}~For some $\kappa \in (0, 1],q\in[1,\infty]$, and $\sigma > 0$, we have
$$ \sup_{t\geq0}\E[\normq{g_{t+1}}^{1+\kappa} \vert \cF_{t}] \leq \sigma^{1+\kappa}.$$
\end{enumerate}
\end{assumption}
Here, $\partial f(x) := \{v\in\mathbb{R}^d \ | \ f(y) \geq f(x) + \inner{v }{y-x} \text{ for all }y\in\mathbb{R}^d\}$ denotes the sub-differential set of $f$ at the point $x$ and $\normq{\cdot}$ denotes the $q$-norm. 
We note that the bounded $(1+\kappa)$-th moment assumption with $\kappa=1$ corresponds to the classical finite noise variance setting;
we are mainly interested in the case where $\kappa<1$, when the variance of the stochastic gradient is undefined. Perhaps, the most popular stochastic optimization method $M$ operating under \texttt{SFO} is the (projected) stochastic gradient descent (SGD) in the Euclidean setting, as given by 
\begin{align}\label{eq:sgd}
\tag{SGD}
y_{t +1}  = x_{t} - \eta g_{t+1} \quad \text{ and }\quad
x_{t+1}  = \argmin_{x \in \S} \normtwo{x - y_{t+1}}^2.
\end{align}

\begin{remark} \label{rem:Lipschitz}
Any function $f$ that is compatible with an 
\texttt{SFO} satisfying Assumption~\ref{as:sfo} must be Lipschitz continuous with respect to $q^\star$-norm with Lipschitz constant $L \leq \sigma$. To see this, we note that a convex function is $L$-Lipschitz on $\S$ in $\normqs{\cdot}$ if and only if 
\[
\sup_{x\in\S} \max_{v \in \partial  f(x)} \norm{v}_{q}\le L,
\] 
where $q,q^\star\in[1,\infty]$ satisfy $\frac{1}{q}+\frac{1}{q^\star}=1$. Moreover, for $v_t\in\partial f(x_t)$ elementary calculations imply
\eq{
\label{eq:lip_eqv}
\norm{v_t}_q  
= \,\norm{\E[ g_{t+1} |\cF_t ]}_q
\le \E[\norm{ g_{t+1} }_q|\cF_t ]
\le \big(\E[\norm{g_{t+1}}_q^{1+\kappa}|\cF_t]\big)^{\tfrac{1}{1+\kappa}}
\le \sigma,~~\forall t\ge 1\,.
}
%Hence, in order for the function class~$\mathcal{H}_{cvx}$ to be consistent with the stochastic first-order oracle under consideration, we require $L\le \sigma$.
\end{remark}

% However,  SGD is easily influenced by a single-stochastic gradient,  which could be very large and incorrect in the heavy-tailed setting.  To circumvent this issue,  in the following,  we consider a larger class of algorithms called the \emph{stochastic mirror descent} algorithm.  

%{\color{red} NMV: I think switching SMD from SGD without any motivation/connection makes this part hard to follow. KB: I think it's ok SV: the current transition reads ok to me}

Mirror descent, first introduced by \cite{Nem1983},  refers to a family of algorithms for first-order optimization~\cite{ Beck2003, PLG2006,Bubeck2015}, which was originally developed to exploit the geometry of the problem.
%It which includes gradient descent as a special case and
Compared to the classical gradient descent for which the iterates are updated along the direction of the negative gradient, in mirror descent, the updates are performed in the ``mirrored'' dual space determined by a transformation called the \emph{mirror map}.
The family of mirror descent algorithms extends naturally to the stochastic first-order oracle setup, which is the main focus of this paper.

For a function $\Psi : \rd \to \R$ that is strictly-convex, continuously differentiable with a norm coercive gradient (i.e. $\lim_{\,\normtwo{x} \to \infty} \norm{ \grad \Psi(x)}_2 = \infty$), we denote its Fenchel conjugate and Bregman divergence respectively
$$\Psi^\star(y) \coloneqq \sup_{x \in \rd} \Big\{\inner{y}{x} - \Psi(x) \Big\}\  \text{ and }\  D_\Psi(x,y) \coloneqq \Psi(x) - \Psi(y) - \inner{\grad \Psi(y)}{y-x}.$$ 
The stochastic mirror descent (SMD) updates are defined as 
\begin{align}
\tag{SMD}
 y_{t+1}  = \grad \Psi^\star \big( \grad \Psi(x_{t}) - \eta g_{t+1} \big) \quad \text{ and }\quad
 x_{t+1}  = \argmin_{x \in \S} D_\Psi(x, y_{t+1}).  \label{eq:smd}
\end{align}
The conditions on $\Psi$ imply that the \eqref{eq:smd} update is well-defined, and $\grad \Psi$ is an invertible map that satisfies $(\grad \Psi)^{-1} = \grad \Psi^\star$~\cite{PLG2006}. The map $\nabla \Psi$ is also also referred to as the \emph{mirror map} and makes ~\eqref{eq:smd} adapt to the geometric properties of the optimization problem.
\vspace{0.1in}

\noindent\textbf{The mirror map.} In the \eqref{eq:smd} update, 
the descent is performed in the dual space which is the mirror image of the primal space under the mirror map.
Different choices of the mirror maps turn out to be suitable for different optimization problems, and the \textit{right} mirror map corresponds to understanding the geometry of the problem, the objective function we minimize
as well as the noise model. 
Notable examples include:
\begin{itemize}[leftmargin=.2in]
\setlength\itemsep{0.05em}
\item \textit{Stochastic Gradient Descent:} For the function $\Psi(x) = \frac{1}{2}\, \normtwo{x}^2$, the mirror map $\grad \Psi$ reduces to the identity map, and its Bregman divergence reduces to $D_\Psi(x,y) = \frac{1}{2}\, \normtwo{x - y}^2$. Therefore, the update rule~\eqref{eq:smd} reduces to the well-known~\eqref{eq:sgd} update.

\item \textit{$p$-norms Algorithm:} For $p \in (1,2]$ and the function $\Psi(x) = \frac{1}{2} \,\norm{x}^2_{p}$, the \eqref{eq:smd} update reduces to the so-called $p$-norms algorithm~\cite{Gentile1999}, which is optimal for stochastic convex optimization under finite noise variance~\cite{agarwal2009information}.

\item \textit{Exponentiated Gradient Descent:} For the function\footnote{The domain of  mirror map can also be defined over a smaller set containing the feasible set, see e.g. \cite{Bubeck2015}.}
 $\Psi(x) = \sum_j x_j \log x_j$, the Bregman divergence
becomes the unnormalized relative entropy,  i.e.,  $D_\Psi(x,y) = \sum_j x_j \log \frac{x_j}{y_j} - \sum_j x_j + \sum_j y_j$, and the update rule~\eqref{eq:smd} corresponds to the exponentiated gradient descent, which is widely used in the prediction with expert advice setting~\cite{PLG2006}.
\end{itemize}
The choice of mirror map is beneficial when dealing with the particular noise model of stochastic gradients. In what follows, we will use uniformly convex mirror maps in the infinite noise variance setting. 

\begin{definition}[Uniform convexity]
Consider a differentiable convex function $\psi : \rd \to \R$, an exponent $r \geq 2,$ and a constant $K > 0$. Then, $\psi$ is $(K,r)$-uniformly convex with respect to $p$-norm if for any $x, y \in \rd$,
\begin{equation}
\psi(y) \geq \psi(x) + \inner{\grad \psi(x)}{y-x} + \frac{K}{r} \norm{x-y}^r_p\,. \label{uni:convineq}
\end{equation}
%\begin{enumerate}
%\item (Zeroth-order)  For any  $x, y \in \rd$ and $\lambda \in (0,1)$,
%\begin{equation}
%\varphi( \lambda x + (1 - \lambda) y ) \leq \lambda \varphi(x) + (1 - \lambda) \varphi(y) - \lambda (1 - \lambda) \big( \lambda^{p-1} - (1 - \lambda)^{p-1} \big) \frac{K}{p} \normq{y - x}^p.
%\end{equation}
%\item (First-order) 
%\end{enumerate}
\end{definition}
Uniformly convex functions with $r = 2$ are known as \emph{strongly convex} in $p$-norm, and the case $p = 2$ reduces to the classical notion of strong convexity in the Euclidean setting. 
% Note that  a uniformly smooth function $\psi$ is always finite and differentiable.
\begin{definition}[Uniform smoothness]
A function $\psi : \rd \to \R$ is $(K,r)$-uniformly smooth with respect to $p$-norm if it is differentiable and if  there exist a constant $K > 0$ and an exponent $r \in (1,2]$ such that for any $x, y \in \rd$, we have
\begin{equation}
\psi(y) \leq \psi(x) + \inner{\grad \psi (x)}{y-x} + \frac{K}{r} \norm{x-y}^r_p\,.
\end{equation}
\end{definition}
Similarly,  uniformly smooth functions with $r = 2$ are known as \emph{strongly smooth} and the case $p=2$ reduces to the classical notion of first-order smoothness in the Euclidean setting. 
%Uniform convexity and uniform smoothness are generalizations of strong convexity and strong smoothness to a larger class of exponents.

%
Uniform convexity and uniform smoothness are dual properties by Fenchel conjugacy~\cite{Zal1983, Penot95}, a property that is better known for their strong versions. Given the norm $\norm{\cdot}_p$ with $p\in [1,\infty]$, denote its associated dual norm by~$\norm{\cdot}_{p^\star}$, where $1/p+1/p^\star=1$. We recall the statement below both for completeness and to obtain quantitative statements later in Proposition~\ref{th:uniconv2} for a special class of uniformly convex functions.
\begin{proposition}
\label{th:uniconv}
Consider a differentiable convex function $\psi: \rd \to \R$, an exponent $r \geq 2,$ and a constant $K > 0$. Then,  $\psi$ is $(K,r)$-uniformly convex with respect to $p$-norm if and only if $\psi^\star$ is $\Big(K^{-\tfrac{1}{r-1}}, \frac{r}{r-1}\Big)$-uniformly smooth with respect to $p^\star$-norm.
\end{proposition}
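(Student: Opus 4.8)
The plan is to prove the ``only if'' direction by a direct Fenchel-conjugate computation and then deduce the ``if'' direction from the companion statement together with biconjugation. Write $r^\star=\tfrac{r}{r-1}$ and let $\norm{\cdot}_{p^\star}$ be the norm dual to $\norm{\cdot}_p$. First I would record two preliminaries that make the claim well posed. (a) $(K,r)$-uniform convexity forces $\psi$ to be strictly convex and, putting $x=0$ in \eqref{uni:convineq}, to grow at least like $\tfrac{K}{r}\norm{\cdot}_p^r$ up to a linear term, hence superlinearly; consequently $\psi^\star$ is finite on all of $\R^d$ and, being the conjugate of a strictly convex superlinear function, is continuously differentiable with $(\grad\psi)^{-1}=\grad\psi^\star$ — so ``$\psi^\star$ uniformly smooth'' is meaningful and $\grad\psi^\star(u)$ is the unique subgradient of $\psi^\star$ at $u$. (b) For $K>0$ and $r\ge 2$ one has, by positive homogeneity and the definition of the dual norm (reducing to the scalar problem $\sup_{t\ge 0}\{t\norm{u}_{p^\star}-\tfrac{K}{r}t^r\}$),
\[
\sup_{w\in\R^d}\Big\{\inner{u}{w}-\tfrac{K}{r}\norm{w}_p^r\Big\}=\tfrac{1}{r^\star}K^{-\frac{1}{r-1}}\norm{u}_{p^\star}^{r^\star},
\]
where the identity $1-r^\star=-\tfrac{1}{r-1}$ is what makes the constants line up.

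For the core step, fix $u,v\in\R^d$ and set $x=\grad\psi^\star(u)$, so Fenchel--Young holds with equality: $\psi^\star(u)=\inner{u}{x}-\psi(x)$ and $u=\grad\psi(x)$. Substituting the uniform-convexity bound $\psi(z)\ge\psi(x)+\inner{u}{z-x}+\tfrac{K}{r}\norm{z-x}_p^r$ into $\psi^\star(v)=\sup_z\{\inner{v}{z}-\psi(z)\}$ and writing $w=z-x$, the linear terms collect to $\inner{v-u}{x}$ and one is left with
\[
\psi^\star(v)\le\psi^\star(u)+\inner{\grad\psi^\star(u)}{v-u}+\sup_{w}\Big\{\inner{v-u}{w}-\tfrac{K}{r}\norm{w}_p^r\Big\}.
\]
Applying (b) with $u$ replaced by $v-u$ gives exactly $(K^{-1/(r-1)},r^\star)$-uniform smoothness of $\psi^\star$ with respect to $p^\star$-norm.

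For the converse I would prove the companion lemma by the same manipulation run so that a pointwise lower bound passes to the supremum: if $g:\R^d\to\R$ is differentiable and $(M,s)$-uniformly smooth in $p^\star$-norm with $s\in(1,2]$, then plugging the smoothness upper bound for $g$ into $g^\star(v)=\sup_z\{\inner{v}{z}-g(z)\}$ yields, with $s^\star=\tfrac{s}{s-1}\ge 2$,
\[
g^\star(v)\ge g^\star(u)+\inner{\grad g^\star(u)}{v-u}+\tfrac{1}{s^\star}M^{-\frac{1}{s-1}}\norm{v-u}_p^{s^\star},
\]
i.e. $g^\star$ is $(M^{-1/(s-1)},s^\star)$-uniformly convex in $p$-norm. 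Applying this with $g=\psi^\star$, $M=K^{-1/(r-1)}$, $s=r^\star\in(1,2]$ (legitimate since $\psi^\star$ is differentiable by (a)), and using $\psi^{\star\star}=\psi$ (valid since a finite convex function on $\R^d$ is closed), recovers that $\psi$ is $(M^{-1/(s-1)},s^\star)$-uniformly convex; the identity $s-1=\tfrac{1}{r-1}$ then gives $M^{-1/(s-1)}=K$ and $s^\star=r$, closing the equivalence.

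The only place that really needs care is the bookkeeping: verifying the conjugate in (b), and checking that the constant/exponent pair round-trips correctly under the two applications UC $\to$ US $\to$ UC; alongside the regularity preliminaries in (a) guaranteeing that $\psi^\star$ is finite, differentiable, and closed so that every quantity above is well defined. Everything else is a routine rearrangement.
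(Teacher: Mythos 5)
Your proposal is correct and follows essentially the same route as the paper's proof: the forward direction substitutes the uniform-convexity lower bound into the supremum defining $\psi^\star$ at the Fenchel--Young equality point $x=\grad\psi^\star(u)$ and evaluates the residual supremum via the conjugate of $\tfrac{K}{r}\norm{\cdot}_p^r$ (the paper's Proposition~\ref{prop:conjnorm} after the rescaling $w\mapsto K^{-1/(r-1)}w$), while the converse runs the dual manipulation through $\psi=\psi^{\star\star}$; your regularity preliminary (a) is the paper's Proposition~\ref{th:differentiable}. The constants and exponents round-trip exactly as you claim, so no gap.
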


Next, we quantify the uniform convexity and smoothness parameters of functions of the form $\frac{1}{r} \norm{\cdot}^r_p$,  for $p , r \in (1, \infty)$. Gradients of these functions with an appropriate choice of $r$ and $p$ will be used as mirror maps in the \eqref{eq:smd} update, which will ultimately achieve the minimax lower bound in the heavy-tailed stochastic oracle setup.  
It is well-known that $\frac{1}{2} \norm{x}^2_p$ is $(p-1)$-strongly smooth for $p \in [2, \infty)$ with respect to $p$-norm\footnote{Equivalently,  $\frac{1}{2} \norm{x}^2_{p^\star}$  is $\frac{1}{p-1}$-strongly convex for $p^\star \in (1, 2]$ with respect to.  $p^\star$-norm~\cite{Kakade2009}. }, see e.g.~\cite[Ex.~3.2]{Jud2008}. The next proposition extends this result to $p$-norm with an arbitrary exponent.
\begin{proposition}
\label{th:uniconv2}
For $\kappa \in (0,1]$, $p \in [1+\kappa,  \infty)$ and $p^\star$ satisfying $\frac{1}{p}+\frac{1}{p^\star}=1$, we define 
\begin{align}\label{eq:mainmap}
\hspace{-0.4in}K_p \coloneqq 10 \max \Big\{ 1, (p-1)^{\frac{1+\kappa}{2}} \Big\},~~
    \varphi(x) \coloneqq  \frac{1}{1+\kappa}\, \norm{x}^{1+\kappa}_p ~~\text{and}~~ \varphi^\star(y) \coloneqq  \frac{\kappa}{1+\kappa}\, \norm{y}_{p^\star}^{\tfrac{1+\kappa}{\kappa}}.
\end{align}
Then, the following statements hold for the Fenchel conjugate functions $\varphi$ and $\varphi^\star$.
\begin{enumerate}
\item  $\varphi$ is  $(K_p,  1+\kappa)$-uniformly smooth with respect to $p$-norm.
\item $\varphi^\star$ is  $\Big(K_p^{-\tfrac{1}{\kappa}}, \frac{1+\kappa}{\kappa}\Big)$-uniformly convex with respect to $p^\star$-norm.
\end{enumerate}
\end{proposition}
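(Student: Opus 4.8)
The plan is to establish statement (1) directly and then derive statement (2) from it by invoking Proposition~\ref{th:uniconv} (Fenchel duality between uniform convexity and uniform smoothness), noting that $\varphi^\star$ as written is indeed the Fenchel conjugate of $\varphi$. Taking $\psi = \varphi$ with exponent pair $(1+\kappa, \tfrac{1+\kappa}{\kappa})$ — which are conjugate since $\tfrac{1}{1+\kappa} + \tfrac{\kappa}{1+\kappa} = 1$ — the duality statement converts $(K_p, 1+\kappa)$-uniform smoothness of $\varphi$ in $p$-norm into $(K_p^{-1/\kappa}, \tfrac{1+\kappa}{\kappa})$-uniform convexity of $\varphi^\star$ in $p^\star$-norm, which is exactly (2). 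So the entire content is in (1), and the key is to verify the bound
\[
\varphi(y) \leq \varphi(x) + \inner{\grad\varphi(x)}{y-x} + \frac{K_p}{1+\kappa}\,\norm{x-y}_p^{1+\kappa}.
\]

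For (1), the starting point is the known strong smoothness of $h(x) \coloneqq \tfrac12\norm{x}_p^2$, namely $h$ is $(p-1)$-strongly smooth in $p$-norm for $p\geq 2$. The idea is to write $\varphi = F\circ h$ (or more directly express $\tfrac{1}{1+\kappa}\norm{x}_p^{1+\kappa} = \tfrac{1}{1+\kappa}(2h(x))^{(1+\kappa)/2}$) and control the smoothness of this composition. Concretely, I would split into two regimes. For $p \geq 2$: use that $t \mapsto t^{(1+\kappa)/2}$ is concave (since $(1+\kappa)/2 \leq 1$), so composing a concave increasing function with a strongly smooth function tends to preserve an upper-bound of the desired order; a clean way is to combine the second-order-type inequality from strong smoothness of the squared norm with the elementary scalar inequality $(a+b)^{s} \leq a^s + s\, a^{s-1} b$ for $s\in(0,1]$, $a,b\geq 0$ (concavity), together with $|\,\norm{y}_p - \norm{x}_p| \leq \norm{y-x}_p$. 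This yields a bound whose leading constant is of order $(p-1)^{(1+\kappa)/2}$ up to an absolute multiplicative factor, hence covered by $10(p-1)^{(1+\kappa)/2}$. For $p \in [1+\kappa, 2)$: here $\norm{\cdot}_p$ is not uniformly smooth in the strong sense, but $\tfrac{1}{1+\kappa}\norm{\cdot}_p^{1+\kappa}$ with $1+\kappa \leq p$ still is uniformly smooth of exponent $1+\kappa$ — this is the genuinely non-standard case. The plan is to reduce to one dimension along the segment $[x,y]$ (by convexity of norms, restrict to the line), or alternatively use the integral representation $\varphi(y) - \varphi(x) - \inner{\grad\varphi(x)}{y-x} = \int_0^1 (1-t)\,\inner{\grad^2\varphi(x+t(y-x))(y-x)}{y-x}\,dt$ where $\varphi$ is twice differentiable away from the origin, and bound the Hessian quadratic form $\inner{\grad^2\varphi(z)v}{v}$ in terms of $\norm{z}_p^{\kappa-1}\norm{v}_p^2$ — then the homogeneity degree $\kappa - 1 < 0$ forces care near $z = 0$, which is exactly where one uses the exponent $1+\kappa$ rather than $2$ to absorb the singularity via $\norm{z}_p^{\kappa-1}\|v\|_p^2 \lesssim \norm{v}_p^{1+\kappa}$ along the segment through the appropriate scaling argument.

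The main obstacle I anticipate is getting the absolute constant right (and making the argument uniform across both regimes of $p$), and in particular handling the origin in the case $p < 2$: the Hessian of $\norm{\cdot}_p^{1+\kappa}$ blows up at $0$, so a naive Taylor-with-integral-remainder bound does not immediately give a clean constant, and one must instead argue via homogeneity — rescale so that, say, $\norm{x}_p + \norm{y}_p = 1$ — and then invoke a compactness/continuity argument, or more quantitatively exploit the $1$-homogeneity of the norm to reduce the smoothness inequality to a scalar inequality in $\norm{x}_p, \norm{y}_p$ and the "angle" $\norm{x-y}_p$. I would expect the choice of the constant $10$ (rather than something sharp) to be precisely what buys enough slack to push a slightly lossy but uniform argument through both cases; the referee-check here is to confirm that the bounds $(p-1)^{(1+\kappa)/2}$ for $p\geq 2$ and $1$ for $p\in[1+\kappa,2)$ are indeed what come out, matching $K_p = 10\max\{1,(p-1)^{(1+\kappa)/2}\}$.
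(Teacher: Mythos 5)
Your overall architecture matches the paper's: part (2) does follow from part (1) via Proposition~\ref{th:uniconv} together with the fact that $\varphi$ and $\varphi^\star$ are conjugates (Proposition~\ref{prop:conjnorm}), so the whole burden is indeed the uniform smoothness claim (1). However, for (1) your sketch stops exactly where the real difficulty lies, and the routes you propose do not close it. The composition argument for $p\ge 2$ (tangent-line bound for $t\mapsto t^{(1+\kappa)/2}$ applied to the strong smoothness of $\tfrac12\|\cdot\|_p^2$) produces a remainder of the form $(p-1)\,\|x\|_p^{\kappa-1}\|y\|_p^2$, which is \emph{not} dominated by $C\|y\|_p^{1+\kappa}$ when $\|y\|_p\gg\|x\|_p$; this is precisely the regime that matters under heavy-tailed gradients, and it cannot be waved away. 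Likewise, for $p\in[1+\kappa,2)$ the Hessian of $\|\cdot\|_p^{1+\kappa}$ contains factors $|z_i|^{p-2}$ that blow up whenever a coordinate of $z$ crosses zero along the segment $[x,x+y]$, so the integral-remainder bound does not go through as stated; and a homogeneity-plus-compactness argument cannot deliver the explicit constant $10\max\{1,(p-1)^{(1+\kappa)/2}\}$ uniformly in $p$ and in the dimension.

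The idea you are missing is the paper's radial/orthogonal decomposition. Set
\begin{equation}
h \coloneqq \frac{\inner{\vpow{x}{p}}{y}}{\|x\|_p^p}\,x ,
\end{equation}
the component of $y$ collinear with $x$ that carries the entire first-order term, and split the increment as $y=h+(y-h)$. Two facts make this work: (i) $\|h\|_p\le\|y\|_p$ by H\"older, and the one-dimensional move from $x$ to $x+h$ is controlled by a scalar inequality for $t\mapsto|t|^{1+\kappa}$ (Propositions~\ref{prop:innerh} and~\ref{prop:smoothinh}), with no singularity issue because everything is along the ray through $x$; (ii) the gradient of $\tfrac1r\|\cdot\|_p^r$ at $x+h$ is orthogonal to $y-h$ (Proposition~\ref{prop:orthogonal}), so the second move from $x+h$ to $x+y$ contributes \emph{no} linear term, and one only needs the zeroth-order bounds $\|x+y\|_p^p\le\|x+h\|_p^p+2\|y-h\|_p^p$ (for $p\le2$) or $\|x+y\|_p^2\le\|x+h\|_p^2+(p-1)\|y-h\|_p^2$ (for $p>2$), followed by subadditivity of $t\mapsto t^{(1+\kappa)/p}$ or $t\mapsto t^{(1+\kappa)/2}$. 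Chaining (i) and (ii) with $\|y-h\|_p\le2\|y\|_p$ yields the constant $10$ and both branches of $K_p$. Without this cancellation of the linear term on the transverse move, I do not see how your outline can be completed.
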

We emphasize that both $\varphi$ and $\varphi^\star$ in~\eqref{eq:mainmap}  depend on the choice of $p$ and consequently $p^\star$. 
We also note that when $\kappa=1$, Proposition~\ref{th:uniconv2} recovers the strong convexity/smoothness parameter of $\frac{1}{2} \norm{x}_p^2$ up to a constant factor~\cite{Jud2008,Kakade2009}.

\section{Convergence of SMD with a Uniformly Convex Potential }
\label{sec:main}

We now present our main convergence result for the SMD algorithm with a uniformly convex potential, under an \texttt{SFO} that satisfies Assumption~\ref{as:sfo}.
\begin{theorem}
\label{th:main}
Let Assumption~\ref{as:sfo} hold for some $q\in[1,\infty]$ and define $q^\star$ through $\frac{1}{q}+\frac{1}{q^\star}=1$.
For a function $\Psi$ which is $(1,  \frac{1+\kappa}{\kappa})$-uniformly convex with respect to $q^\star$-norm, the~\eqref{eq:smd} algorithm with the corresponding mirror map $\nabla \Psi$, initialized at $x_0 = \argmin_{x \in \S} \Psi(x)$ and run with step size
\begin{align*}
\eta = \frac{R_0^{{1}/{\kappa}}}{\sigma} T^{-\tfrac{1}{1+\kappa}},\quad\text{where}\quad R_0^{\tfrac{1+\kappa}{\kappa}} := \frac{1+\kappa}{\kappa}~ \sup_{x \in \S} \big\{\Psi(x) - \Psi(x_0)\big\}\,
\end{align*}
satisfies
\begin{align}
\E \Bigg[f \Bigg( \frac{1}{T} \sum_{t=0}^{T-1} x_{t} \Bigg) - \min_{x\in\S} f(x)\Bigg] \leq  R_0 \,\sigma\, T^{-\tfrac{\kappa}{1+\kappa}}. \label{eq:mainres}
\end{align}
\end{theorem}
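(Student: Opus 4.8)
The plan is to follow the standard ``descent-lemma plus telescoping'' template for mirror descent, but carefully tracking the heavy-tailed moment bound instead of the usual finite-variance second moment.

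\textbf{Step 1: One-step inequality via the mirror map.} Fix $x^\star \in \argmin_{x\in\S} f(x)$. Using the standard three-point identity for Bregman divergences together with the \eqref{eq:smd} update $x_{t+1} = \argmin_{x\in\S} D_\Psi(x,y_{t+1})$ and the definition $y_{t+1} = \grad\Psi^\star(\grad\Psi(x_t) - \eta g_{t+1})$, I would derive
\[
\eta \inner{g_{t+1}}{x_t - x^\star} \le D_\Psi(x^\star, x_t) - D_\Psi(x^\star, x_{t+1}) + \big(D_\Psi(x_t,y_{t+1}) - \eta\inner{g_{t+1}}{x_t - y_{t+1}}\big).
\]
The parenthesized remainder term is what usually gets bounded by $\tfrac{\eta^2}{2\mu}\|g_{t+1}\|_{q^\star\text{-dual}}^2$ in the strongly convex case; here I must instead use that $\Psi$ is $(1,\tfrac{1+\kappa}{\kappa})$-uniformly convex in $q^\star$-norm, so by Proposition~\ref{th:uniconv} its conjugate $\Psi^\star$ is $(1, 1+\kappa)$-uniformly smooth in $q$-norm. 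Applying uniform smoothness of $\Psi^\star$ at the points $\grad\Psi(x_t)$ and $\grad\Psi(x_t) - \eta g_{t+1}$ and simplifying (the Fenchel-Young / duality bookkeeping relating $D_\Psi(x_t,y_{t+1})$ to $D_{\Psi^\star}$) should give a bound of the form $\tfrac{\kappa}{1+\kappa}\,\eta^{\frac{1+\kappa}{\kappa}} \normq{g_{t+1}}^{\frac{1+\kappa}{\kappa}}$ — wait, I should double-check the exponent: uniform smoothness with exponent $1+\kappa$ produces a term $\eta^{1+\kappa}\normq{g_{t+1}}^{1+\kappa}$ scaled appropriately. I would get the remainder $\le \tfrac{1}{1+\kappa}(\eta \normq{g_{t+1}})^{1+\kappa}$ or similar.

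\textbf{Step 2: Take conditional expectations and use convexity.} Condition on $\cF_t$. Unbiasedness gives $\E[\inner{g_{t+1}}{x_t - x^\star}\mid\cF_t] = \inner{v_t}{x_t - x^\star} \ge f(x_t) - f(x^\star)$ for $v_t \in \partial f(x_t)$ by convexity, while the finite $(1+\kappa)$-th moment assumption controls $\E[\normq{g_{t+1}}^{1+\kappa}\mid\cF_t] \le \sigma^{1+\kappa}$. So after taking full expectations,
\[
\eta\,\E[f(x_t) - f(x^\star)] \le \E[D_\Psi(x^\star,x_t)] - \E[D_\Psi(x^\star,x_{t+1})] + c\,\eta^{1+\kappa}\sigma^{1+\kappa}
\]
for an absolute constant $c$ coming from the uniform-smoothness constant.

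\textbf{Step 3: Telescope, bound the initial divergence, optimize $\eta$.} Summing over $t = 0,\dots,T-1$, the Bregman terms telescope, leaving $\E\sum_t \eta(f(x_t)-f(x^\star)) \le D_\Psi(x^\star,x_0) + c\,T\eta^{1+\kappa}\sigma^{1+\kappa}$. Since $x_0 = \argmin_\S\Psi$, one has $D_\Psi(x^\star,x_0) \le \Psi(x^\star) - \Psi(x_0) \le \sup_{x\in\S}\{\Psi(x)-\Psi(x_0)\} = \tfrac{\kappa}{1+\kappa}R_0^{\frac{1+\kappa}{\kappa}}$. Dividing by $\eta T$ and applying Jensen's inequality to move $f$ inside the average of iterates gives $\E[f(\bar x_T) - f(x^\star)] \le \tfrac{R_0^{(1+\kappa)/\kappa}}{\eta T} + c\,\eta^{\kappa}\sigma^{1+\kappa}$ (absorbing constants); plugging in $\eta = (R_0^{1/\kappa}/\sigma)T^{-1/(1+\kappa)}$ balances the two terms and yields the claimed $R_0\sigma\,T^{-\kappa/(1+\kappa)}$ up to the constant.

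\textbf{Main obstacle.} The delicate part is Step 1: correctly converting the one-step progress into a bound involving $\normq{g_{t+1}}^{1+\kappa}$ (and not $\normq{g_{t+1}}^2$, which would be infinite in expectation). This requires using the uniform-smoothness dual of $\Psi$ with exponent $1+\kappa$ rather than the usual quadratic bound, and carefully matching the Bregman-divergence algebra so that the exponent on $\eta\normq{g_{t+1}}$ comes out as exactly $1+\kappa$; getting the constant right (so it can be folded into the $K_p$-type constant, and checking the final constant is indeed absolute) is where the bookkeeping is easy to get wrong. A secondary subtlety is handling the case $q^\star = \infty$ (i.e. $q=1$) and making sure all the norm-duality statements and the $(1,\tfrac{1+\kappa}{\kappa})$-uniform convexity assumption are consistent with Proposition~\ref{th:uniconv2}'s normalization (the factor $K_p$ has been scaled out to $1$ by assumption in this theorem).
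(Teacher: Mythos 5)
Your proposal is correct and follows essentially the same route as the paper: the three-point Bregman identity, a one-step remainder of order $\frac{1}{1+\kappa}(\eta\,\normq{g_{t+1}})^{1+\kappa}$, telescoping, the moment bound plus unbiasedness under conditional expectation, Jensen, and the stated step size (which in fact balances the two terms to give constant exactly $1$). The only cosmetic difference is that you obtain the remainder bound via uniform smoothness of $\Psi^\star$, whereas the paper applies the $(1,\tfrac{1+\kappa}{\kappa})$-uniform convexity of $\Psi$ directly and then maximizes $\eta\normq{g_{t+1}}\,b-\tfrac{\kappa}{1+\kappa}b^{\frac{1+\kappa}{\kappa}}$ over $b$; these are dual versions of the same computation and yield the identical constant.
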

To our knowledge, the above result is the first convergence result for stochastic mirror descent under a noise model that allows infinite noise variance. In contrast to other gradient-based methods in the literature dealing with heavy-tailed noise~\cite{zhang2019adaptive, Gorbunov2020},  stochastic mirror descent does not require (explicit) gradient clipping or gradient normalization to guarantee convergence. In  Section \ref{sec:illustration},  we  will present an instance to illustrate the intuition behind this result.

The error at initialization $R_0$
in the above bound \eqref{eq:mainres} introduces dimension dependency to the rate. To make this explicit,  in the next corollary, we fix the domain as  $\S = \B_\infty(R)$,  where $\B_\infty(R)$ is the $\normsup{\cdot}$-ball with radius $R$, centered at the origin,  and use a specific uniformly convex function as the mirror map.

\begin{corollary}
\label{cor:main}
Let $U_{p}(x) \coloneqq K_{p}^{\frac{1}{\kappa}} \varphi^\star(x)$, where $K_p$ and $\varphi^\star$ are defined in~\eqref{eq:mainmap}, and $\S = \B_\infty(R)$. Under the conditions of Theorem \ref{th:main},
the following statements hold.
\begin{enumerate}[label=\roman*),font=\itshape]
\item For $q \in [1, 1+ \kappa]$, \eqref{eq:smd} with $\Psi\coloneqq U_{p}$\, for $p=1+\kappa$
satisfies
\begin{equation}
\E \Bigg[ f \Bigg( \frac{1}{T} \sum_{t = 0}^{T-1} x_t \Bigg) - \min_{x\in\S} f(x) \Bigg] \leq 10 \,  R \,\sigma\,  \Big( \frac{d}{T} \Big)^{\tfrac{\kappa}{1+\kappa}}.
\end{equation}
\item  For $q \in (1+ \kappa, \infty)$, \eqref{eq:smd} with $\Psi\coloneqq U_{p}$\, for $p=q$ satisfies
\begin{equation}
\E \Bigg[ f \Bigg( \frac{1}{T} \sum_{t = 0}^{T-1} x_t  \Bigg)
- \min_{x\in\S} f(x)\Bigg] \leq 10  \max \big\{1, \sqrt{q-1}\big\} R\, \sigma \, \frac{d^{1 - \frac{1}{q}}}{T^{\tfrac{\kappa}{1+\kappa}}}.
\end{equation}
\item For $q \in (\log d, \infty]$, \eqref{eq:smd} with $\Psi\coloneqq U_{p}$\, for $p=1+\log d$ satisfies
\begin{equation}
\E \Bigg[ f \Bigg( \frac{1}{T} \sum_{t = 0}^{T-1} x_t  \Bigg)
- \min_{x\in\S} f(x)\Bigg] \leq 10  \, R\, \sigma \sqrt{\log d} ~ \frac{d^{1 - \tfrac{1}{q}}}{T^{\tfrac{\kappa}{1+\kappa}}}.
\end{equation}
\end{enumerate}
\end{corollary}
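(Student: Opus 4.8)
The plan is to derive Corollary~\ref{cor:main} directly from Theorem~\ref{th:main} by instantiating the generic bound \eqref{eq:mainres} with the concrete mirror map $\Psi = U_p = K_p^{1/\kappa}\varphi^\star$ and carefully tracking the dependence of the initialization constant $R_0$ on $d$, $R$, $p$, and $q$. First I would verify that $U_p$ satisfies the hypothesis of Theorem~\ref{th:main}: by Proposition~\ref{th:uniconv2}, $\varphi^\star$ is $(K_p^{-1/\kappa}, \tfrac{1+\kappa}{\kappa})$-uniformly convex with respect to $p^\star$-norm, so scaling by $K_p^{1/\kappa}$ makes $U_p$ exactly $(1, \tfrac{1+\kappa}{\kappa})$-uniformly convex with respect to $p^\star$-norm. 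To match the theorem we then need $p^\star = q^\star$, i.e.\ $p = q$ — which is exactly the choice made in parts (ii) and (iii); in part (i), since $q \le 1+\kappa = p$ we have $q^\star \ge p^\star$, and on the bounded domain $\B_\infty(R)$ the $q^\star$-norm dominates the $p^\star$-norm (up to the dimensional factor that will be absorbed into $R_0$), so uniform convexity in $p^\star$-norm implies it in $q^\star$-norm with the same constant; I would state this monotonicity step explicitly.

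Next I would compute $R_0$. By definition $R_0^{(1+\kappa)/\kappa} = \tfrac{1+\kappa}{\kappa}\sup_{x\in\S}\{U_p(x) - U_p(x_0)\}$, and since $x_0 = \argmin_{x\in\S} U_p(x)$ and $U_p \ge 0$ with $U_p(0) = 0$ (so $U_p(x_0) = 0$ as $0 \in \B_\infty(R)$), this reduces to $R_0^{(1+\kappa)/\kappa} = \tfrac{1+\kappa}{\kappa} K_p^{1/\kappa} \sup_{x \in \B_\infty(R)} \varphi^\star(x) = K_p^{1/\kappa}\sup_{x\in\B_\infty(R)} \norm{x}_{p^\star}^{(1+\kappa)/\kappa}$. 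The supremum of $\norm{x}_{p^\star}$ over $\B_\infty(R)$ is $R\, d^{1/p^\star}$. Hence $R_0^{(1+\kappa)/\kappa} = K_p^{1/\kappa} (R\,d^{1/p^\star})^{(1+\kappa)/\kappa}$, i.e.\ $R_0 = K_p^{1/(1+\kappa)} R\, d^{1/p^\star}$. Plugging into \eqref{eq:mainres} gives the common form $10^{1/(1+\kappa)}\max\{1,(p-1)^{(1+\kappa)/2}\}^{1/(1+\kappa)} R\,\sigma\, d^{1/p^\star} T^{-\kappa/(1+\kappa)}$; bounding $10^{1/(1+\kappa)} \le 10$ and $\max\{1,(p-1)^{(1+\kappa)/2}\}^{1/(1+\kappa)} \le \max\{1,\sqrt{p-1}\}$ yields the prefactor $10\max\{1,\sqrt{p-1}\}\, R\,\sigma\, d^{1/p^\star}$.

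It then remains to substitute the three choices of $p$ and rewrite $d^{1/p^\star} = d^{1-1/p}$. For (ii), $p = q$ gives directly $10\max\{1,\sqrt{q-1}\}R\sigma\, d^{1-1/q} T^{-\kappa/(1+\kappa)}$. For (i), $p = 1+\kappa$ gives $d^{1-1/(1+\kappa)} = d^{\kappa/(1+\kappa)}$ and $\max\{1,\sqrt{\kappa}\} = 1$ since $\kappa \le 1$, yielding $10 R\sigma (d/T)^{\kappa/(1+\kappa)}$; here I must also confirm the extra factor from passing between $q^\star$- and $p^\star$-norms on $\B_\infty(R)$ is benign — concretely $\norm{x}_{q^\star} \le d^{1/q^\star - 1/p^\star}\norm{x}_{p^\star}$ is the wrong direction, so instead I would argue directly that on $\B_\infty(R)$ the chain $\varphi^\star$ is uniformly convex in $\ell_{q^\star}$ follows by noting the relevant inequality in Theorem~\ref{th:main}'s proof only uses $\norm{x_t - x_{t+1}}_{q^\star}$ bounded below, which holds since $\norm{\cdot}_{q^\star} \ge \norm{\cdot}_{p^\star}$ when $q^\star \ge p^\star$ — I will need to re-examine which direction the proof of Theorem~\ref{th:main} actually requires and adjust. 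For (iii), $p = 1+\log d$ gives $\sqrt{p-1} = \sqrt{\log d}$ and $d^{1-1/p} = d \cdot d^{-1/(1+\log d)} = d \cdot e^{-\log d/(1+\log d)} \le d \cdot e^{-1} \cdot e^{1/(1+\log d)}$; more simply $d^{1/(1+\log d)} = e^{\log d/(1+\log d)} \in [1, e)$, so $d^{1-1/p} \le d$, and absorbing the constant into the $10$ (or noting $d^{1-1/p} \le d^{1-1/q}$ when $q \ge 1+\log d \ge p$) gives $10 R\sigma\sqrt{\log d}\, d^{1-1/q} T^{-\kappa/(1+\kappa)}$. The main obstacle is the bookkeeping in part (i) — making precise exactly how the hypothesis of Theorem~\ref{th:main} (uniform convexity in $q^\star$-norm) is met by $U_{1+\kappa}$ when $q \ne 1+\kappa$, since $U_{1+\kappa}$ is naturally uniformly convex only in the $(1+\kappa)^\star$-norm; resolving this requires checking that the only place the norm enters the proof of Theorem~\ref{th:main} is through a lower bound on Bregman-type increments that is monotone in the right direction under $q^\star \ge (1+\kappa)^\star$, or alternatively re-deriving the bound for this specific map on the $\ell_\infty$-ball.
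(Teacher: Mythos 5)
Your overall strategy---instantiate Theorem~\ref{th:main} with $\Psi=U_p$ and track $R_0$---is the paper's, and your computation $R_0\le 10\max\{1,\sqrt{p-1}\}\,R\,d^{1-1/p}$ is correct, so part (ii) (where $p=q$) goes through exactly as you describe. The difficulty you flag---the mismatch between the oracle norm $q$ and the mirror-map norm $p$ in parts (i) and (iii)---is real, and your proposal does not resolve it. The paper's resolution is to transfer the \emph{moment bound} to the $p$-norm on the gradient side and then apply Theorem~\ref{th:main} with $p$ playing the role of $q$: in part (i), $q\le 1+\kappa$ gives $\|g\|_{1+\kappa}\le\|g\|_{q}$, so Assumption~\ref{as:sfo} holds with $q$ replaced by $1+\kappa$ and the same $\sigma$; in part (iii), $q>\log d$ gives $\|g\|_{1+\log d}\le d^{\frac{1}{1+\log d}-\frac{1}{q}}\|g\|_{q}$, so Assumption~\ref{as:sfo} holds with $q$ replaced by $1+\log d$ and $\sigma$ inflated to $d^{\frac{1}{1+\log d}-\frac{1}{q}}\sigma$; multiplying this inflated $\sigma$ by $R_0\le 10\sqrt{\log d}\,R\,d^{\frac{\log d}{1+\log d}}$ produces exactly the factor $d^{1-1/q}$ in the claim. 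This moment-transfer step is the ingredient missing from your write-up.

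For part (i) your alternative (transfer uniform convexity on the dual side) does work, but you have the norm inequality backwards and then leave the step unresolved: $q\le p$ gives $q^\star\ge p^\star$, hence $\|x\|_{q^\star}\le\|x\|_{p^\star}$ pointwise with no dimensional factor, so the lower bound $\frac{K}{r}\|x-y\|_{p^\star}^{r}$ in the definition of uniform convexity only decreases when $p^\star$ is replaced by $q^\star$; thus $U_{1+\kappa}$ is $(1,\frac{1+\kappa}{\kappa})$-uniformly convex w.r.t.\ the $q^\star$-norm and Theorem~\ref{th:main} applies verbatim---there is nothing in its proof to re-examine. In part (iii), however, this dual-side transfer fails whenever $q>1+\log d$ (in particular for $q=\infty$): there $q^\star<p^\star$, so $\|\cdot\|_{q^\star}\ge\|\cdot\|_{p^\star}$ and uniform convexity w.r.t.\ the $p^\star$-norm does \emph{not} imply it w.r.t.\ the $q^\star$-norm; one loses a factor $d^{-r(1/q^\star-1/p^\star)}$ with $r=\frac{1+\kappa}{\kappa}$ in the convexity constant, which is of order $e^{-(1+\kappa)/\kappa}$ and cannot be absorbed into the universal constant $10$ without a further rescaling argument that you do not supply. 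Your observation that $d^{1-1/p}\le d^{1-1/q}$ when $q\ge 1+\log d$ compares exponents of $d$ but does not justify invoking Theorem~\ref{th:main} at all, since with $\Psi=U_{1+\log d}$ the hypothesis ``uniformly convex w.r.t.\ $q^\star$-norm'' is not met for the original $q$ (and for $q\in(\log d,\,1+\log d)$ that inequality is false anyway). Adopting the paper's primal-side moment transfer closes the gap cleanly in both parts.
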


\begin{remark}
Note that part (ii) of Corollary~\ref{cor:main} also covers the case $q \in (\log d, \infty)$; however, the result in part (iii) provides a better convergence rate in terms of dimension dependence. Part (iii) includes the boundary case $q=\infty, q^\star=1$ at the expense of additional $\sqrt{\log d}$ factor in the rate.
\end{remark}

Corollary~\ref{cor:main} provides explicit rates for SMD in the infinite noise variance case ($\kappa<1$), with an explicit mirror map. It also recovers the known optimal rates in the finite variance case ($\kappa = 1$)~\cite{Nem1983,agarwal2009information,agarwal2012information}, in which case it reduces to the well-known $p$-norms algorithm~\cite{Gentile1999}. 

\subsection{Robustness of SMD under heavy-tailed noise}
\label{sec:illustration}

\begin{wrapfigure}{R}{0.45\textwidth}  
%\vspace{-6.6mm}
\centering 
\includegraphics[width=0.45\textwidth]{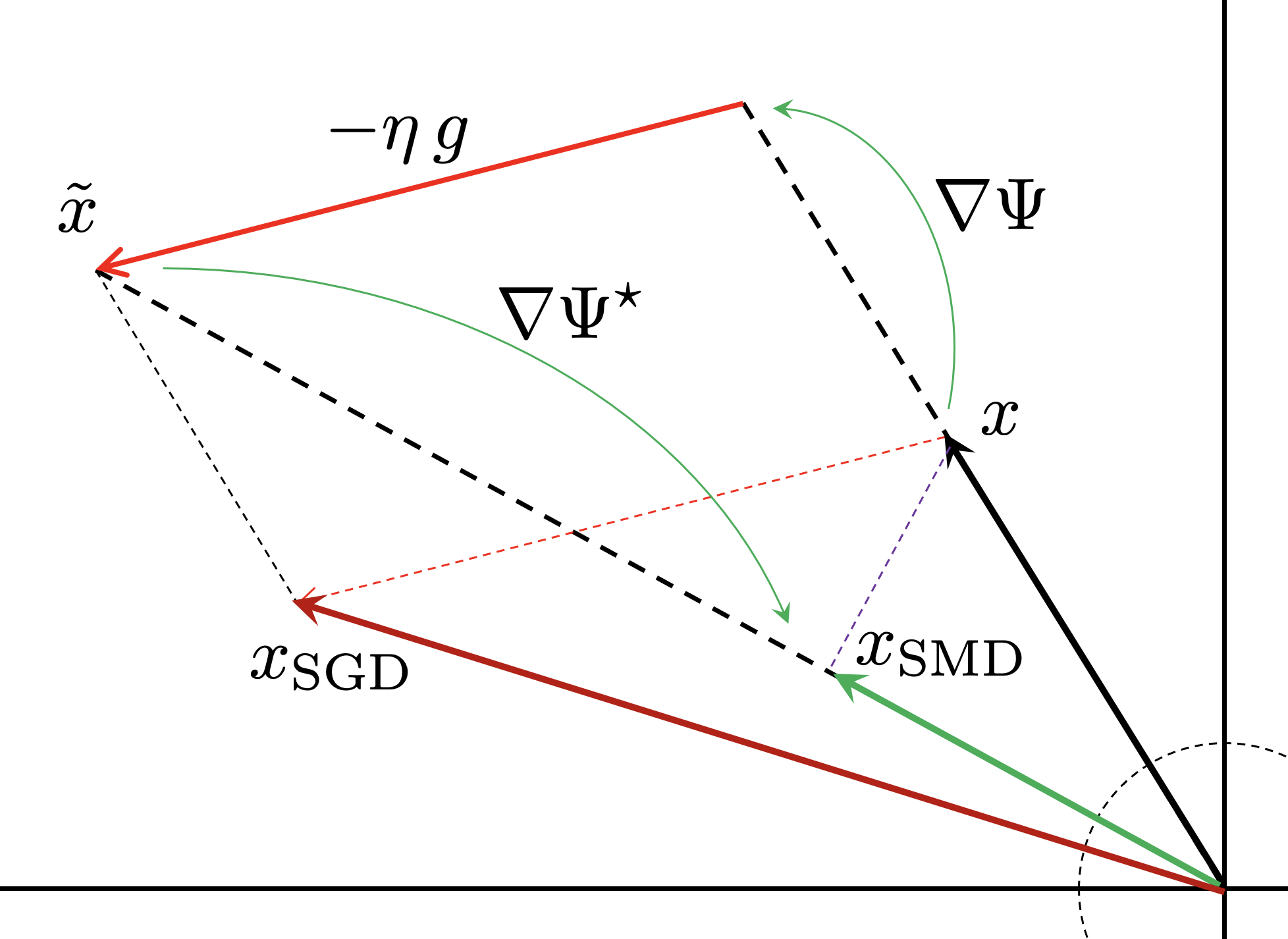}  
%\vspace{-4.mm} 
\captionsetup{width=0.45\textwidth}
\caption{\small Illustration of SMD and SGD updates under heavy-tailed noise.}
\label{fig:clip}    
\vspace{-15mm}
\end{wrapfigure}
We consider a particular instance of \eqref{eq:smd} update 
to provide additional intuition behind the result. 
Let $q = 2$, $\Psi = U_2$, where $U$ is defined in Corollary \ref{cor:main}. Denote the current
iterate by $x$. Based on the noisy gradient $g$ returned by the \texttt{SFO}, the \eqref{eq:smd} update (without projection) is given by

\begin{align}
\!\!\!\!\!\!\!\!
x_{\text{SMD}} &= \grad \Psi^\star \big(   \tilde x \big)\ \text{ for }\ \tilde x  = \grad \Psi(x) - \eta\, g\ \
\label{eq:mirrordyn}
\\
&= \frac{ x \norm{x}_2^{\tfrac{1}{\kappa} - 1} - \frac{\eta}{10^{1/\kappa}} g }{\Big \| x \norm{x}_2^{\tfrac{1}{\kappa} - 1} - \frac{\eta}{10^{1/\kappa}} g \Big\|^{1-\kappa}_2 }. \label{eq:closedformdyn}
\end{align}

For $q=2$, the primal and the dual spaces are both $(L^2(\R^d), \|\cdot\|_2)$, and Figure~\ref{fig:clip} shows the 
updates in the same space for simplicity, and to illustrate the
robustness of \eqref{eq:smd}
in comparison to \eqref{eq:sgd}. In the case where $g$ is large due to heavy-tailed noise, SGD update would be significantly impacted, whereas
SMD first amplifies the magnitude of the iterate $x$, 
then performs the noisy gradient update in the ``dual space'' to get $\tilde x$, and finally contracts the resulting value to $x_{\text{SMD}}$.  
This mechanism of SMD
is illustrated in \eqref{eq:closedformdyn}. 
The descent is performed in the dual space, and the inverse mirror map \emph{shrinks} vectors that are larger in magnitude more when mapping it back to the primal space. This provides an inherent regularization, preventing instabilities due to heavy-tailed noise. 

We formally prove in the next section that SMD remains optimal for the case $\kappa <1$ (i.e., even for the case when the stochastic gradients have infinite noise variance).

\section{Information-theoretic Lower Bounds}
\label{sec:lb}
In this section, we prove that the rates obtained in Theorem~\ref{th:main} and Corollary~\ref{cor:main} are minimax optimal in an information theoretical sense, up to constants and log(dimension) factors. To prove this result, we provide lower bounds on the convergence of any algorithm with access to an \texttt{SFO} satisfying Assumption~\ref{as:sfo}, by extending ideas of \cite{Nem1983} and \cite{agarwal2009information,agarwal2012information} to the infinite noise variance setting. We now give a formal definition of minimax complexity of optimization algorithms in the heavy-tailed setting.

%We extend the approaches by \cite{Nem1983} and \cite{agarwal2009information,agarwal2012information}
%to the infinite noise variance setting, and measure the computational cost based on the heavy tailed stochastic oracle model. We outline the setting as below.

For a convex and compact set $\S$, consider the function class~$\mathcal{H}_{cvx}$ consisting of all convex functions~$f:\S\to\R,$ that are $L$-Lipschitz with respect to ${q^\star}$-norm. That is, 
\begin{equation}
\label{eq:cvxclass}
\mathcal{H}_{cvx}(\mathcal{S},L,q^\star):=\{f:\S\to\R: f \text{ is convex and $L$-Lipschitz with respect to ${q^\star}$ norm}\}\,.
\end{equation}
Recall from Remark~\ref{rem:Lipschitz} that any oracle satisfying Assumption~\ref{as:sfo} must operate on an objective function $f \in \mathcal{H}_{cvx}(\mathcal{S},L,q^\star)$ with $L\leq \sigma$. Thus in our minimax bounds in the sequel, we will only consider such convex and Lipschitz functions.

 Recall that a \texttt{SFO}, which we denote as $\phi$, takes the current iterate $x_t$ and returns the noisy unbiased pair $(f_t, g_t)$ satisfying Assumption~\ref{as:sfo}. We denote by $\Phi(\kappa,q,\sigma)$, the class of all such \texttt{SFO}s with parameters $(\kappa,q,\sigma)$ appearing in Assumption~\ref{as:sfo}. Given an oracle $\phi\in\Phi(\kappa,q,\sigma)$,  let $\mathcal{M}_T$ represent the class of all optimization methods that query the oracle $\phi$ exactly $T$ times and return $\bar x_T\in\S$ as an estimate of the optimum $\argmin_{x\in\S}f(x)$ based on those queries. For any method $M_T\in\mathcal{M}_T$, consider the error in optimizing $f$ after $T$ iterations, 
\begin{equation}
\label{eq:fcn_er}
\epsilon({M}_T, f, \S,\phi) \coloneqq f(\bar x_T)-\min_{x\in\S}f(x)\,.
\end{equation}
Here, $\bar x_T$ should be seen as the output of the method $M_T$ after $T$ iterations, not necessarily the $T$-th iterate of the optimization method.
For example, Theorem~\ref{th:main} and Corollary~\ref{cor:main} provide upper bounds on the expected value of $\epsilon({M}_T, f, \S,\phi)$ for optimization method~$M_T$ corresponding to specific instances of~\eqref{eq:smd}, and $\bar x_T$ corresponds to the average of \eqref{eq:smd} iterates. 

To provide lower bounds on the best possible performance, uniformly over all functions $f \in \mathcal{H}_{cvx}$, of any optimization method $M_T \in \mathcal{M}_T$, we define the minimax error as
\eq{
\label{eq:minimax}
\epsilon_T^*(\mathcal{H}_{cvx},\S,\phi):= \inf_{M_T\in\mathcal{M}_T}\sup_{f\in\mathcal{H}_{cvx}}\E_\phi[\epsilon({M}_T, f, \S,\phi)]\,.
}
The following theorem characterizes the minimax oracle complexity of optimization over the function class~$\mathcal{H}_{cvx}$,
where the constraint set $\S$ is convex and contains $\B_\infty(R)$, the $\|\cdot\|_\infty$-ball of radius $R$ centered at the origin.

\bigskip
\begin{theorem}\label{thm:cvx_lb}
%Consider optimization over the bounded convex set $\mathcal{S}\supseteq \B_\infty(R)$.
Assume that $\mathcal{S}\supseteq \B_\infty(R)$. We have the following minimax lower bounds
%The minimax oracle complexity over the function class~$\mathcal{H}_{cvx}$ satisfies \todo{I don't like this wording, nor the previous one. Please feel free to change it}
\begin{enumerate}
\item \label{thm:cvx_lb1} For all $q\in[1,1+\kappa]$, we have 
\eqn{
\sup_{\phi\in \Phi(\kappa,q,\sigma)}\epsilon^*_T (\mathcal{H}_{cvx},\S,\phi)  \ge\,   C_1{R}L \Bigg(\frac{d}{T}\Bigg)^{\tfrac{\kappa}{1+\kappa}}\,.
}
\item  \label{thm:cvx_lb2} 
For all $q\in (1+\kappa,\infty]$, we have
\eqn{
\!\!\!\!\!\!\sup_{\phi\in \Phi(\kappa,q,\sigma)}\epsilon^*_T (\mathcal{H}_{cvx},\S,\phi)  \ge \, C_2RL\frac{d^{1-\frac{1}{q}}}{T^{\tfrac{\kappa}{1+\kappa}}}\,.
}
\end{enumerate}
Here, $C_1$ and $C_2$ are universal constants.
\end{theorem}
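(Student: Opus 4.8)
The plan is to follow the classical reduction from optimization to statistical hypothesis testing, adapting the construction of Agarwal et al. to accommodate heavy-tailed oracles with only $(1+\kappa)$-th moment control. First I would set up a finite family of ``hard'' objective functions indexed by the vertices of a hypercube. For a parameter $\delta \in \{-1,+1\}^d$, I would define piecewise-linear convex functions of the form $f_\delta(x) = \frac{c}{d}\sum_{j=1}^d \big(\tfrac12 + \delta_j \alpha\big)|x_j - R| + (\text{symmetric term})$, or more simply a separable sum $f_\delta(x) = \sum_j f_{\delta_j}(x_j)$ where each coordinate contributes a ``tent''-type function whose minimizer sits near $+R$ or $-R$ depending on the sign of $\delta_j$; the scaling constant $c$ is chosen so that each $f_\delta$ is $L$-Lipschitz with respect to $\|\cdot\|_{q^\star}$. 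The key quantitative point is that if an algorithm cannot identify the sign pattern $\delta$, it incurs optimization error proportional to $R L$ times (a dimension-dependent factor coming from the $\ell_{q^\star}$ geometry) times (the per-coordinate ``gap'' $\alpha$).

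Next I would construct the stochastic oracle. Given a query point $x_t$, the oracle returns a stochastic (sub)gradient whose sign in each coordinate is a biased random sign — $+1$ with probability $\tfrac12 + \delta_j\alpha$ — but crucially rescaled so that the oracle dumps almost all its ``mass'' into a rare large-magnitude event: with small probability $p$ the returned gradient has magnitude $\sim p^{-1/(1+\kappa)}$ and otherwise is small. Choosing $p \asymp T^{-1}$ or so, the $(1+\kappa)$-th moment stays bounded by $\sigma^{1+\kappa}$ precisely because $p \cdot (p^{-1/(1+\kappa)})^{1+\kappa} = 1$, while the variance blows up for $\kappa < 1$ — this is exactly where the heavy-tailed budget enters and where the exponent $\kappa/(1+\kappa)$ in the rate is generated. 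Over $T$ queries the algorithm sees an informative sample only about $pT$ times, so effectively it has $\asymp pT$ ``clean'' Bernoulli observations per coordinate. I would then invoke a standard information-theoretic testing bound (Fano's inequality, or Assouad's lemma applied coordinatewise, which is cleaner here since the construction is separable): the probability of misidentifying a given coordinate's sign is bounded below by a constant as long as the KL divergence between the $\delta_j = +1$ and $\delta_j = -1$ oracle laws, accumulated over $T$ rounds, is $O(1)$ — i.e. as long as $T \cdot p \cdot \alpha^2 \lesssim 1$, giving $\alpha \asymp (pT)^{-1/2} \asymp T^{-1/2} \cdot p^{-1/2}$; plugging $p^{1/(1+\kappa)} \asymp$ the magnitude normalization and optimizing the free parameter $p$ against the moment constraint yields the per-coordinate gap $\alpha \asymp (d/T)^{\kappa/(1+\kappa)}$ type scaling for part~1 and the $d^{1-1/q}/T^{\kappa/(1+\kappa)}$ scaling for part~2, the difference between the two cases arising from how the Lipschitz normalization in $\|\cdot\|_{q^\star}$ distributes the budget $L$ across the $d$ coordinates (for $q \le 1+\kappa$, i.e. $q^\star$ large, the $\ell_{q^\star}$ ball is ``fat'' and all coordinates can be made simultaneously hard; for $q > 1+\kappa$ one pays a $d^{-1/q}$ normalization factor).

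Assembling the pieces: by Assouad, $\epsilon^*_T \gtrsim RL \cdot \alpha \cdot (\text{number of coordinates, or }d^{1-1/q}) \cdot \min_j \Pr[\text{wrong sign on coord }j]$, and with $\alpha$ chosen at the information-theoretic threshold the last factor is a universal constant, delivering the two claimed bounds with universal $C_1, C_2$. The main obstacle I anticipate is the oracle design: one must simultaneously (a) keep the conditional $(1+\kappa)$-th moment bounded by $\sigma^{1+\kappa}$ uniformly over all query points and all $\delta$, (b) keep the oracle output an unbiased subgradient of $f_\delta$ at $x_t$ (not just sign-correct in expectation — the magnitudes have to average out correctly), and (c) arrange that the per-round information content is $\Theta(p\alpha^2)$ rather than $\Theta(\alpha^2)$, which is the heavy-tail-specific gain over the finite-variance construction. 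Getting the KL computation between the heavy-tailed oracle laws to actually scale like $p\alpha^2$ (rather than having the rare large-magnitude branch contribute spuriously) is the delicate calculation; the separable structure of $f_\delta$ and a careful two-branch ($\pm$ large with prob $p/2$ each, small otherwise) design should make this tractable, and the $q \lessgtr 1+\kappa$ dichotomy is then just bookkeeping on the Lipschitz normalization. I would also need to confirm the lower bound degrades gracefully to the known finite-variance rates at $\kappa = 1$, which serves as a useful sanity check on the constants.
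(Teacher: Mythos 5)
Your proposal follows essentially the same route as the paper's proof: a hypercube-indexed family of piecewise-linear tent functions, an oracle that returns a rare gradient of magnitude $\asymp p^{-1/(1+\kappa)}$ with probability $\asymp p$ so that the $(1+\kappa)$-th moment (but not the variance) is controlled, a reduction from optimization error to identifying the sign pattern, and a Fano/Assouad-type testing bound with the KL budget $\asymp pT$, with the $q\lessgtr 1+\kappa$ dichotomy handled exactly as you describe through the $\ell_{q^\star}$ Lipschitz normalization. The paper uses Fano with a Varshamov--Gilbert packing rather than coordinatewise Assouad, and in the $q\le 1+\kappa$ case reveals one uniformly sampled coordinate per round while in the $q>1+\kappa$ case reveals all coordinates with probability $1/T$, but these are implementation details of the same argument.
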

\begin{remark}
The above minimax lower bounds match the rate estimates in Corollary~\ref{cor:main} up to constants for $q\in[1,\infty)$, and an additional $\sqrt{\log d}$ factor for the case of $q=\infty$, proving the optimality of the stochastic mirror descent in the infinite noise variance setting.
\end{remark}

In contrast to existing lower bounds on the oracle complexity in stochastic convex optimization~\cite{agarwal2009information,agarwal2012information,ramdas2013optimal,iouditski2014primal},
Theorem~\ref{thm:cvx_lb} covers a wider range of stochastic first-order oracles. It extends existing minimax lower bounds
to the heavy-tailed noise setting with $\kappa < 1$. Our results recover the information-theoretic lower bound in the classical finite variance setting~\cite[Theorem 1]{agarwal2009information, agarwal2012information}. For the fixed dimension, those bounds can also be linked to limits of results in \cite{ramdas2013optimal,iouditski2014primal} who establish the optimal rate $\Omega\Big(T^{-\tfrac{\rho}{2(\rho-1)}}\Big)$ for $\rho$-uniformly convex functions under finite noise variance. Letting $\rho\to\infty$, which corresponds to convex functions, yields the convergence rate $\Omega(T^{-1/2})$, which is recovered by our results with $\kappa = 1$. Moreover, our lower bounds provide sharp dimension dependence. This extends the findings in \cite[Theorem 3]{raginsky2009information}
and~\cite[Section~5.3.1]{Nem1983} who proved a rate of the form~$\Omega\Big(T^{-\tfrac{\kappa}{1+\kappa}}\Big)$ for the first-order stochastic convex optimization under the heavy-tailed noise setting while treating the dimension $d$ as a constant.

%Moreover, we note that the result of the lower bounds generalizes the work of~\cite{agarwal2009information,agarwal2012information} which measure the computational cost based on the first-order oracle model of optimization under finite gradient noise variance.

The proof strategy of the above oracle complexity lower bound involves a standard reduction from stochastic optimization to a hypothesis testing problem problem. %\todo{Should it be hypothesis testing problem ?}
% , and the application of the Fano types of  inequality for the estimation problem. 
% More specifically,  we first construct a subclass of functions parameterized by a subset of the vertices of a $d$-dimensional hypercube with finite cardinality. 
% We then construct a stochastic oracle based on Bernoulli random variables, each of which corresponds to the parameters of the constructed function in the previous step.
% Next, we convert the parameter estimation to the stochastic optimization problem by showing that optimizing any function in this subclass to certain tolerance requires identifying the hypercube vertices.
% This can be treated as a multiway hypothesis test based on observations provided by querying the stochastic oracle $T$ rounds.
% Finally, we employ Fano types of inequality to lower bound the probability of misspecification error.
Similar arguments appeared in earlier works~\cite{agarwal2009information,agarwal2012information,raginsky2009information,zhang2019adaptive}. However, those works either considered finite-variance noise or treated the dimension as fixed. Covering heavy-tailed stochastic gradient noise and providing explicit dimension dependence requires a more delicate construction of the function class and the first-order oracles, which may be of independent interest. We refer to Appendix~\ref{sec:applb} for the details.

\section{Discussion}
\label{sec:discussion}
In this work, we showed that stochastic mirror descent, with a particular choice of mirror map, achieves the information-theoretically optimal rates for stochastic convex optimization when the stochastic gradient has finite $(1+\kappa)$-th moment, for $\kappa \in (0,1]$. To do so, on the \emph{algorithmic side} we showed that our choice of mirror-map has an inherent regularization property to prevent instabilities that might occur due to heavy-tailed noise in the stochastic gradient. On the \emph{information-theoretic} side, we provided minimax lower bounds that match the upper bound achieved by the stochastic mirror descent algorithm that we analyze. Our work opens up several interesting directions:
\begin{enumerate}[leftmargin=0.3in]
\setlength\itemsep{0.05em}
    \item The current choice of our step-size parameter requires knowledge of the noise level and $\kappa$ (this is true for all optimization methods that deal with the heavy-tailed noise setting). It is extremely interesting and practically relevant to develop adaptive procedures that achieve optimal rates without knowledge of the problem parameters. 
    \item While our current results are in expectation, establishing results that hold with high-probability in the infinite-noise variance setting would provide an interesting complement to our results.
    \item Developing distributional convergence results for the iterates of~\eqref{eq:smd}, along with related statistical inferential procedures is important for uncertainty quantification.
    \item Finally, examining the performance of~\eqref{eq:smd} in the non-convex setting with infinite-noise variance, is interesting both theoretically and practically.
\end{enumerate}
\bigskip

\section*{Acknowledgements}
KB was supported by a seed grant from Center for Data Science and Artificial Intelligence Research,  UC Davis and NSF Grant DMS-2053918. MAE was supported by NSERC Grant [2019-06167],
CIFAR AI Chairs program, and CIFAR AI Catalyst grant. SV was supported by NSERC Grant [2017-06622] and Connaught New Researcher Award. 

\bigskip
\bibliographystyle{alpha}
\bibliography{bib}
\appendix
\section{Proofs for Section \ref{sec:prelim}}
\label{sec:appprelim}

\subsection{Proof of  Proposition \ref{th:uniconv}}

\begin{proof}[{Proof of  Proposition \ref{th:uniconv}}]
($\Rightarrow$) Since $\psi: \rd \to \R$ is uniformly convex and differentiable,  by Proposition \ref{th:differentiable},  $\psi^\star$ is differentiable.  Moreover, since $\psi$ is continuous and convex,  by \cite[Corollary 23.5.1.]{Rock1970},  $(\grad \psi^\star)^{-1} = (\grad \psi)$.  Let $y_1, y_2 \in \rd$ be two arbitrary vectors, and let  $\grad \psi^\star(y_1) = x_1$.  Then,  we have $\grad \psi(x_1) = y_1$ and by \cite[Theorem 23.5]{Rock1970}
\begin{equation}
\psi(x_1) + \psi^\star(y_1)= \inner{x_1}{y_1}. \label{eq:invyoung}
\end{equation}
We can write that
\begin{align}
&  \psi^\star(y_2) = \sup_{x\in\rd} \bbrace{  \inner{y_2}{x} - \psi(x) }  \\
& \leq \sup_{x\in\rd} \bbbrace{  \!  \inner{y_2}{x} - \Big( \psi(x_1) + \inner{\grad \psi(x_1)}{x - x_1} + \frac{K}{r} \normp{x-x_1}^r  \Big)  \! }~ \textrm{\footnotesize{(by the uniform convexity of $\psi$)}} \\
& =   \sup_{x\in\rd} \bbbrace{ \inner{y_2 - y_1}{ x - x_1} - \frac{K}{r} \normp{x-x_1}^r} - \psi(x_1) +\inner{y_2}{x_1}~\textrm{\footnotesize{(since $\grad \psi(x_1) = y_1$)}} \\
& =   \sup_{x\in\rd} \bbbrace{\inner{y_2 - y_1}{ x - x_1} - \frac{K}{r} \normp{x-x_1}^r}  +  \psi^\star(y_1) +\inner{\grad \psi^\star(y_1)}{y_2 - y_1} \label{eq:ll1} \\ 
& =  \sup_{x\in\rd} \bbbrace{\inner{y_2 - y_1}{K^{- \frac{1}{r-1}} x } - \frac{K}{r} \normp{K^{- \frac{1}{r-1}} x}^r}  + \psi^\star(y_1) + \inner{\grad \psi^\star(y_1)}{y_2 - y_1}  \\
& = \psi^\star(y_1) +\inner{\grad \psi^\star(y_1)}{y_2 - y_1}  + K^{- \frac{1}{r-1}} \frac{r-1}{r} \normps{y_2 - y_1}^{\frac{r}{r-1}}, \textrm{ \footnotesize (by Proposition \ref{prop:conjnorm}) }
\end{align}
where we use $\grad \psi^\star(y_1) = x_1$ and \eqref{eq:invyoung} to obtain \eqref{eq:ll1}. Then, for arbitrary $y_1$ and $y_2 \in \rd$, we have 
\begin{equation}
\psi^\star(y_2) \leq  \psi^\star(y_1) +\inner{\grad \psi^\star(y_1)}{y_2 - y_1}  + K^{- \frac{1}{r-1}} \frac{r-1}{r} \normps{y_2 - y_1}^{\frac{r}{r-1}}. \label{eq:uniconvineq}
\end{equation}
Therefore,  $\psi^\star$ is $(K^{- \frac{1}{r-1}}, \frac{r}{r-1})$-H\"{o}lder smooth with respect to $p^\star$-norm.
\bigskip

\noindent ($\Leftarrow$) Since $\psi$ is continuous and convex,  by~\cite[Theorem 12.2]{Rock1970},  we have
\begin{equation}
\psi(x) =   \sup_{y\in\rd} \bbrace{ \inner{x}{y} - \psi^\star(y)}.
\end{equation}
Let $x_1,  x_2 \in \rd$ be two arbitrary vectors, and let  $\grad \psi(x_1) = y_1$.  Then,  we have $\grad \psi^\star(y_1) = x_1$,  and \eqref{eq:invyoung}.  Let $\overline{K} = K^{- \frac{1}{r-1}}$.  Then,
\eq{
&\psi(x_2)  = \sup_{y \in\rd} \bbrace{\inner{x_2}{y} - \psi^\star(y)} \\
 \geq&  \sup_{y \in\rd} \bbbrace{ \inner{x_2}{y} - \Big( \psi^\star(y_1) + \inner{\grad \psi^\star(y_1)}{y - y_1} +  \overline{K} \frac{r-1}{r}  \normps{y - y_1}^{\frac{r}{r-1}} \Big) } \\
 =&  \sup_{y \in\rd} \bbbrace{ \inner{x_2 - x_1}{y - y_1} -  \overline{K} \frac{r-1}{r}  \normps{y - y_1}^{\frac{r}{r-1}} } - \psi^\star(y_1) +\inner{x_2}{y_1}  ~~\textrm{\footnotesize{(since $\grad \psi^\star(y_1) = x_1$)}} \\
 =&   \sup_{y \in\rd} \bbbrace{ \inner{x_2 - x_1}{ y - y_1}  -  \overline{K} \frac{r-1}{r}  \normps{y - y_1}^{\frac{r}{r-1}}}  +  \psi(x_1) + \inner{\grad \psi(x_1)}{x_2 - x_1} \label{eq:ll2} \\ 
 =&  \sup_{y \in\rd} \bbbrace{ \inner{x_2 - x_1}{ \overline{K}^{-(r-1)} y }  -   \overline{K} \frac{r-1}{r}  \normps{  \overline{K}^{-(r-1)} y}^{\frac{r}{r-1}} }  +  \psi(x_1) + \inner{\grad \psi(x_1)}{x_2 - x_1}  \\
 =& \psi(x_1) +\inner{\grad \psi(x_1)}{x_2 - x_1}  +   \frac{\overline{K}^{-(r-1)}}{r} \normp{x_2 - x_1}^{r} \textrm{ \footnotesize (by Proposition \ref{prop:conjnorm}) } \\
 =&  \psi(x_1) +\inner{\grad \psi(x_1)}{x_2 - x_1}  +  \frac{K}{r} \normp{x_2 - x_1}^{r},
}
where we use  $\grad \psi(x_1) = y_1$ and \eqref{eq:invyoung} in \eqref{eq:ll2}. Therefore,  $\psi$ is $(K, r)$-uniformly convex with respect to $p$-norm.
\end{proof}

\subsection{Proof of  Proposition \ref{th:uniconv2}}

In this part,  we use the following notation.
\begin{itemize}[leftmargin =*]
\item For $x = (x_1, \cdots, x_d)^T \in \rd$ and $p >1$,  we let 
\begin{equation}
\vpow{x}{p} \coloneqq ( \text{sgn}(x_1) \abs{x_1}^{p-1}, \cdots,   \text{sgn}(x_d) \abs{x_d}^{p-1} )^T,
\end{equation}
where for $t \in \R$,
\eq{
\text{sgn}(t)\coloneqq \begin{cases}
1, & \textrm{if } t > 0 \\
0, & \textrm{if } t = 0 \\
-1, & \textrm{if } t < 0.
\end{cases}
}

\item We note that $\normp{x}^r$,  $x \in \rd$, is  continuously differentiable for all $p, r > 1$, with a gradient of 
\begin{equation}
\grad \normp{x}^r = \begin{cases}
r  \normp{x}^{r - p} \vpow{x}{p},    &\mbox{if } x \neq 0 \\
0,                   &\mbox{if } x = 0. 
\end{cases} \label{eq:gradphi}
\end{equation}

In the following, for the sake of convenience, we use an abuse of notation, $\normp{0}^{r - p} \vpow{0}{p} \coloneqq 0$, for any $p,  r > 1$.
 %\label{eq:abuseofnot}
\end{itemize}

\subsubsection{Auxiliary Results}

We start with proving some auxiliary results. Let $x \in \rd - \bbrace{0}$ and $y \in \rd$ be two arbitrary vectors.  We let $h \in \rd$ be
\begin{equation}
h \coloneqq \frac{\inner{\xp}{y}}{\normp{x}^p} x,  \textrm{ where } p \in [1+\kappa, \infty).
\end{equation}

\begin{proposition}
\label{prop:innerh}
For $p \in [1 + \kappa, \infty)$, we have 
\begin{enumerate}[label=\roman*),font=\itshape]
\item $\inner{\xp}{y} = \inner{\xp}{h}$
\item $\normp{h} \leq \normp{y}$.
\end{enumerate}
\end{proposition}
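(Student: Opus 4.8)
The plan is to prove the two claims about the vector $h = \frac{\inner{\xp}{y}}{\normp{x}^p}\, x$ essentially by direct computation, exploiting the fact that $h$ is a scalar multiple of $x$. Write $\lambda \coloneqq \frac{\inner{\xp}{y}}{\normp{x}^p}$, so that $h = \lambda x$.

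For part (i), I would simply compute $\inner{\xp}{h} = \lambda \inner{\xp}{x}$ and observe that $\inner{\xp}{x} = \sum_j \sign(x_j)\abs{x_j}^{p-1} x_j = \sum_j \abs{x_j}^p = \normp{x}^p$. Hence $\inner{\xp}{h} = \lambda \normp{x}^p = \inner{\xp}{y}$, which is exactly the claimed identity. This step is routine.

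For part (ii), since $h = \lambda x$ we have $\normp{h} = \abs{\lambda}\,\normp{x} = \frac{\abs{\inner{\xp}{y}}}{\normp{x}^{p-1}}$. So the claim $\normp{h} \le \normp{y}$ is equivalent to $\abs{\inner{\xp}{y}} \le \normp{x}^{p-1}\normp{y}$. This is precisely Hölder's inequality: $\xp$ has entries $\sign(x_j)\abs{x_j}^{p-1}$, so $\norm{\xp}_{p^\star} = \big(\sum_j \abs{x_j}^{(p-1)p^\star}\big)^{1/p^\star} = \big(\sum_j \abs{x_j}^p\big)^{(p-1)/p} = \normp{x}^{p-1}$, using $(p-1)p^\star = p$ since $\tfrac1p + \tfrac1{p^\star} = 1$. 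Then $\abs{\inner{\xp}{y}} \le \norm{\xp}_{p^\star}\normp{y} = \normp{x}^{p-1}\normp{y}$ gives the result.

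There is no real obstacle here; the only mild subtlety is bookkeeping the conjugate-exponent identity $(p-1)p^\star = p$ and making sure the degenerate normalization convention (for $x \neq 0$ we are fine, and the $p=1+\kappa$ boundary is harmless since $p>1$). I would present it as a short two-paragraph argument: one line for (i), and Hölder plus the exponent computation for (ii).

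\begin{proof}[Proof of Proposition~\ref{prop:innerh}]
Write $\lambda \coloneqq \frac{\inner{\xp}{y}}{\normp{x}^p}$, so that $h = \lambda x$ by definition. Note first that
\[
\inner{\xp}{x} = \sum_{j=1}^d \sign(x_j)\abs{x_j}^{p-1} x_j = \sum_{j=1}^d \abs{x_j}^p = \normp{x}^p.
\]

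\emph{Proof of (i).} Using $h = \lambda x$ and the identity above,
\[
\inner{\xp}{h} = \lambda \inner{\xp}{x} = \frac{\inner{\xp}{y}}{\normp{x}^p}\,\normp{x}^p = \inner{\xp}{y}.
\]

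\emph{Proof of (ii).} Let $p^\star$ satisfy $\tfrac1p + \tfrac1{p^\star} = 1$; then $(p-1)p^\star = p$. Since the $j$-th coordinate of $\xp$ is $\sign(x_j)\abs{x_j}^{p-1}$, we have
\[
\norm{\xp}_{p^\star} = \Big( \sum_{j=1}^d \abs{x_j}^{(p-1)p^\star} \Big)^{1/p^\star} = \Big( \sum_{j=1}^d \abs{x_j}^{p} \Big)^{(p-1)/p} = \normp{x}^{p-1}.
\]
By H\"older's inequality, $\abs{\inner{\xp}{y}} \le \norm{\xp}_{p^\star}\normp{y} = \normp{x}^{p-1}\normp{y}$. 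Therefore, using $h = \lambda x$,
\[
\normp{h} = \abs{\lambda}\,\normp{x} = \frac{\abs{\inner{\xp}{y}}}{\normp{x}^{p-1}} \le \normp{y},
\]
which proves the claim.
\end{proof}
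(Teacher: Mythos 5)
Your proof is correct and follows essentially the same route as the paper's: part (i) via the identity $\inner{\xp}{x} = \normp{x}^p$, and part (ii) via H\"older's inequality with the conjugate-exponent computation $\norm{\xp}_{p^\star} = \normp{x}^{p-1}$. No gaps.
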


\begin{proof}
\begin{enumerate}[label=(\roman*),font=\itshape]
\item Note that
\begin{align}
\inner{\xp}{x} = \sum_{i = 1}^d \abs{x_i}^{p-1} \text{sgn}(x_i) x_i &=  \sum_{i = 1}^d \abs{x_i}^{p-1} \text{sgn}(x_i) \abs{x_i}  \text{sgn}(x_i) \\
& =  \sum_{i = 1}^d \abs{x_i}^{p} = \normp{x}^p.
\end{align}
Hence, 
\begin{align}
\inner{\xp}{h} & = \frac{\inner{\xp}{y}}{\normp{x}^p}  \inner{\xp}{x} =  \inner{\xp}{y}.
\end{align}
\item Note that $\normp{h} =   \frac{\abs{\inner{\xp}{y}}}{\normp{x}^{p-1}}$.  By using H\"{o}lder's inequality,  we can write that
\begin{align}
\abs{\inner{\xp}{y}}  \leq \Big( \sum_{i = 1}^d  ( \abs{x_i}^{p-1} )^{\frac{p}{p-1}} \Big)^{\frac{p-1}{p}} \Big( \sum_{i = 1}^d  \abs{y_i}^p \Big)^{\frac{1}{p}} = \normp{x}^{p-1} \normp{y}.
\end{align}
Hence,
\begin{equation}
\normp{h} =   \frac{\abs{\inner{\xp}{y}}}{\normp{x}^{p1}} \leq \frac{\normp{x}^{p-1} \normp{y}}{\normp{x}^{p-1}} = \normp{y}.
\end{equation}
\end{enumerate}
\end{proof}

\begin{proposition}
\label{prop:orthogonal}
Let $\widetilde \varphi(x) = \frac{1}{r} \normp{x}^r$, where $p,  r \in [ 1 +\kappa, \infty)$. Then, we have
\begin{equation}
\inner{\grad \widetilde \varphi(x+h)}{y-h} = 0.
\end{equation}
\end{proposition}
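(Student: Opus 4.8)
The plan is to exploit that $h$ is, by construction, a scalar multiple of $x$. Writing $c \coloneqq \inner{\xp}{y}/\normp{x}^p$ (well-defined since $x\neq 0$), we have $h = cx$ and hence $x+h = (1+c)x$. Consequently $\grad\widetilde\varphi(x+h)$ will turn out to be parallel to $\xp$, and the asserted orthogonality will collapse to part (i) of Proposition~\ref{prop:innerh}.

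First I would dispose of the degenerate case $1+c = 0$: then $x+h = 0$, and by the convention $\normp{0}^{r-p}\vpow{0}{p}\coloneqq 0$ adopted above together with \eqref{eq:gradphi}, we get $\grad\widetilde\varphi(x+h) = 0$, so $\inner{\grad\widetilde\varphi(x+h)}{y-h} = 0$ trivially. For $1+c\neq 0$, I would apply \eqref{eq:gradphi} to $\widetilde\varphi = \frac1r\normp{\cdot}^r$, obtaining $\grad\widetilde\varphi(x+h) = \normp{x+h}^{\,r-p}\,\vpow{(x+h)}{p}$, and then evaluate the componentwise signed-power map on the scalar multiple $x+h = (1+c)x$: coordinatewise, $\sign\big((1+c)x_i\big)\,\abs{(1+c)x_i}^{p-1} = \sign(1+c)\,\abs{1+c}^{p-1}\,\sign(x_i)\,\abs{x_i}^{p-1}$, whence $\vpow{(x+h)}{p} = \sign(1+c)\,\abs{1+c}^{p-1}\,\xp$. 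Therefore $\grad\widetilde\varphi(x+h) = \lambda\,\xp$ with the scalar $\lambda \coloneqq \sign(1+c)\,\abs{1+c}^{p-1}\,\normp{x+h}^{\,r-p}$.

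Finally I would conclude $\inner{\grad\widetilde\varphi(x+h)}{y-h} = \lambda\big(\inner{\xp}{y} - \inner{\xp}{h}\big) = 0$, the last equality being precisely Proposition~\ref{prop:innerh}(i). There is no genuine obstacle here: the only point requiring care is the bookkeeping with the componentwise signed-power notation $\vpow{\cdot}{p}$ under scalar multiplication, together with the $x = 0$ / $x+h = 0$ conventions built into \eqref{eq:gradphi}; the rest is a one-line computation.
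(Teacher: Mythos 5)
Your proposal is correct and follows essentially the same route as the paper's proof: both observe that $x+h$ is a scalar multiple of $x$ (your $1+c$ is the paper's $a$), compute $\vpow{(x+h)}{p} = \sign(1+c)\,\abs{1+c}^{p-1}\,\xp$, and reduce the claim to Proposition~\ref{prop:innerh}(i). Your explicit handling of the degenerate case $x+h=0$ via the convention in \eqref{eq:gradphi} matches the paper's remark that the statement is trivially true there.
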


\begin{proof}
Note that if $x + h = 0$,  the statement is trivially correct. Therefore, without loss of generality, we assume that $x+h \neq 0$. 

By \eqref{eq:gradphi}, we have 
\begin{equation}
\inner{\grad \widetilde \varphi(x+h)}{y-h} =  \normp{x+h}^{r-p} \inner{\vpow{(x+h)}{p}}{y-h}.
\end{equation}
Note that 
\begin{align}
\vpow{(x+h)}{p} &= \vpow{  \bigg(  \Big( \underbrace{ 1 + \frac{\inner{\xp}{y}}{\normp{x}^p} }_{\coloneqq a} \Big)  x \bigg) }{p}  \\
&= \vpow{(ax)}{p} \\
&= ( \abs{ax}^{p-1} \text{sgn}(ax_1), \cdots, \abs{ax_d}^{p-1} \text{sgn}(ax_d) )^T \\
&= \abs{a}^{p-1} \text{sgn}(a)  ( \text{sgn}(x_1) \abs{x_1}^{q-1}, \cdots,   \text{sgn}(x_d) \abs{x_d}^{q-1} )^T \\
&= \abs{a}^{p-1} \text{sgn}(a)   \xp.
\end{align}
Then, 
\begin{align}
\inner{\grad \widetilde \varphi(x+h)}{y-h} & =  \normp{\underbrace{x+h}_{= ax}}^{r-p} \inner{ \underbrace{\vpow{(x+h)}{p}}_{ = \abs{a}^{p-1} \text{sgn}(a)   \xp}}{y-h} \\
& =  \abs{a}^{r - p} \normp{x}^{r - p}  \abs{a}^{p-1} \text{sgn}(a) \inner{\xp}{y-h}  = 0 \textrm{ \footnotesize (by Proposition \ref{prop:innerh})}.
\end{align}
\end{proof}

\begin{proposition}
\label{prop:smoothinh}
For any $p  \in [1+\kappa, \infty)$, we have
\begin{equation}
\normp{x + h}^{1+\kappa} - \normp{x}^{1+\kappa} - (1+\kappa) \normp{x}^{1+\kappa-p} \inner{\xp}{h}  \leq 2 \normp{h}^{1+\kappa}.
\end{equation}
\end{proposition}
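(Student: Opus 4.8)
The plan is to exploit the fact that, by definition, $h$ is a scalar multiple of $x$. Write $b := \frac{\inner{\xp}{y}}{\normp{x}^p}\in\R$ (well-defined since $x\neq 0$), so that $h = bx$ and $x+h=(1+b)x$. Using the identity $\inner{\xp}{x}=\normp{x}^p$ established in the proof of Proposition~\ref{prop:innerh}, we obtain
\[
\inner{\xp}{h}=b\,\normp{x}^p,\qquad \normp{x+h}=|1+b|\,\normp{x},\qquad \normp{h}=|b|\,\normp{x}.
\]
Substituting these into the asserted inequality and dividing by $\normp{x}^{1+\kappa}>0$ reduces Proposition~\ref{prop:smoothinh} to the one-dimensional inequality
\[
|1+b|^{1+\kappa}-1-(1+\kappa)\,b\;\le\;2\,|b|^{1+\kappa}\qquad\text{for all }b\in\R,\ \kappa\in(0,1].
\]

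Proving this scalar inequality is the main task, and the step I expect to be the real obstacle: a Taylor expansion of $|1+b|^{1+\kappa}$ about $b=0$ controls only $|b|\le 1$ (there the quadratic remainder $\tfrac{(1+\kappa)\kappa}{2}b^2$ is dominated by $2|b|^{1+\kappa}$), so large $|b|$ needs a separate argument. I would therefore argue by monotonicity on three intervals. Set $r:=1+\kappa\in(1,2]$ and $F(b):=|1+b|^{r}-1-rb-2|b|^{r}$, so $F$ is continuous with $F(0)=0$. Two elementary facts (both from $\kappa\le 1$) are used repeatedly: $t^{\kappa}\ge t$ for $t\in[0,1]$, and subadditivity $(u+v)^{\kappa}\le u^{\kappa}+v^{\kappa}$ for $u,v\ge 0$.
\begin{itemize}
\item For $b\ge 0$: $F'(b)=r\big[(1+b)^{\kappa}-1-2b^{\kappa}\big]\le r\big[b^{\kappa}-2b^{\kappa}\big]=-rb^{\kappa}\le 0$ by subadditivity, so $F(b)\le F(0)=0$.
\item For $-1\le b\le 0$, put $s=-b\in[0,1]$: $\tfrac{d}{ds}F(-s)=r\big[1-(1-s)^{\kappa}-2s^{\kappa}\big]\le r\big[1-(1-s)-2s\big]=-rs\le 0$, so $F(-s)\le F(0)=0$.
\item For $b\le -1$, put $s=-b\ge 1$: $\tfrac{d}{ds}F(-s)=r\big[(s-1)^{\kappa}+1-2s^{\kappa}\big]\le 0$, since $(s-1)^{\kappa}\le s^{\kappa}$ and $1\le s^{\kappa}$; hence $F(-s)\le F(1)=r-3<0$.
\end{itemize}
Hence $F\le 0$ on all of $\R$, which is exactly the required scalar inequality, completing the proof.

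Finally, the degenerate case $b=-1$ (i.e.\ $x+h=0$) is already covered by the reduction: there the claimed inequality becomes $\kappa\,\normp{x}^{1+\kappa}\le 2\,\normp{x}^{1+\kappa}$, valid since $\kappa\le 1$. In summary, once the collinearity of $h$ with $x$ collapses everything to one scalar inequality, the only delicate point is handling that inequality for large $|b|$, where the monotonicity argument backed by subadditivity of $t\mapsto t^{\kappa}$ is what makes it go through.
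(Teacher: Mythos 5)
Your proof is correct, and the main reduction is the same one the paper uses: since $h$ is a scalar multiple of $x$, write $x+h=(1+b)x$ and factor out $\normp{x}^{1+\kappa}$, so the claim collapses to the scalar inequality $|1+b|^{1+\kappa}-1-(1+\kappa)b\le 2|b|^{1+\kappa}$. Where you diverge is in how that scalar inequality is established. The paper does not prove it in place; it invokes its auxiliary one-dimensional smoothness estimate (Proposition~\ref{prop:1dsmooth}), namely $|u+v|^{1+\kappa}-|u|^{1+\kappa}-(1+\kappa)|u|^{\kappa}\,\mathrm{sgn}(u)\,v\le 2^{1-\kappa}|v|^{1+\kappa}$, proved via an integral representation of the increment and a H\"older-type bound on the derivative difference; specializing $u=1$, $v=b$ gives the scalar bound with the slightly sharper constant $2^{1-\kappa}$. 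You instead prove the $u=1$ case from scratch by splitting $\R$ into $[0,\infty)$, $[-1,0]$, $(-\infty,-1]$ and checking the sign of the derivative of $F(b)=|1+b|^{1+\kappa}-1-(1+\kappa)b-2|b|^{1+\kappa}$ using subadditivity of $t\mapsto t^{\kappa}$ and $t^{\kappa}\ge t$ on $[0,1]$; each step checks out, including the anchor value $F(-1)=\kappa-2\le 0$. The trade-off: the paper's lemma is stated for an arbitrary base point and is reused elsewhere (e.g.\ coordinatewise in Proposition~\ref{prop:mdsmoothq}), and it yields the constant $2^{1-\kappa}$, whereas your monotonicity argument is self-contained, more elementary, and gives constant $2$, which is all this proposition needs.
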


\begin{proof}
We have
\begin{align}
\normp{x+h}^{1+\kappa} &= \Big \vert 1 + \frac{\inner{\xp}{y}}{\normp{x}^p} \Big \vert^{1+\kappa} \normp{x}^{1+\kappa} \\
& \leq \Big( 1 + (1+\kappa) \frac{\inner{\xp}{y}}{\normp{x}^p} + 2 \Big \vert \frac{\inner{\xp}{y}}{\normp{x}^p} \Big \vert^{1+\kappa} \Big) \normp{x}^{1+\kappa} \textrm{  \footnotesize  (by Proposition \ref{prop:1dsmooth})} \\
& =  \normp{x}^{1+\kappa} + (1+\kappa) \normp{x}^{1+\kappa-p}  \inner{\xp}{y} + 2  \Big \vert \frac{\inner{\xp}{y}}{\normp{x}^{p-1}} \Big \vert^{1+\kappa} \\
&=   \normp{x}^{1+\kappa} + (1+\kappa) \normp{x}^{1+\kappa-p}  \inner{\xp}{y} + 2 \normp{h}^{1+\kappa}.
\end{align}
\end{proof}

\begin{proposition}
\label{prop:mdsmoothq}
For any $p \in (1,2]$ and $x, y \in \rd$, we have
\begin{equation}
\normp{x+y}^p - \normp{x}^p - p \inner{\xp}{y} \leq 2 \normp{y}^p.
\end{equation}
\end{proposition}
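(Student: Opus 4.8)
The plan is to exploit the separable (coordinatewise) structure of $\normp{\cdot}^p$ and reduce the claim to a one-dimensional estimate. Writing $\normp{x+y}^p = \sum_{i=1}^d \abs{x_i + y_i}^p$, $\normp{x}^p = \sum_{i=1}^d \abs{x_i}^p$, and, from the definition of $\vpow{x}{p}$, $\inner{\xp}{y} = \sum_{i=1}^d \text{sgn}(x_i)\abs{x_i}^{p-1} y_i$, it suffices to prove, for every pair of real scalars $a,b$ and every $p \in (1,2]$,
\begin{equation*}
\abs{a+b}^p - \abs{a}^p - p\, \text{sgn}(a)\,\abs{a}^{p-1} b \,\le\, 2\, \abs{b}^p ,
\end{equation*}
and then sum this over the coordinates $(a,b) = (x_i,y_i)$, $i = 1,\dots,d$.

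To establish the scalar inequality, I would first dispose of the degenerate case $a = 0$, where the left-hand side equals $\abs{b}^p \le 2\abs{b}^p$. For $a \ne 0$, factor out $\abs{a}^p$ by writing $\abs{a+b}^p = \abs{a}^p\,\abs{1 + t}^p$ with $t \coloneqq b/a$, and apply the one-dimensional smoothness bound $\abs{1+t}^p \le 1 + p\,t + 2\abs{t}^p$, valid for exponents in $(1,2]$; this is Proposition~\ref{prop:1dsmooth}, the same estimate already invoked in the proof of Proposition~\ref{prop:smoothinh} (used there with exponent $1+\kappa$, here with exponent $p$). Multiplying through by $\abs{a}^p$ gives $\abs{a+b}^p \le \abs{a}^p + p\,\abs{a}^p t + 2\,\abs{a}^p\abs{t}^p$. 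Since $a = \text{sgn}(a)\abs{a}$, one has $\abs{a}^p t = \abs{a}^p b/a = \text{sgn}(a)\,\abs{a}^{p-1} b$ and $2\,\abs{a}^p\abs{t}^p = 2\abs{b}^p$, which is precisely the scalar inequality.

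Summing over $i$ and re-identifying the sums as $\normp{x+y}^p$, $\normp{x}^p$, $\inner{\xp}{y}$, and $\normp{y}^p$ then finishes the proof. The only substantive ingredient is the scalar bound $\abs{1+t}^p \le 1 + p\,t + 2\abs{t}^p$ for $p \in (1,2]$, an elementary one-variable estimate that is already used earlier in the paper; granted this, the present proposition requires no new analytic input. The only (mild) obstacle is bookkeeping: matching the coordinatewise sums with the $p$-norm and the duality pairing $\inner{\xp}{\cdot}$, and noting that coordinates with $x_i = 0$ contribute nothing to $\inner{\xp}{y}$ and are covered by the $a = 0$ case of the scalar inequality.
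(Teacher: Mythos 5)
Your proposal is correct and follows essentially the same route as the paper: reduce to coordinates and apply the scalar estimate of Proposition~\ref{prop:1dsmooth} with exponent $p=1+\kappa\in(1,2]$ to each pair $(x_i,y_i)$, then sum. Your normalization $\abs{1+t}^p\le 1+pt+2\abs{t}^p$ with $t=b/a$ is just the homogeneous form of the same scalar bound, so there is no substantive difference.
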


\begin{proof}
Since $p \in (1,2]$,  we have
\begin{align}
\normp{x + y}^p &= \sum_{i =1}^d \abs{x_i + y_i}^p \\
&\leq \sum_{i =1}^d \abs{x_i}^p + p \abs{x_i}^{p-1} \text{sgn}(x_i) y_i + 2 \abs{y_i}^p  \textrm{ \footnotesize  (by Proposition  \ref{prop:1dsmooth})} \\
&=   \normp{x}^p + p \inner{\xp}{y} +  2 \normp{y}^p.
\end{align}
By rearranging the terms, we can obtain the statement.
\end{proof}

\begin{proposition}
\label{prop:mdsmooth2}
For any $p > 2$ and $x, y \in \rd$, we have 
\begin{equation}
\normp{x+y}^2 - \normp{x}^2 - 2 \normp{x}^{2-p} \inner{\xp}{y} \leq (p-1) \normp{y}^2.
\end{equation}
\end{proposition}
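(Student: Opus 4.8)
Dividing both sides by $2$, the asserted inequality says exactly that $\psi(w):=\tfrac12\normp{w}^2$ is $(p-1)$-uniformly smooth with respect to the $p$-norm, its gradient being $\grad\psi(w)=\normp{w}^{2-p}\vpow{w}{p}$ by \eqref{eq:gradphi} (with $r=2$). (One could instead invoke Proposition~\ref{th:uniconv} together with the strong convexity of $\tfrac12\normps{x}^2$ in the $p^\star$-norm, but I prefer this self-contained route.) I would argue by a second-order Taylor expansion along the segment $t\mapsto x+ty$, $t\in[0,1]$. First I peel off the degenerate configurations. If $x=0$ the inequality reads $\normp{y}^2\le(p-1)\normp{y}^2$, true since $p>2$. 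If $y=\lambda x$ for a scalar $\lambda$, then using $\inner{\vpow{x}{p}}{x}=\normp{x}^p$ (as in the proof of Proposition~\ref{prop:innerh}) the left-hand side equals $\big[(1+\lambda)^2-1-2\lambda\big]\normp{x}^2=\lambda^2\normp{x}^2$ while the right-hand side equals $(p-1)\lambda^2\normp{x}^2$, so the claim again follows from $p>2$. In the only remaining case, $x\neq0$ and $y\notin\R x$, the segment $\{x+ty:t\in[0,1]\}$ does not meet the origin.

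On the open set $\rd\setminus\{0\}$ the function $F(w):=\normp{w}^2$ is twice continuously differentiable: writing $S(w):=\normp{w}^p=\sum_j\abs{w_j}^p$, the scalar map $t\mapsto\abs{t}^p$ is $C^2$ on $\R$ because $p>2$ (its second derivative $p(p-1)\abs{t}^{p-2}$ is continuous, including at $0$), hence $S\in C^2(\rd)$, and $F=S^{2/p}$ with $u\mapsto u^{2/p}$ smooth on $(0,\infty)$. A direct chain-rule computation then gives, for $w\neq0$ and any $v\in\rd$,
\begin{equation}
\inner{\grad^2 F(w)\,v}{v}=2(2-p)\,\normp{w}^{2-2p}\,\inner{\vpow{w}{p}}{v}^2+2(p-1)\,\normp{w}^{2-p}\sum_{j}\abs{w_j}^{p-2}v_j^2\,.
\end{equation}
The crucial observation is that, because $p>2$, the rank-one term has coefficient $2(2-p)<0$ and may be discarded. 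Applying H\"older's inequality to the remaining sum with conjugate exponents $\tfrac{p}{p-2}$ and $\tfrac{p}{2}$ gives $\sum_j\abs{w_j}^{p-2}v_j^2\le\normp{w}^{p-2}\normp{v}^2$, and therefore
\begin{equation}
\inner{\grad^2 F(w)\,v}{v}\le 2(p-1)\normp{v}^2\qquad\text{for all }w\neq0,\ v\in\rd\,.
\end{equation}

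Finally, in the nondegenerate case I set $\phi(t):=F(x+ty)$, which is a $C^2$ function on $[0,1]$ since the segment stays in $\rd\setminus\{0\}$. Then $\phi(0)=\normp{x}^2$, $\phi'(0)=\inner{\grad F(x)}{y}=2\normp{x}^{2-p}\inner{\vpow{x}{p}}{y}$, and $\phi''(t)=\inner{\grad^2F(x+ty)\,y}{y}\le2(p-1)\normp{y}^2$ by the bound just obtained. Taylor's theorem with integral remainder, $\phi(1)=\phi(0)+\phi'(0)+\int_0^1(1-t)\phi''(t)\,dt$, together with $\int_0^1(1-t)\,dt=\tfrac12$, yields $\normp{x+y}^2\le\normp{x}^2+2\normp{x}^{2-p}\inner{\vpow{x}{p}}{y}+(p-1)\normp{y}^2$, which is precisely the claim after rearranging. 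The one genuinely delicate point is that $F$ is \emph{not} $C^2$ at the origin (its Hessian blows up there), which is exactly why the zero/parallel configurations must be removed first and treated by the elementary identity above; the rest reduces to one differentiation and a single application of H\"older's inequality, with $p>2$ entering decisively through the sign of the rank-one term.
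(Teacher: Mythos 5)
Your proposal is correct and follows essentially the same route as the paper's proof: isolate the degenerate cases ($x=0$ and $y$ parallel to $x$), then on the non-degenerate segment compute the second derivative of $t\mapsto\normp{x+ty}^2$, discard the rank-one term whose coefficient $2(2-p)$ is negative, bound the remaining sum by H\"older's inequality with exponents $\tfrac{p}{p-2}$ and $\tfrac{p}{2}$, and integrate twice. The only cosmetic difference is that you use Taylor's theorem with integral remainder where the paper writes the double integral $\int_0^1\int_0^t g''(u)\,du\,dt$, which is the same identity.
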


\begin{proof}
Since $p-1 > 1$,  the statement holds when $x = 0$. Therefore, in the following, without loss of generality, we assume $x \neq 0$.

\noindent We prove the statement in two different cases, separately.
\begin{itemize}
\item If $\frac{x}{\norm{x}_2} = \pm  \frac{y}{\norm{y}_2}$,  then $y= tx$ for some $t \in \R$.  In that case,
\eq{
\normp{x+y}^2  = (1+ t)^2  \normp{x}^2 &  = (1 + 2 t+ t^2)  \normp{x}^2 \\
& =  \normp{x}^2 + 2 t  \normp{x}^2 + t^2  \normp{x}^2 \\
& =  \normp{x}^2  + 2  \normp{x}^{2-p}   \inner{\xp}{tx} +  \normp{tx}^2 \\
& = \normp{x}^2  + 2  \normp{x}^{2-p}   \inner{\xp}{y} +  \normp{y}^2 \\
& \leq  \normp{x}^2  + 2  \normp{x}^{2-p}   \inner{\xp}{y} + (p-1)  \normp{y}^2. 
}
\item  If $\frac{x}{\norm{x}_2} \neq \pm  \frac{y}{\norm{y}_2}$, then $x \neq t y$ (i.e.,  $x - ty \neq 0$) for all $t \in \R$.  Then,  $g(t) = \normp{x+ty}^2$ is twice continuously differentiable on $\R$,  and
\eq{
g^{\prime\prime}(t) &= - 2 (p -2) \normp{x+ty}^{2-2p} ( \inner{\vpow{(x+ty)}{p}}{y} )^2 \\
&\quad + 2 (p-1) \normp{x+ty}^{2-p} \Big( \sum_{i=1}^d \abs{x_i + ty_i}^{p-2} y_i^2 \Big) \\
& \leq 2 (p -1) \normp{x+ty}^{2-p} \Big( \sum_{i=1}^d \abs{x_i + ty_i}^{p-2} y_i^2 \Big) \\
& \leq 2 (p -1) \normp{x+ty}^{2-p}  \normp{x+ty}^{p-2} \normp{y}^2  \textrm{ \footnotesize  (by H\"{o}lder's inequality)} \\
& = 2 (p-1) \normp{y}^2.
}
Then,  we have
\eq{
\normp{x+y}^2 - \normp{x}^2 - 2 \normp{x}^{2-p} \inner{\xp}{y}& = g(1) - g(0) - g^{\prime}(0) \\
& =  \int_0^{1}  g^{\prime}(t) - g^{\prime}(0)   \,dt  \\
& =  \int_0^{1}  \int_0^{t}   g^{\prime\prime}(u)    \,du \,dt \\ 
& \leq (p-1) \normp{y}^2.
}
\end{itemize}
\end{proof}

\subsubsection{Proof of  Proposition \ref{th:uniconv2}}

\begin{proof}[{Proof of Proposition \ref{th:uniconv2}}]
\begin{enumerate}[leftmargin=*]
\item  We want to show that for all $x, y \in \rd$,
\begin{equation}
\frac{1}{1+\kappa} \normp{x+y}^{1+\kappa} - \frac{1}{1+\kappa}  \normp{x}^{1+ \kappa} -  \normp{x}^{1+\kappa - p} \inner{\xp}{y} \leq  \frac{K_p}{1+\kappa}  \normp{y}^{1+\kappa}.
\end{equation}
Note that since $K_p > 1$, the statement is correct when $x = 0$. Therefore, in the following,  without loss of generality,  we assume $x \neq 0$. Let
\begin{equation}
h =  \frac{\inner{\xp}{y}}{\normp{x}^p} x,  \textrm{ for } p \in [1+\kappa, \infty).
\end{equation}
We will prove the $p \in [1+\kappa,2]$ and $p \in (2, \infty)$ cases separately.

For $p \in [1+\kappa,2]$, by using Proposition \ref{prop:mdsmoothq},  we can write that
\begin{equation}
\normp{x+y}^p - \normp{x + h}^p - \underbrace{p \inner{\vpow{(x+h)}{p}}{y-h}}_{= \: 0 \textrm{ (by Prop. \ref{prop:orthogonal}})} \leq 2 \normp{y-h}^p.
\end{equation}
Therefore, we have 
\begin{equation}
\normp{x+y}^p  \leq \normp{x + h}^p  + 2 \normp{y-h}^p.
\end{equation}
Then, 
\begin{align}
 \normp{x+y&}^{1+\kappa}   \leq (  \normp{x + h}^p + 2 \normp{y-h}^p)^{\frac{1+\kappa}{p}} \\
 \leq& \normp{x + h}^{1+\kappa} + 2 \normp{y-h}^{1+\kappa} \textrm{ \footnotesize (since $1 + \kappa \leq p$, by Proposition \ref{prop:app1})} \\
 \leq&  \normp{x}^{1+\kappa} + (1+\kappa) \normp{x}^{1+\kappa-p} \inner{\xp}{y}  +  2 \normp{h}^{1+\kappa} + 2  \normp{y-h}^{1+\kappa}  \textrm{  \footnotesize  (by Proposition \ref{prop:smoothinh})} \\
 \leq&  \normp{x}^{1+\kappa} + (1+\kappa) \normp{x}^{1+\kappa-p} \inner{\xp}{y}  +  2 \normp{y}^{1+\kappa} + 2 \: 2^{1+\kappa}  \normp{y}^{1+\kappa} \textrm{  \footnotesize  (by Proposition \ref{prop:innerh})} \\
 \leq&   \normp{x}^{1+\kappa} + (1+\kappa) \normp{x}^{1+\kappa-p} \inner{\xp}{y}  +  10 \normp{y}^{1+\kappa} \textrm{ \footnotesize (since $\kappa \in (0,1]$)}. \label{eq:result1}
\end{align}

For $p \in (2, \infty)$, by using Proposition \ref{prop:mdsmooth2}, we can write that
\begin{equation}
\normp{x+y}^2 - \normp{x + h}^2 - \underbrace{2 \normp{x+h}^{2-p} \inner{\vpow{(x+h)}{p}}{y-h}}_{= \: 0 \textrm{ (by Prop. \ref{prop:orthogonal}})} \leq (p-1) \normp{y-h}^2.
\end{equation}
Therefore, we have 
\begin{equation}
\normp{x+y}^2  \leq \normp{x + h}^2  + (p-1) \normp{y-h}^2.
\end{equation}
Then, 
\begin{align}
&\normp{x +y}^{1+\kappa}   \leq (  \normp{x + h}^2  + (p-1) \normp{y-h}^2)^{\frac{1+\kappa}{2}} \\
 \leq& \normp{x + h}^{1+\kappa} + (p-1)^{\frac{1+\kappa}{2}} \normp{y-h}^{1+\kappa}   \textrm{ \footnotesize (since $1 + \kappa \leq 2$, by Proposition \ref{prop:app1})}  \\
 \leq & \normp{x}^{1+\kappa} + (1+\kappa) \normp{x}^{1+\kappa-p} \inner{\xp}{y}  +  2 \normp{h}^{1+\kappa} + (p-1)^{\frac{1+\kappa}{2}}  \normp{y-h}^{1+\kappa}  \textrm{ \footnotesize (by Proposition \ref{prop:smoothinh})} \\
 \leq & \normp{x}^{1+\kappa} + (1+\kappa) \normp{x}^{1+\kappa-p} \inner{\xp}{y}  +  2 \normp{y}^{1+\kappa} + (p-1)^{\frac{1+\kappa}{2}} 2^{1+\kappa}  \normp{y}^{1+\kappa} \textrm{ \footnotesize (by Proposition \ref{prop:innerh})} \\
 \leq&   \normp{x}^{1+\kappa} + (1+\kappa) \normp{x}^{1+\kappa-p} \inner{\xp}{y}  +  10 (p-1)^{\frac{1+\kappa}{2}} \normp{y}^{1+\kappa}  \textrm{ \footnotesize (since $p > 2$ and $\kappa \in (0,1]$)}. \label{eq:result2}
\end{align}
By multiplying both sides in \eqref{eq:result1} and \eqref{eq:result2} with $\frac{1}{1+\kappa}$,  the statement follows.

\item  Let us fix an arbitrary $p \in [1+\kappa,\infty)$.  Note that by Proposition \ref{prop:conjnorm},  
\begin{align*}
\varphi(x) = \frac{1}{1+ \kappa}\normp{x}^{1 + \kappa} \quad\text{and}\quad \varphi^\star(y) =\frac{\kappa}{1 + \kappa} \normps{y}^{\frac{1 + \kappa}{\kappa}}
\end{align*}
are convex conjugate pairs.  By the previous part,  we know that $\varphi$ is $(K_p, 1 + \kappa)$-H\"{o}lder smooth with respect to $p$-norm.  Then,  by Proposition \ref{th:uniconv},  $\varphi^\star$ is $(K_p^{- \frac{1}{\kappa}}, \frac{1 + \kappa}{\kappa})$-uniformly convex with respect to $p^\star$-norm.
\end{enumerate}

\end{proof}

\section{Proofs for Section \ref{sec:main}}
\label{sec:appmain}

\subsection{Proof of Theorem \ref{th:main}}

We start with an auxiliary result,  given in~\cite[Lemma 4.1]{Bubeck2015}. % For the sake of completeness,  we recall the statement and provide a proof.\todo{why proof}

\begin{proposition}{\cite[Lemma 4.1]{Bubeck2015}}
\label{prop:genproj} Let $\Psi$ be the mirror function defined in Theorem \ref{th:main}.  For $y \in \rd$, let $\hat y = \argmin_{x \in \S} D_\Psi(x, y)$. Then, for any $x \in \S$,
\begin{enumerate}[label=\roman*),font=\itshape]
\item $\inner{ \grad \Psi(\hat y) - \grad  \Psi(y)}{ \hat y - x} \leq 0$
\item $D_\Psi(x, \hat y) + D_\Psi(\hat y, y) \leq D_\Psi(x,y)$.
\end{enumerate}
\end{proposition}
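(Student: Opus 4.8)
The statement is the classical generalized Pythagorean inequality for Bregman projections (it is quoted from \cite[Lemma~4.1]{Bubeck2015}), so my plan is to reproduce its short proof while checking that the potential $\Psi$ of Theorem~\ref{th:main} has the regularity the argument needs. Recall from the requirements on a mirror map that $\Psi$ is continuously differentiable on all of $\rd$, strictly convex (being uniformly convex), and has a norm-coercive gradient; hence $\grad\Psi$ is defined everywhere and $D_\Psi(x,y)$ makes sense for every pair of points.

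First I would verify that $\hat y$ is well defined. For fixed $y$, the map $x\mapsto D_\Psi(x,y)=\Psi(x)-\Psi(y)-\inner{\grad\Psi(y)}{x-y}$ is continuous, and strictly convex in $x$ since $\Psi$ is strictly convex and the remaining terms are affine in $x$; as $\S$ is convex and compact, this map attains its minimum at a unique point $\hat y\in\S$, and $\grad\Psi(\hat y)$ is defined because the domain of $\Psi$ is $\rd$.

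For part (i), I would invoke the first-order optimality condition for the constrained convex problem $\min_{x\in\S}D_\Psi(x,y)$. Since $x\mapsto D_\Psi(x,y)$ is convex and differentiable with $\grad_x D_\Psi(x,y)=\grad\Psi(x)-\grad\Psi(y)$, optimality of $\hat y$ over the convex set $\S$ gives $\inner{\grad\Psi(\hat y)-\grad\Psi(y)}{x-\hat y}\ge 0$ for all $x\in\S$; rearranging yields exactly $\inner{\grad\Psi(\hat y)-\grad\Psi(y)}{\hat y-x}\le 0$. For part (ii), I would establish the exact three-point identity
\[
D_\Psi(x,y)=D_\Psi(x,\hat y)+D_\Psi(\hat y,y)+\inner{\grad\Psi(\hat y)-\grad\Psi(y)}{x-\hat y},
\]
valid for arbitrary $x,y,\hat y$, by writing out the three Bregman terms from the definition of $D_\Psi$: the $\Psi(\hat y)$ and $\Psi(y)$ contributions cancel and the affine terms regroup into the displayed inner product. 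Combining this identity with part (i), the last term is nonnegative, so $D_\Psi(x,y)\ge D_\Psi(x,\hat y)+D_\Psi(\hat y,y)$, which is (ii).

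There is no substantial obstacle: the whole argument is a few lines of elementary convex analysis. The only points requiring care are (a) the existence, uniqueness, and differentiability technicalities handled in the second paragraph (compactness of $\S$, strict convexity of $\Psi$, and $\mathrm{dom}\,\Psi=\rd$ so that $\grad\Psi$ is everywhere defined), and (b) remaining consistent with the sign convention for $D_\Psi$ used in the paper — i.e. the one under which $D_\Psi\ge 0$ — when writing down the optimality condition and the three-point identity.
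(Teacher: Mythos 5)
Your proof is correct and takes essentially the same route as the paper's: the paper simply quotes \cite[Lemma 4.1]{Bubeck2015}, whose argument is exactly your combination of the first-order optimality condition for $\min_{x\in\S}D_\Psi(x,y)$ with the three-point identity. Your care about the sign convention is well placed (the paper's displayed definition of $D_\Psi$ has $\inner{\grad\Psi(y)}{y-x}$ where the standard $\inner{\grad\Psi(y)}{x-y}$ is intended), and with the standard convention your identity and conclusion are exactly the intended ones.
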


\iffalse
\begin{proof}
\begin{enumerate}[label=(\roman*),font=\itshape, leftmargin=*]
\item Note that $\grad_x D_\Psi(x,y) = \grad \Psi(x) - \grad \Psi(y)$.  By the first-order optimality condition for convex functions (see \cite[Proposition 1.3]{Bubeck2015}),  for any $x \in \S$ and $y \in \rd$, we have 
\begin{equation}
\inner{ \grad \Psi(y^\star) - \grad  \Psi(y)}{ y^\star - x} \leq 0.
\end{equation}
\item Note that
\begin{equation}
D_\Psi(x, y^\star) + D_\Psi(y^\star, y) - D_\Psi(x,y) =  \inner{ \grad \Psi(y^\star) - \grad  \Psi(y)}{ y^\star - x} \leq 0.
\end{equation}
Therefore,
\begin{equation}
D_\Psi(x, y^\star) + D_\Psi(y^\star, y) \leq D_\Psi(x,y).
\end{equation}
\end{enumerate}
\end{proof}
\fi

\begin{proof}[{Proof of Theorem \ref{th:main}}]
For notational convenience, we let $x^\star = \argmin_{x \in \S} f(x)$. We start with two observations:
\begin{itemize}
\item $g_{t+1} = \frac{1}{\eta} ( \grad \Psi(x_t) - \grad \Psi (y_{t+1}))$,
\item $D_\Psi(x^\star, x_t) + D_\Psi(x_t,y_{t+1}) - D_\Psi(x^\star, y_{t+1}) = \inner{\grad	\Psi(x_t) - \grad  \Psi(y_{t+1})}{x_t - x^\star}$.
\end{itemize}
Then, we write
\begin{align}
 \inner{g_{t+1}}{& x_t - x^\star}  = \frac{1}{\eta}  \inner{\grad	\Psi(x_t) - \grad  \Psi(y_{t+1})}{x_t - x^\star} \\
& =  \frac{1}{\eta} ( D_\Psi(x^\star, x_t) + D_\Psi(x_t,y_{t+1}) - D_\Psi(x^\star, y_{t+1}) ) \\
& \leq   \frac{1}{\eta}  ( D_\Psi(x^\star, x_t) + D_\Psi(x_t,y_{t+1}) - D_\Psi(x^\star, x_{t+1}) - D_\Psi(x_{t+1}, y_{t+1}) ) \textrm{ \footnotesize (by Proposition \ref{prop:genproj})} \\
& =  \frac{1}{\eta}  (   D_\Psi(x^\star, x_t)   - D_\Psi(x^\star, x_{t+1})  + D_\Psi(x_t,y_{t+1})  - D_\Psi(x_{t+1}, y_{t+1}) ). \label{eq:md1}
\end{align}

Note that $D_\Psi(x^\star, x_t)   - D_\Psi(x^\star, x_{t+1})$  will lead to a telescoping sum when summing over $t = 1$ to $t = T$.  Therefore, it remains to bound the other term:
\begin{align}
D_\Psi(x_t,y_{t+1})  - D_\Psi(x_{t+1}, y_{t+1}) & = \Psi(x_t) - \Psi(x_{t+1}) - \inner{\grad \Psi(y_{t+1})}{x_t - x_{t+1}} \\
& \leq \inner{\grad \Psi(x_t) - \grad \Psi(y_{t+1}) }{ x_t - x_{t+1}} - \frac{\kappa}{1+\kappa} \normqs{x_t - x_{t+1}}^{\frac{1+\kappa}{\kappa}} \\
& = \eta \inner{g_{t + 1}}{x_t - x_{t+1}} - \frac{\kappa}{1+\kappa} \normqs{x_t - x_{t+1}}^{\frac{1+\kappa}{\kappa}} \label{eq:md125} \\
& \leq \eta \normq{g_{t+1}}  \normqs{x_t - x_{t+1}} - \frac{\kappa}{1+\kappa}  \normqs{x_t - x_{t+1}}^{\frac{1+\kappa}{\kappa}} \label{eq:md15} \\
& \leq \frac{1}{1+\kappa} \eta^{1+\kappa} \normq{g_{t+1}}^{1+\kappa}. \label{eq:md2}
\end{align}
where we use that $\Psi$ is $(1, \frac{1+\kappa}{\kappa})$ uniformly convex  in \eqref{eq:md125},  and  maximize the right-hand side of \eqref{eq:md15} to obtain  \eqref{eq:md2}.

By \eqref{eq:md1} and \eqref{eq:md2}, we have
\begin{align}
&\frac{1}{T} \sum_{t=0}^{T-1} \inner{g_{t+1}}{x_t - x^\star}  \leq \frac{D_\Psi (x^\star, x_0)}{\eta T} + \frac{\eta^\kappa}{1+\kappa} \frac{1}{T} \sum_{t=0}^{T-1} \normq{g_{t+1}}^{1+\kappa} \\
 \leq & \frac{\Psi(x^\star) - \Psi(x_0)}{\eta T} + \frac{\eta^\kappa}{1+\kappa} \frac{1}{T} \sum_{t=0}^{T-1} \normq{g_{t+1}}^{1+\kappa}  \textrm{ \footnotesize (since $x_0 = \argmin_{x\in \S} \Psi(x)$ and $\S$ is convex)} \\
 \leq & \frac{\kappa}{1+\kappa} \frac{R_0^{\frac{1+\kappa}{\kappa}}}{\eta T} + \frac{\eta^\kappa}{1+\kappa}  \frac{1}{T} \sum_{t=0}^{T-1} \normq{g_{t+1}}^{1+\kappa} .
\end{align}
Note that $x_t$ is $\cF_t$-measurable.  Hence,  we can write that
\begin{align}
\frac{1}{T} \E \Bigg[ \sum_{t=0}^{T-1} \inner{ \E[ g_{t+1} \vert \cF_t]}{x_t - x^\star}   \Bigg] & \leq \frac{\kappa}{1+\kappa} \frac{R_0^{\frac{1+\kappa}{\kappa}}}{\eta T} + \frac{\eta^\kappa}{1+\kappa} \frac{1}{T} \E \Big[ \sum_{i=0}^{T-1} \E [ \normq{g_{t+1}}^{1+\kappa} \vert \cF_t]  \Big] \\
& \leq   \frac{\kappa}{1+\kappa} \frac{R_0^{\frac{1+\kappa}{\kappa}}}{\eta T} + \frac{\eta^\kappa}{1+\kappa} \frac{1}{T} \sum_{i=1}^T \sigma^{1+\kappa},
\end{align}
which for $v_t \in \partial f(x_t)$,  leads to
\begin{align}
\frac{1}{T} \E \Bigg[  \sum_{t=0}^{T-1} \inner{ v_t }{x_t - x^\star}  \Bigg]  & \leq \frac{\kappa}{1+\kappa} \frac{R_0^{\frac{1+\kappa}{\kappa}}}{\eta T} + \frac{\eta^\kappa}{1+\kappa} \frac{1}{T}  \sum_{t=0}^{T-1}  \sigma^{1+\kappa}.
\end{align}
Then,
\begin{align}
\frac{\kappa}{1+\kappa} \frac{R_0^{\frac{1+\kappa}{\kappa}}}{\eta T} + \frac{\eta^\kappa}{1+\kappa}  \sigma^{1+\kappa} & \geq \E \Bigg[ \frac{1}{T}  \sum_{t=0}^{T-1}  f(x_t) -  \min_{x \in \S} f(x) \Bigg] \textrm{ \footnotesize  (by the convexity of $f$)} \\
& \geq \E \Bigg[ f \Bigg( \frac{1}{T}  \sum_{t=0}^{T-1} x_t \Bigg) - \min_{x \in \S} f(x) \Bigg]  \textrm{ \footnotesize  (by Jensen's inequality)}.  \label{eq:md3}
\end{align}
By using $\eta =  \frac{R_0^{\frac{1}{\kappa}}}{\sigma} \frac{1}{T^{\frac{1}{1+\kappa}}}$ in \eqref{eq:md3}, we can obtain the statement.
\end{proof}

\bigskip
\subsection{Proof of Corollary \ref{cor:main}}

\begin{proof}[{Proof of Corollary \ref{cor:main}}]
\begin{enumerate}[label=\roman*),font=\itshape, leftmargin =*]
\item For $q \in [1, 1+ \kappa]$, we have  
\begin{equation}
\E[ \norm{g_t}_{1+\kappa}^{1+\kappa}] \leq \E[ \normq{g_t}^{1+\kappa}] \leq L^{1+\kappa}~\textrm{(since $q \leq 1 + \kappa)$}. 
\end{equation}
Moreover,  $x_0 = 0$ and
\begin{equation}
R_0^{\frac{1+\kappa}{\kappa}} = \frac{1+\kappa}{\kappa} \sup_{x \in \B_\infty(R)} (\Psi(x) - \Psi(x_0))  \leq 10^{\frac{1}{\kappa}}  \sup_{x \in \B_\infty(R)}  \norm{x}_{\frac{1+ \kappa}{ \kappa}}^{\frac{1+\kappa}{\kappa}}.
\end{equation}
Then,  
\begin{equation}
R_0 \leq 10^{\frac{1}{1+\kappa}}    \sup_{x \in \B_\infty(R)}   \norm{x}_{\frac{1+ \kappa}{ \kappa}} \leq 10 R  d^{\frac{\kappa}{1+\kappa}}.
\end{equation}
Since $\Psi =U_p$ for $p = 1 + \kappa$ is $(1,\frac{1+\kappa}{\kappa})$-uniformly convex  with respect to $\frac{1+\kappa}{\kappa}$-norm (see Proposition \ref{th:uniconv2}),  by Theorem \ref{th:main}, we have
\begin{equation}
\E \Bigg[ f \Bigg( \frac{1}{T}  \sum_{t=0}^{T-1} x_t \Bigg) - \min_{x \in \S} f(x) \Bigg]   \leq 10 \:  R  \sigma  \Big( \frac{d}{T} \Big)^{\frac{\kappa}{1+\kappa}}.
\end{equation}
\item   For $q \in (1+ \kappa, \infty)$,   we have $x_0 =0$ and 
\begin{equation}
R_0^{\frac{1+\kappa}{\kappa}} = \frac{1+\kappa}{\kappa}  \sup_{x \in \B_\infty(R)}  (\Psi(x) - \Psi(x_0))  \leq \Big( 10 \max \bbrace{1, (q-1)^{\frac{1+\kappa}{2}}} \Big)^{\frac{1}{\kappa}}  \sup_{x \in \B_\infty(R)}   \normqs{x}^{\frac{1+\kappa}{\kappa}}.
\end{equation}
Then,  
\begin{equation}
R_0 \leq 10^{\frac{1}{1+\kappa}}    \max \bbrace{1, \sqrt{q-1}}  \sup_{x \in \B_\infty(R)}  \normqs{x} \leq 10 \max \bbrace{1, \sqrt{q-1}} R  d^{1 - \frac{1}{q}}.
\end{equation}
Since $\Psi = U_p$ for $p = q$ is $(1,\frac{1+\kappa}{\kappa})$-uniformly convex  with respect to $q^\star$-norm (see Proposition \ref{th:uniconv2}),  by Theorem \ref{th:main},  we have
\begin{equation}
\E \Bigg[ f \Bigg( \frac{1}{T}  \sum_{t=0}^{T-1} x_t \Bigg) - \min_{x \in \S} f(x) \Bigg]   \leq 10  \max \bbrace{1, \sqrt{q-1}} R \sigma  \frac{d^{1 - \frac{1}{q}}}{T^{\frac{\kappa}{1+\kappa}}}.
\end{equation}
\item  If $q  \in (\log d, \infty]$,  we have 
\eq{
\E[ \norm{g_t}_{1+ \log d}^{1+\kappa}]   \leq  \E[  ( d^{ \frac{1}{1+  \log d} - \frac{1}{q}} \normq{g_t} )^{1+\kappa}  ] &= d^{ \frac{1+ \kappa}{1+  \log d} - \frac{1 + \kappa}{q}} \E[ \normq{g_t}^{1+\kappa}  ] ~\textrm{\footnotesize(since $q > \log d)$} \\
& \leq   d^{ \frac{1+ \kappa}{1+  \log d} - \frac{1 + \kappa}{q}}  \sigma^{1+\kappa}. 
}
Moreover,  $x_0 = 0$ and
\begin{equation}
R_0^{\frac{1+\kappa}{\kappa}} = \frac{1+\kappa}{\kappa}   \sup_{x \in \B_\infty(R)}  (\Psi(x) - \Psi(x_0))  \leq  \Big( 10 (1 +  \log d-1)^{\frac{1+\kappa}{2}} \Big)^{\frac{1}{\kappa}}  \sup_{x \in \B_\infty(R)}  \norm{x}_{\frac{1 + \log d}{ \log d }}^{\frac{1+\kappa}{\kappa}}.
\end{equation}
Then,  
\begin{equation}
R_0 \leq 10^{\frac{1}{1+\kappa}} \sqrt{\log d }   \sup_{x \in \B_\infty(R)}   \norm{x}_{\frac{1 + \log d}{\log d}} \leq 10 \sqrt{\log d} R  d^{\frac{ \log d}{1 + \log d}}.
\end{equation}
As $\Psi = U_p$ for $p = 1 + \log d$ is $(1, \frac{1+\kappa}{\kappa})$-uniformly convex with respect to $\frac{(1+\log d)}{\log d}$-norm,  by Theorem \ref{th:main}, we have
\begin{equation}
\E \Bigg[ f \Bigg( \frac{1}{T}  \sum_{t=0}^{T-1} x_t \Bigg) - \min_{x \in \S} f(x) \Bigg]  \leq 10  \: R \sigma \sqrt{\log d}  \frac{d^{1 - \frac{1}{q}}}{T^{\frac{\kappa}{1+\kappa}}}. 
\end{equation}

\end{enumerate}
\end{proof}

\section{Auxiliary Results for Sections \ref{sec:prelim} and \ref{sec:main}}

\begin{proposition}
\label{prop:app1}
Let $x,y \geq 0$ and $\kappa \in (0, 1]$.  Then,
\begin{enumerate}[label=(\roman*),font=\itshape]
\item $(x+y)^\kappa \leq x^\kappa + y^\kappa$
\item $x^\kappa + y^\kappa \leq 2^{1-\kappa} (x+y)^\kappa$.
\end{enumerate}
\end{proposition}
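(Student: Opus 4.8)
The plan is to establish both inequalities by exploiting the two complementary features of the map $t \mapsto t^\kappa$ on $[0,\infty)$ when $\kappa \in (0,1]$: it is \emph{subadditive}, which gives (i), and it is \emph{concave}, which gives (ii).

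For part (i), I would first dispose of the trivial case $x = y = 0$. Otherwise $x + y > 0$, and I would normalize by setting $s = x/(x+y)$ and $t = y/(x+y)$, so that $s, t \in [0,1]$ and $s + t = 1$. The key elementary fact is that for any $u \in [0,1]$ and $\kappa \in (0,1]$ one has $u^\kappa \geq u$ (raising a number in $[0,1]$ to a power no larger than $1$ does not decrease it); this can be seen by writing $u^\kappa = u \cdot u^{\kappa-1}$ with $u^{\kappa-1} \geq 1$, or checked directly. Applying this to $s$ and $t$ and summing yields $s^\kappa + t^\kappa \geq s + t = 1$; multiplying through by $(x+y)^\kappa$ and using $(s(x+y))^\kappa = s^\kappa (x+y)^\kappa$ gives $x^\kappa + y^\kappa \geq (x+y)^\kappa$, which is the claim.

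For part (ii), I would invoke the concavity of $\phi(t) = t^\kappa$ on $[0,\infty)$ for $\kappa \in (0,1]$ (immediate from $\phi''(t) = \kappa(\kappa-1)t^{\kappa-2} \leq 0$ for $t > 0$, with $\kappa = 1$ giving equality throughout). Jensen's inequality applied to the two points $x,y$ with equal weights gives $\tfrac{1}{2}(x^\kappa + y^\kappa) \leq \big(\tfrac{x+y}{2}\big)^\kappa = 2^{-\kappa}(x+y)^\kappa$, and multiplying by $2$ yields $x^\kappa + y^\kappa \leq 2^{1-\kappa}(x+y)^\kappa$. Alternatively, one can obtain (ii) from H\"older's inequality with conjugate exponents $1/\kappa$ and $1/(1-\kappa)$ applied to the vectors $(x^\kappa, y^\kappa)$ and $(1,1)$, namely $x^\kappa + y^\kappa \leq (x+y)^{\kappa}\,(1+1)^{1-\kappa}$, treating $\kappa = 1$ separately since there the inequality is an equality.

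Since both statements are classical one-line inequalities, there is no genuine obstacle here; the only points requiring a modicum of care are the degenerate cases $x = y = 0$ and $\kappa = 1$ (where $2^{1-\kappa} = 1$ and (ii) reduces to an equality), which I would flag explicitly to keep the argument airtight.
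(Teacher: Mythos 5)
Your proposal is correct. For part (ii) you end up exactly where the paper does: the alternative you mention—H\"older's inequality with exponents $1/\kappa$ and $1/(1-\kappa)$ applied to $(x^\kappa,y^\kappa)$ and $(1,1)$—is verbatim the paper's argument, and your Jensen formulation is just the two-point case of the same concavity fact. For part (i) you take a mildly different route: the paper assumes without loss of generality that $x\ge y$ and uses the first-order concavity bound $(x+y)^\kappa\le x^\kappa+\kappa x^{\kappa-1}y$ together with $\kappa x^{\kappa-1}y\le y^\kappa$, whereas you normalize by $x+y$ and use that $u^\kappa\ge u$ on $[0,1]$. Both are standard one-line proofs of subadditivity of a concave function vanishing at the origin; yours avoids the case split on which of $x,y$ is larger at the cost of handling $x=y=0$ separately, which you do. No gaps.
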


\begin{proof}
\begin{enumerate}[label=(\roman*),font=\itshape]
\item Without loss of generality, assume $x \geq y$.  By concavity, we have
\begin{align}
(x + y)^\kappa &\leq x^\kappa + \kappa x^{\kappa - 1} y \\
&\leq x^\kappa + y^\kappa. \textrm{ \footnotesize (Since  $\kappa \in (0, 1]$ and $y \leq x$)}
\end{align}
\item By using $p = 1 / \kappa$ and $p^\star = 1 / (1-\kappa)$ in H\"{o}lder's inequality,  we write
\begin{align}
x^\kappa + y^\kappa \leq (1^{p^\star} + 1^{p^\star})^{1/{p^\star}} (x+y)^{1/p} = 2^{1-\kappa} (x+y)^\kappa.
\end{align}
\end{enumerate}
\end{proof}

\begin{proposition}
\label{prop:1dsmooth}
Let $x, y  \in \R$ and $\kappa \in (0,1]$. Then,
\begin{equation}
\abs{x+y}^{1+\kappa} - \abs{x}^{1+\kappa} - (1+\kappa) \abs{x}^{\kappa} \text{sgn}(x) y \leq 2^{1-\kappa}  \abs{y}^{1+\kappa}.
\end{equation}
\end{proposition}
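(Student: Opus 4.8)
The plan is to recognize the left-hand side as the Bregman divergence of the convex function $\phi(t) = \abs{t}^{1+\kappa}$ on $\R$ and to control it through the standard integral representation. First I would dispose of trivial reductions. The claimed inequality is invariant under the substitution $(x,y)\mapsto(-x,-y)$, since every term is unchanged (for the cross term note $\text{sgn}(-x)\cdot(-y) = \text{sgn}(x)\,y$); hence without loss of generality $y \ge 0$, and the case $y = 0$ is immediate, so we may assume $y > 0$.

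Next, note that $\phi(t) = \abs{t}^{1+\kappa}$ is convex and continuously differentiable on all of $\R$ (including at the origin, because $\kappa > 0$), with $\phi'(t) = (1+\kappa)\,\abs{t}^{\kappa}\,\text{sgn}(t)$. By the fundamental theorem of calculus,
\[
\abs{x+y}^{1+\kappa} - \abs{x}^{1+\kappa} - (1+\kappa)\abs{x}^{\kappa}\text{sgn}(x)\,y \;=\; \phi(x+y) - \phi(x) - \phi'(x)\,y \;=\; \int_0^y \big(\phi'(x+v) - \phi'(x)\big)\,dv .
\]
The heart of the argument is the pointwise estimate
\[
\abs{x+v}^{\kappa}\,\text{sgn}(x+v) - \abs{x}^{\kappa}\,\text{sgn}(x) \;\le\; 2^{1-\kappa}\, v^{\kappa} \qquad \text{for all } x \in \R,\ v \ge 0 ,
\]
which I would verify by splitting into three cases: $x \ge 0$; $x < 0$ with $x + v \le 0$; and $x < 0$ with $x + v > 0$. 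In the first two cases the left side collapses to $(a+v)^\kappa - a^\kappa$, respectively $a^\kappa - (a-v)^\kappa$, for a suitable $a \ge 0$, and each is at most $v^\kappa \le 2^{1-\kappa}v^\kappa$ by subadditivity of $t\mapsto t^\kappa$, i.e.\ Proposition~\ref{prop:app1}(i). In the remaining case the left side equals $(v-\abs{x})^\kappa + \abs{x}^\kappa$, which is at most $2^{1-\kappa}\big((v-\abs{x})+\abs{x}\big)^\kappa = 2^{1-\kappa}v^\kappa$ by Proposition~\ref{prop:app1}(ii). Multiplying by $1+\kappa$ and substituting into the integral yields $(1+\kappa)\,2^{1-\kappa}\int_0^y v^\kappa\,dv = 2^{1-\kappa}\,y^{1+\kappa} = 2^{1-\kappa}\abs{y}^{1+\kappa}$, which is exactly the claim.

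The only genuine obstacle is the three-way case analysis in the pointwise estimate; the two halves of Proposition~\ref{prop:app1} are precisely what is needed to treat the "same sign" versus "sign-change" configurations, so once they are invoked the remaining computation is routine. A minor point worth stating explicitly is that $\phi'$ is continuous at $0$, which is what makes the integral representation of the Bregman divergence valid without further justification.
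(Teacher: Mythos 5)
Your proof is correct and follows essentially the same route as the paper's: both write the left-hand side as $\int(\phi'(x+v)-\phi'(x))\,dv$ and bound the integrand pointwise by $(1+\kappa)2^{1-\kappa}v^{\kappa}$ via a sign-based case split using the two halves of Proposition~\ref{prop:app1}. Your explicit reduction to $y\ge 0$ by the symmetry $(x,y)\mapsto(-x,-y)$ is a clean touch that the paper glosses over, but it does not change the substance of the argument.
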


\begin{proof}
Let $g(x) = \abs{x}^{1+\kappa}$ for $x \in \R$.  Note that $g$ is convex and continuously differentiable,  where  $g^\prime(x) = (1+\kappa) \abs{x}^\kappa \text{sgn}(x)$. Then,
\begin{align}
\abs{x+y}^{1+\kappa} - \abs{x}^{1+\kappa} - (1+\kappa) \abs{x}^{\kappa} & \text{sgn}(x) y = g(x+y) - g(x) - g^\prime(x) y \\
& = \int_x^{x+y}  (  g^\prime(t) -  g^\prime(x) )  \,dt \\
& \leq  \int_x^{x+y}  \abs{  g^\prime(t) -  g^\prime(x) }  \,dt \\
& =  (1+\kappa)   \int_x^{x+y}  \big \vert  \abs{t}^\beta \text{sgn}(t) -    \abs{x}^\beta \text{sgn}(x) \big \vert   \,dt .  \label{eq:intupper}
\end{align}
In the following, we will find an upper-bound for the integrand in \eqref{eq:intupper}.
\begin{itemize}
\item If $\text{sgn}(t) = \text{sgn}(x)$, we have
\begin{align}
 \big \vert  \abs{t}^\kappa \text{sgn}(t) -    \abs{x}^\kappa \text{sgn}(x)   \big \vert &=  \big \vert  \abs{t}^\kappa -    \abs{x}^\kappa  \big \vert \\
& \leq \abs{t-x}^\kappa \textrm{ \footnotesize  (By Proposition \ref{prop:app1})}.
\end{align}
\item If $\text{sgn}(t) \neq \text{sgn}(x)$,  
\begin{align}
 \big \vert \abs{t}^\kappa \text{sgn}(t) -    \abs{x}^\kappa \text{sgn}(x)   \big \vert  &=   \abs{t}^\kappa +    \abs{x}^\kappa \\
& \leq 2^{1-\kappa} ( \abs{t}^\kappa + \abs{x}^\kappa)  \textrm{  \footnotesize  (By Proposition \ref{prop:app1})} \\
& = 2^{1-\kappa} \abs{t-x}^\kappa.
\end{align}
\end{itemize}
Then, we have
\begin{align}
\eqref{eq:intupper} \leq 2^{1-\kappa}  \int_x^{x+y} (1+\kappa) \abs{t-x}^\kappa   \,dt & = 2^{1-\kappa}  \int_0^{y} (1+\kappa) \abs{t}^\kappa   \,dt \\
& = 2^{1-\kappa}  \abs{y}^{\kappa} \\
& \leq 2 \abs{y}^{\kappa}.
\end{align}
\end{proof}

\begin{proposition}
\label{prop:conjnorm}
Let $r >1$,  $p \in [1, \infty]$,
\begin{equation}
 \widetilde \varphi(x) = \frac{1}{r} \normp{x}^{r} \quad \text{and} \quad \widetilde \varphi^\star(y) = \sup_{x\in\rd} \bbrace{ \inner{y}{x} - \widetilde \varphi(x) }.  
\end{equation}
 Then,  for $1/p + 1/p^\star = 1$, we have 
 \begin{align*}
 \widetilde \varphi^\star(y) =  \frac{r-1}{r} \normps{y}^{\frac{r}{r-1}}.
 \end{align*}
\end{proposition}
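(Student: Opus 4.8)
The plan is to evaluate the supremum defining $\widetilde\varphi^\star$ by separating the \emph{direction} of $x$ from its \emph{magnitude} via H\"older's inequality, thereby reducing the computation to a one-variable calculus problem.

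First I would record the two-sided estimate coming from the duality of $\normp{\cdot}$ and $\normps{\cdot}$. For every $x\in\rd$, H\"older's inequality gives $\inner{y}{x}\le \normps{y}\,\normp{x}$ (valid for all $p\in[1,\infty]$ with $1/p+1/p^\star=1$), and this bound is attained: by the dual-norm characterization $\normps{y}=\sup_{\normp{u}\le1}\inner{y}{u}$ there is a vector $u$ with $\normp{u}=1$ and $\inner{y}{u}=\normps{y}$ (when $y=0$ any unit $u$ works and the proposition is trivial since both sides vanish). Writing $t=\normp{x}\ge0$, the H\"older bound yields $\widetilde\varphi^\star(y)\le \sup_{t\ge0}\{\normps{y}\,t-\tfrac1r t^r\}$, while evaluating the objective at $x=tu$ for arbitrary $t\ge0$ yields the reverse inequality, so
\[
\widetilde\varphi^\star(y)=\sup_{t\ge0}\Big\{\normps{y}\,t-\tfrac1r t^r\Big\}.
\]

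Next I would solve this scalar problem with $a:=\normps{y}\ge0$. Since $r>1$, the map $t\mapsto at-t^r/r$ is concave and differentiable on $[0,\infty)$ with derivative $a-t^{r-1}$, so the maximum is attained at $t^\star=a^{1/(r-1)}$, and substituting gives the value $a\cdot a^{1/(r-1)}-\tfrac1r a^{r/(r-1)}=\big(1-\tfrac1r\big)a^{r/(r-1)}=\tfrac{r-1}{r}\,\normps{y}^{r/(r-1)}$. Combining this with the previous display proves the stated identity.

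I do not anticipate a genuine obstacle here; the only points deserving a line of care are the attainment of equality in H\"older's inequality (needed so that the reduction to the scalar problem is an identity rather than merely an upper bound) and the degenerate cases $y=0$ and $p\in\{1,\infty\}$, all of which follow from the standard dual-norm characterization cited above.
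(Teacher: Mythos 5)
Your proposal is correct and follows essentially the same route as the paper: Hölder's inequality gives the upper bound, attainment of equality in Hölder (via compactness of the dual-norm supremum) gives the matching lower bound, and the remaining one-variable maximization of $t\mapsto \normps{y}\,t - t^r/r$ yields $\tfrac{r-1}{r}\normps{y}^{r/(r-1)}$. No gaps.
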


\begin{proof}
Let us fix an arbitrary $y \in \rd$.  By using H\"{o}lder's inequality,  for any $x \in \rd$, we can write that
\begin{align}
\inner{y}{x}  -  \frac{1}{r} \normp{x}^{r}  &\leq \normps{y} \normp{x} - \frac{1}{r} \normp{x}^{r} \\
&\leq  \frac{r-1}{r} \normps{y}^{\frac{r}{r-1}} \textrm{ \footnotesize (by maximizing the right-hand side)}.
\end{align}
Therefore, we have
\begin{equation}
\widetilde \varphi^\star(y)  \leq  \frac{r-1}{r} \normps{y}^{\frac{r}{r-1}}.
\end{equation}
Moreover, since dual norm can be formulated as a supremum on a compact set,  there exists a $x \in \rd$ such that $ \inner{y}{x}  =  \normps{y} \normp{x} $ and $\normps{y} = \normp{x}^{r-1}$. In this case,
\begin{equation}
\inner{y}{x}  -  \frac{1}{r} \normp{x}^{r} =  \frac{r-1}{r} \normps{y}^{\frac{r}{r-1}}.
\end{equation}
Therefore, we have
\begin{equation}
\widetilde \varphi^\star(y)  \geq  \frac{r-1}{r} \normps{y}^{\frac{r}{r-1}}.
\end{equation}
Consequently,  $\widetilde \varphi^\star(y)  =  \frac{r-1}{r} \normps{y}^{\frac{r}{r-1}}$.
\end{proof}

\begin{proposition}
\label{th:differentiable}
If $\psi: \rd \to \R$ is differentiable and uniformly convex,  $\psi^\star$ is everywhere differentiable.
\end{proposition}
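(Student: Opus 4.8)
The plan is to combine two classical facts of convex analysis: that the conjugate satisfies the correspondence $y\in\partial\psi^\star(z)\iff z\in\partial\psi(y)$, and that a finite convex function is differentiable at an \emph{interior} point of its domain precisely when its subdifferential there is a singleton. Accordingly the argument splits into three pieces: (i) $\op{dom}\psi^\star=\R^d$; (ii) $\grad\psi$ maps $\R^d$ onto $\R^d$; (iii) $\grad\psi$ is injective. Pieces (ii)--(iii) say that $\partial\psi^\star(z)=\{y:\grad\psi(y)=z\}$ is a single point for every $z$, while (i) guarantees every $z$ is interior to $\op{dom}\psi^\star$, so differentiability of $\psi^\star$ at $z$ follows.

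First I would record the growth estimate used throughout: evaluating \eqref{uni:convineq} at $x=0$ gives $\psi(y)\ge \psi(0)+\inner{\grad\psi(0)}{y}+\frac{K}{r}\norm{y}^r_p$ for all $y\in\rd$, and since $r\ge 2$ this is superlinear growth. For (i), fix $z\in\rd$; then $\inner{z}{y}-\psi(y)\le \inner{z-\grad\psi(0)}{y}-\frac{K}{r}\norm{y}^r_p-\psi(0)$, and the right-hand side is bounded above over $y\in\rd$ because the $-\norm{y}^r_p$ term dominates the linear term. Hence $\psi^\star(z)<\infty$ for every $z$, i.e.\ $\op{dom}\psi^\star=\rd$.

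For (ii), fix $z$ and set $\phi_z(y):=\psi(y)-\inner{z}{y}$; the growth estimate makes $\phi_z$ coercive, and being finite and convex it is continuous, so it attains a global minimum at some $y^\star$, at which $0\in\partial\phi_z(y^\star)=\{\grad\psi(y^\star)-z\}$, giving $\grad\psi(y^\star)=z$. For (iii), suppose $\grad\psi(y_1)=\grad\psi(y_2)$; writing \eqref{uni:convineq} once with $(x,y)=(y_1,y_2)$ and once with $(x,y)=(y_2,y_1)$ and adding the two inequalities, the gradient terms cancel and one is left with $0\ge\frac{2K}{r}\norm{y_1-y_2}^r_p$, forcing $y_1=y_2$.

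Finally I would assemble the pieces. Since $\psi$ is finite, convex and differentiable, $\partial\psi(y)=\{\grad\psi(y)\}$, so by the conjugacy correspondence (e.g.\ \cite[Thm.~23.5]{Rock1970}) $\partial\psi^\star(z)=\{y:\grad\psi(y)=z\}$, which by (ii)--(iii) is a single point for each $z\in\rd$. By (i) every $z$ lies in the interior of $\op{dom}\psi^\star$, so the singleton-subdifferential criterion (e.g.\ \cite[Thm.~25.1]{Rock1970}) shows $\psi^\star$ is differentiable at $z$; as $z$ was arbitrary, $\psi^\star$ is everywhere differentiable. I do not expect a serious obstacle; the one point deserving care is establishing $\op{dom}\psi^\star=\rd$ rather than merely that subdifferentials are singletons wherever defined, since that is exactly what upgrades ``singleton subdifferential'' to ``differentiable'' — and it is precisely the superlinear growth coming from uniform convexity that supplies it.
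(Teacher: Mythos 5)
Your proof is correct and follows essentially the same route as the paper's: establish that $\psi^\star$ is finite everywhere via the growth bound from uniform convexity, derive the strict monotonicity inequality $\inner{\grad\psi(y_1)-\grad\psi(y_2)}{y_1-y_2}\ge\frac{2K}{r}\norm{y_1-y_2}^r_p$ to force $\partial\psi^\star(z)$ to be a singleton, and conclude with Rockafellar's conjugacy correspondence and the singleton-subdifferential criterion. The only cosmetic difference is that you prove nonemptiness of $\partial\psi^\star(z)$ directly via coercivity of $y\mapsto\psi(y)-\inner{z}{y}$, whereas the paper cites the general fact that a finite convex function on $\rd$ is everywhere subdifferentiable.
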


\begin{proof}
Let us say that $\psi$ is $(K,r)$-uniformly convex with respect to some $p$-norm.  First, we will show that $\psi^\star$ is subdifferentiable by proving that it is everywhere finite.   For any $y \in \rd$,  we have
\begin{align}
\psi^\star(y) & = \sup_{x \in \rd} \{ \inner{y}{x} - \psi(x)   \} \\
& \leq  \sup_{x \in \rd} \Big \{ \inner{y}{x} - (\psi(y) + \inner{\grad \psi(y)}{x-y} + \frac{K}{r} \normp{y-x}^r)  \Big \} \\
& =    \sup_{x \in \rd}  \Big \{  \inner{y - \grad \psi(y)}{x - y} -  \frac{K}{r} \normp{y-x}^r)   \Big \}  - \psi(y) + \norm{y}_2^2 \\
& =    \sup_{x \in \rd} \Big \{ \inner{y - \grad \psi(y)}{K^{- \frac{1}{r-1}} x} -  \frac{K}{r} \normp{K^{- \frac{1}{r-1}} x}^r)  \Big \}  - \psi(y) + \norm{y}_2^2 \\
& = K^{- \frac{1}{r-1}} \frac{r-1}{r}  \normps{y - \grad \psi(y) }^{\frac{r}{r-1}} - \psi(y) + \norm{y}_2^2.
\end{align}
Therefore, for any $y \in \rd$,  $\psi^\star(y)$ is finite.  By~\cite[Theorem 23.4]{Rock1970},  $\psi^\star$ is a subdifferentiable convex function.

Next, we prove an intermediate result.  Since $\psi$ is differentiable and uniformly convex,  we have  
\begin{equation}
\psi(y) \geq \psi(x) + \inner{\grad \psi(x)}{y-x} + \frac{K}{r} \normp{y-x}^r,  \quad \forall x, y \in \rd,  \label{eq:ucov1}
\end{equation}
and
\begin{equation}
\psi(x) \geq \psi(y) - \inner{\grad \psi(y)}{y - x} + \frac{K}{r} \normp{y - x}^r  \quad \forall x, y \in \rd. \label{eq:ucov2}
\end{equation}
By summing \eqref{eq:ucov1} and \eqref{eq:ucov2},  we can write that
\begin{equation}
\inner{\grad \psi(x) - \grad \psi (y)}{x-y} \geq \frac{2K}{r}  \normp{y-x}^r,  \quad \forall x, y \in \rd.  \label{eq:diff1}
\end{equation}

We show that $\psi^\star$ is differentiable by using proof by contradiction.  Choose an arbitrary $y_0 \in \rd$. Since $\psi^\star$ is subdifferentiable, we know that $\partial \psi(y_0) \neq \emptyset$.  Let us assume that $x_1, x_2 \in \partial \psi^\star(y_0)$,  and $x_1 \neq x_2$.   Since $\psi$ is continuous and convex,  by \cite[Corollary 23.5.1.]{Rock1970}, 
\begin{equation}
\grad \psi(x_1) = \grad \psi(x_2) = y_0.  \label{eq:diff2}
\end{equation}  
However,  \eqref{eq:diff1} contradicts with \eqref{eq:diff2}.  Since $\psi^\star$ is subdifferentiable,  there must be a unique element in $ \partial \psi^\star(y_0).$ Therefore,  by~\cite[Theorem 25.1]{Rock1970},  $\psi^\star$ is differentiable at $y_0$.  Since $y_0$ was chosen arbitrarily,  $\psi^\star$ is everywhere differentiable.
\end{proof}

\section{Proofs for  Section \ref{sec:lb}}
\label{sec:applb}
%
%\noindent\textbf{Notation}~~~For any $p>0, x\in\rd,$ define
%\eq{
%x^{\brkt{p-1}}:=(|x_1|^{p-1}\op{sgn}(x_1),\dots, |x_d|^{p-1}\op{sgn}(x_d))\,.
%} 

\subsection{Auxiliary lemmas }

To prove Theorem~\ref{thm:cvx_lb}, we need the following lemmas.

\begin{lemma}[KL-divergence between Bernoulli distributions]
\label{lem:KL}
The KL divergence between two Bernoulli distributions $\textsc{Ber}(1- \frac{2+\alpha}{4}p)$ and $\textsc{Ber}(1-\frac{2-\alpha}{4}p)$ is bounded by $p,$ where $\alpha\in\{-1,+1\},p\in\Big(0,\frac{1}{2} \Big).$
\end{lemma}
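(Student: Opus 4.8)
The plan is to expand $\kl$ explicitly and exploit the sign structure of its two summands, handling the two values of $\alpha$ separately. Write $s:=\tfrac{2+\alpha}{4}\in\{\tfrac14,\tfrac34\}$, so the distributions are $P=\textsc{Ber}(1-sp)$ and $Q=\textsc{Ber}(1-(1-s)p)$, and
\[
\kl(P\,\|\,Q)=(1-sp)\log\frac{1-sp}{1-(1-s)p}+sp\log\frac{s}{1-s}=:T_1+T_2 .
\]
Since $sp<\tfrac38<1$, all four probabilities lie in $(0,1)$ and the expression is well defined. The term $T_2=sp\log\frac{s}{1-s}$ equals $\tfrac34 p\log 3$ when $\alpha=+1$ and $-\tfrac14 p\log 3$ when $\alpha=-1$; correspondingly $T_1$ has the opposite sign, since the argument $\frac{1-sp}{1-(1-s)p}$ of its logarithm is $<1$ when $s=\tfrac34$ and $>1$ when $s=\tfrac14$. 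The key point is that in each of the two cases one of the two summands is negative, so it suffices to bound the other one by $p$.

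In the case $\alpha=+1$ (that is, $s=\tfrac34$) we have $T_1<0$, hence $\kl(P\,\|\,Q)<T_2=\tfrac34 p\log 3<p$, using $\tfrac34\log 3<1$. In the case $\alpha=-1$ (that is, $s=\tfrac14$) we have $T_2<0$, hence $\kl(P\,\|\,Q)<T_1=(1-\tfrac p4)\log\frac{1-p/4}{1-3p/4}$. Writing $\frac{1-p/4}{1-3p/4}=1+\frac{p/2}{1-3p/4}$ and applying $\log(1+x)\le x$ gives $T_1\le(1-\tfrac p4)\frac{p/2}{1-3p/4}$, and since $1-\tfrac p4<1$ and $1-\tfrac{3p}{4}>\tfrac58$ for $p\in(0,\tfrac12)$ we obtain $T_1<\tfrac45 p<p$. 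This establishes $\kl\le p$ in both cases.

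The only genuine subtlety is the asymmetry of $\kl$: the crude quadratic bound $\kl(\textsc{Ber}(1-a)\,\|\,\textsc{Ber}(1-b))\le\frac{(a-b)^2}{b(1-b)}$, obtained by applying $\log t\le t-1$ to both summands, is good enough when $b=\tfrac{3p}{4}$ (it yields $\tfrac{8p}{15}$), but it only gives $\tfrac{8p}{7}>p$ when $b=\tfrac p4$. So in the latter case one must not use that bound; instead one discards the negative summand $T_2$ first and estimates the remaining positive summand $T_1$ via $\log(1+x)\le x$, as above. Everything else is elementary algebra together with the constraint $p<\tfrac12$.
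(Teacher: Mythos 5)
Your proof is correct, and it splits the same way the paper's does: one inequality per value of $\alpha$, with the first case ($\alpha=+1$, i.e.\ $\kl(\textsc{Ber}(1-\tfrac34 p)\,\|\,\textsc{Ber}(1-\tfrac14 p))$) handled identically in both — drop the negative summand and use $\tfrac34\log 3<1$. The difference is in the reverse direction. The paper sets $h(p)=p-\kl(\mpr^-\|\mpr^+)$ and runs a calculus argument: it computes $h'$ and $h''$, observes $h''<0$ on $(0,\tfrac12]$ together with $h'(\tfrac12)\ge 0.5$ and $h(0)=0$, and concludes $h\ge 0$. You instead again discard the (now different) negative summand $T_2=\tfrac{p}{4}\log\tfrac13$ and bound the remaining positive term via $\log(1+x)\le x$ and $1-\tfrac{3p}{4}>\tfrac58$, getting the cleaner chain $T_1<\tfrac45 p<p$. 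Your route is more elementary — no second derivatives, and every constant is explicit — while the paper's monotonicity argument is more mechanical and would adapt with less thought if the Bernoulli parameters were perturbed. Your closing observation about why the symmetric quadratic bound $\kl\le (a-b)^2/(b(1-b))$ suffices only for one of the two orderings is accurate (it gives $\tfrac{8p}{15}$ versus $\tfrac{8p}{7}$) and correctly identifies the only real subtlety in the lemma.
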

\begin{proof}[Proof of Lemma~\ref{lem:KL}]
Denote the Bernoulli distributions~$\textsc{Ber}(1- \frac{3}{4}p)$ and $\textsc{Ber}(1-\frac{1}{4}p)$ by $\mpr^+$ and $\mpr^-$, respectively.
By the definition  KL divergence, it holds that
\begin{align*}
\kl(\mpr^+ || \mpr^-) 
=& \Big(1- \frac{3}{4}p \Big)\log \Big(\frac{1- \frac{3}{4}p}{1- \frac{1}{4}p}\Big) + \frac{3}{4}p \log\Big( \frac{ \frac{3}{4}p}{ \frac{1}{4}p}\Big)
\le  \frac{3}{4}p \log 3\le p
\end{align*}
We now prove 
\begin{align*}
\kl( \mpr^- ||\mpr^+) 
=& (1- \frac{1}{4}p)\log \Big(\frac{1- \frac{1}{4}p}{1- \frac{3}{4}p} \Big)+ \frac{1}{4}p \log \Big(\frac{ \frac{1}{4}p}{ \frac{3}{4}p}\Big)
%\le  ( 1- \frac{1}{4}p )p + \frac{1}{4}p\log 1 
\le   p\,.
\end{align*}
Define the function $h(p)=p-\frac{1}{4}p\log(\frac{1}{3})-(1-\frac{1}{4}p)\log(\frac{4-p}{4-3p}).$
Then, we obtain
\eq{
\grad h(p) = 1-\frac{1}{4}\log\Big(\frac{1}{3}\Big)+\frac{1}{4}\log\Big(\frac{4-p}{4-3p}\Big) -\frac{2}{4-3p}\,,
}
and
\eq{
\grad^2 h(p) =
-\frac{16}{(4-p)(4-3p)^2}\,.
}
When $p\in(0,\frac{1}{2}],$ it follows that
\begin{align*}
\grad^2 h(p)  <0,\quad \grad h\big(\frac{1}{2}\big)\ge 0.5 \quad\text{and}\quad h(0)&=0\,,
\end{align*}
which implies $h(p)\ge 0,$ that is
\eq{
p\ge \frac{1}{4}p\log\Big(\frac{1}{3}\Big)+\Big(1-\frac{1}{4}p\Big)\log\Big(\frac{4-p}{4-3p}\Big)\,.
}
\end{proof}

\begin{lemma}[Lower bound~\ref{thm:cvx_lb1} with $d\ge 2$] 
\label{lem:lb_cointoss}
Suppose a vector~$\alpha^*=(\alpha^*_1,\dots,\alpha^*_d)^\top$ is chosen uniformly at random from the set $\mathcal{V},$ where $\mathcal{V}$ is a subset of the hypercube~$\{-1,+1\}^d$ such that $\ham(\alpha,\tilde{\alpha})=\sum_{i=1}^d \1\{\alpha_i\neq\tilde\alpha_i\}\ge \frac{d}{4}$ for any $\alpha,\tilde{\alpha}\in\mathcal{V}.$  
Given the vector~$\alpha^*,$ $\kappa\in(0,1],$ and $\delta\in(0,\frac{1}{8}],$ set the parameter~
$$
\tilde\alpha^*=\Big(1-\frac{2+\alpha^*_1}{4}(4\delta)^{\frac{\kappa+1}{\kappa}},\dots,1-\frac{2+\alpha_d^*}{4}(4\delta)^{\frac{\kappa+1}{\kappa}} \Big)^\top.
$$
Suppose the oracle~$\phi$ tosses a set of $d$ coins with bias~$\tilde\alpha^*$  a total of $T$ times, and the outcome of only one coin chosen uniformly at random is given at each round. 
When $d\ge 2,$ it holds for any holds for any estimator~$\hat{\alpha}\in\mathcal{V}$ that
\eq{
\mpr(\hat\alpha\neq \alpha^*)\ge 1-\frac{(4\delta)^{\frac{\kappa+1}{\kappa}}T+\log 2}{d/8}\,.
}
Here, the probability is taken over the randomness of $\alpha^*$ and $\phi.$
%When $d=1,$  it holds for any estimator~$\hat{\alpha}\in\mathcal{V}$ that
%\eq{
% \mpr_{}(\hat\alpha\neq \alpha^*)\ge 1-\sqrt{\frac{(4\delta)^{\frac{\kappa+1}{\kappa}}T}{2}}\,.
%}
\end{lemma}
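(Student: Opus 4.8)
The plan is to run a standard Fano argument in which the heavy-tailed exponent enters only through the coin bias $p := (4\delta)^{(\kappa+1)/\kappa}$. Write $\mathcal{O} = \big((J_1,Y_1),\dots,(J_T,Y_T)\big)$ for the observed sequence of (revealed index, outcome) pairs, and let $\mpr_\alpha$ denote its distribution when $\alpha^*$ is frozen at a particular $\alpha\in\mathcal{V}$. Since $\alpha^*$ is drawn uniformly from $\mathcal{V}$ and any two elements of $\mathcal{V}$ are at Hamming distance at least $d/4$, Fano's inequality gives, for every estimator $\hat\alpha$ valued in $\mathcal{V}$,
\[
\mpr(\hat\alpha\neq\alpha^*)\ \ge\ 1-\frac{I(\alpha^*;\mathcal{O})+\log 2}{\log|\mathcal{V}|}.
\]
So the two ingredients I need are a lower bound on $\log|\mathcal{V}|$ and an upper bound on the mutual information $I(\alpha^*;\mathcal{O})$. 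For the former I would invoke the Gilbert--Varshamov bound: for $d\ge 2$ one can choose $\mathcal{V}\subseteq\{-1,+1\}^d$ with pairwise Hamming distance $\ge d/4$ and $\log|\mathcal{V}|\ge d/8$, which is the packing implicitly used in the statement.

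For the mutual information, I would first pass to a pairwise KL bound via joint convexity of the KL divergence, $I(\alpha^*;\mathcal{O})\le |\mathcal{V}|^{-2}\sum_{\alpha,\tilde\alpha}\kl(\mpr_\alpha\|\mpr_{\tilde\alpha})\le\max_{\alpha,\tilde\alpha}\kl(\mpr_\alpha\|\mpr_{\tilde\alpha})$. The $T$ rounds are i.i.d.\ (each round independently picks $J\sim\mathrm{Unif}\{1,\dots,d\}$ and then reveals $Y\sim\textsc{Ber}(\tilde\alpha^*_J)$), and the marginal law of the revealed index $J$ does not depend on $\alpha^*$, so tensorization together with the chain rule gives
\[
\kl(\mpr_\alpha\|\mpr_{\tilde\alpha})=\frac{T}{d}\sum_{j=1}^d \kl\!\Big(\textsc{Ber}\big(1-\tfrac{2+\alpha_j}{4}p\big)\,\Big\|\,\textsc{Ber}\big(1-\tfrac{2+\tilde\alpha_j}{4}p\big)\Big).
\]
Each summand is $0$ when $\alpha_j=\tilde\alpha_j$, and when $\alpha_j\neq\tilde\alpha_j$ the two Bernoulli parameters are precisely $1-\tfrac34 p$ and $1-\tfrac14 p$; since $\delta\le\tfrac18$ and $\tfrac{\kappa+1}{\kappa}\ge 2$ we have $p=(4\delta)^{(\kappa+1)/\kappa}\le\tfrac14<\tfrac12$, so Lemma~\ref{lem:KL} applies and bounds that summand by $p$. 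Hence $\kl(\mpr_\alpha\|\mpr_{\tilde\alpha})\le \tfrac{T}{d}\,\ham(\alpha,\tilde\alpha)\,p\le Tp$, using $\ham(\alpha,\tilde\alpha)\le d$.

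Putting the pieces together, $I(\alpha^*;\mathcal{O})\le (4\delta)^{(\kappa+1)/\kappa}T$, and substituting this and $\log|\mathcal{V}|\ge d/8$ into the Fano bound yields the claim. The main thing to be careful about is the information-theoretic bookkeeping in the middle step: picking the right mixture-versus-pairwise KL inequality, making sure the common marginal of the revealed index contributes nothing to $\kl(\mpr_\alpha\|\mpr_{\tilde\alpha})$, and checking $p<\tfrac12$ so that Lemma~\ref{lem:KL} is in force. A secondary point, if one wants to allow the revealed index to be selected adaptively by the optimization method rather than uniformly at random, is that the information bound should then be carried out round-by-round through $I(\alpha^*;\mathcal{O})=\sum_t I(\alpha^*;(J_t,Y_t)\mid\mathcal{O}_{<t})$; the per-round estimate is unchanged, so the final bound is the same.
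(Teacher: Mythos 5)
Your proposal is correct and follows essentially the same route as the paper's proof: Fano's inequality over a Varshamov--Gilbert packing with $\log|\mathcal{V}|\ge d/8$, tensorization of the mutual information across the $T$ i.i.d.\ rounds with the revealed index contributing nothing, and the Bernoulli KL bound of Lemma~\ref{lem:KL} (valid since $p=(4\delta)^{(\kappa+1)/\kappa}\le 1/4$) giving $I(\alpha^*;\mathcal{O})\le (4\delta)^{\frac{\kappa+1}{\kappa}}T$. The only cosmetic difference is that you control $I$ by a maximum of pairwise KL divergences of the full observation law, whereas the paper bounds the single-round conditional mutual information by a mixture of pairwise KLs via convexity; both yield the same per-round estimate and hence the same conclusion.
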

\begin{proof}[Proof of Lemma~\ref{lem:lb_cointoss}]
%Denote the Bernoulli distribution for $i$-th coin by $P_{\tilde\alpha_i}.$ 
Let $U_t\in\{1,\dots,d\}$ be the variable indicating the~$U_t$-th coin revealed at time $t,$ and let $X_t\in\{0,1\}$ denote its outcome.
By~\cite[Sec 15.3.2, Lemma 4]{pollard2012festschrift} and~\cite[Theorem 1]{scarlett2019introductory}, if the parameter $\alpha^*$ is uniform on $\mathcal{V},$ it holds for any estimator $\hat\alpha\in\mathcal{V}$ that
\eq{
\mpr(\hat\alpha\neq \alpha^*)\ge 1-\frac{I\big(\{U_t,X_t\}_{t=1}^T;\alpha^*\big)+\log 2}{\log |\mathcal{V}|}\,,
}
where $I(\{U_t,X_t\}_{t=1}^T;\alpha^*)$ denotes the mutual information between the data sequence~$\{U_t,X_t\}_{t=1}^T$ and $\alpha^*$.
By the Varshamov-Gilbert bound~\cite[Lemma 4.7]{massart2007concentration}, 
there exists such a packing set~$\mathcal{V}\subseteq \{-1,+1\}^d$ with $|\mathcal{V}|\ge \exp(\frac{d}{8})$ satisfies
$\ham(\alpha,\tilde{\alpha})=\sum_{i=1}^d \1\{\alpha_i\neq\tilde\alpha_i\}\ge \frac{d}{4}$ for any $\alpha,\tilde{\alpha}\in\mathcal{V}.$  
It suffice to show that $I(\{U_t,X_t\}_{t=1}^T; \alpha^*) \le (4\delta)^{\frac{\kappa+1}{\kappa}} T.$
By the independent and identically distributed the sampling, we have
\eq{
I(\{U_t,X_t\}_{t=1}^T; \alpha^*)=\sum_{t=1}^T I\big( (U_1,X_1); \alpha^*\big)=TI\big( (U_1,X_1); \alpha^*\big)\,.
}
By chain rule of mutual information and the sampling scheme, it holds that 
\eq{
I\big( (U_1,X_1); \alpha^*\big)=I(X_1;\alpha^*|U_1)+I(\alpha^*;U_1)\,.
}
Note that $U_1$ is sampled independent of $\alpha^*,$  this implies $I(\alpha^*;U_1)=0.$
It remains to show that $I(X_1;\alpha^*|U_1)\le (4\delta)^{\frac{\kappa+1}{\kappa}}.$
By definition of the conditional mutual information, and the factorization~$\mpr_{X_1,\alpha^*|U_1}=\mpr_{\alpha^*|U_1}\mpr_{X_1|\alpha^*,U_1},$ it holds that 
\eq{
I(X_1;\alpha^*|U_1)= \E_{U_1}\big[ \kl(\mpr_{X_1|\alpha^*,U_1}||\mpr_{X_1|U_1}) \big]\,.
}
Assume a random vector $\alpha$ is uniform on $\mathcal{V},$ by the convexity of KL divergence, it then follows that
\eq{
 \kl(\mpr_{X_1|\alpha^*,U_1}||\mpr_{X_1|U_1}) \le \frac{1}{\mathcal{|V|}}\sum_{\alpha\in\mathcal{V}}
\kl(\mpr_{X_1|\alpha^*,U_1}||\mpr_{X_1|\alpha,U_1}) \,.
 }
 For any pair $\alpha^*,\alpha\in\mathcal{V},$ the KL divergence $\kl(\mpr_{X_1|\alpha^*,U_1}||\mpr_{X_1|\alpha,U_1})$ 
 can be at most the KL divergence between a pair of Bernoulli variables with parameters~
 $$
 1-\frac{2+\alpha_i}{4}(4\delta)^{\frac{\kappa+1}{\kappa}}, \quad \text{and}\quad 1-\frac{2+\alpha_j}{4}(4\delta)^{\frac{\kappa+1}{\kappa}},\quad\forall \alpha_i,\alpha_j\in\mathcal{V}.
 $$
By Lemma~\ref{lem:KL} ( setting $p=(4\delta)^{\frac{\kappa+1}{\kappa}}),$ we have  $\kl(\mpr_{X_1|\alpha^*,U_1}||\mpr_{X_1|\alpha,U_1})\le (4\delta)^{\frac{\kappa+1}{\kappa}}.$ 
This complete the proof.

\end{proof}

\begin{lemma}[Lower bound~\ref{thm:cvx_lb1} with $d=1$] 
\label{lem:lb_cointoss1}
Given a constant~$\kappa\in(0,1]$ and a parameter $\alpha^*\in\mathcal{V},$ where $\mathcal{V}=\{-1,+1\},$ the oracle~$\phi$ generates the data sequence $\{X_t\}_{t=1}^T$ where $X_t$ are i.i.d random variables following from the Bernoulli distribution with parameter $1-\frac{2+\alpha^*}{4}(4\delta)^{\frac{\kappa+1}{\kappa}}.$
 Then, for any $\delta\in(0,\frac{1}{8}],$  it holds for any estimator~$\hat{\alpha}\in\mathcal{V}$ based on the data sequence  $\{X_t\}_{t=1}^T$ that
\eq{
 \max_{\alpha^*\in\mathcal{V}}\mpr(\hat\alpha\neq \alpha^*)\ge  \frac{1}{2}\Bigg(1-\sqrt{\frac{(4\delta)^{\frac{\kappa+1}{\kappa}}T}{2}} \Bigg)\,.
}
%Here the probability is taken over the randomness of $\alpha^*$ and $\phi.$
\end{lemma}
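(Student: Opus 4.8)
\textbf{Proof proposal for Lemma~\ref{lem:lb_cointoss1}.} The plan is to run Le Cam's two-point method. Write $p := (4\delta)^{\frac{\kappa+1}{\kappa}}$, and note that for $\delta \in (0,\tfrac18]$ we have $p \in (0,\tfrac12)$. The parameter $\alpha^* \in \mathcal{V} = \{-1,+1\}$ indexes exactly two candidate laws for each observation $X_t$: the Bernoulli distribution $\mpr^+ := \textsc{Ber}(1-\tfrac34 p)$ when $\alpha^* = +1$, and $\mpr^- := \textsc{Ber}(1-\tfrac14 p)$ when $\alpha^* = -1$. Since the $X_t$ are i.i.d., the law of the whole data sequence $\{X_t\}_{t=1}^T$ under $\alpha^* = \pm 1$ is the product measure $(\mpr^\pm)^{\otimes T}$, and any estimator $\hat\alpha \in \mathcal{V}$ is a (possibly randomized) test between these two product measures. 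The standard reduction from estimation to testing then gives
\[
\max_{\alpha^* \in \mathcal{V}} \mpr(\hat\alpha \neq \alpha^*) \;\ge\; \frac12\Big(1 - \normtv{(\mpr^+)^{\otimes T} - (\mpr^-)^{\otimes T}}\Big).
\]

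Next I would bound the total variation distance. By Pinsker's inequality together with the tensorization identity for the KL divergence,
\[
\normtv{(\mpr^+)^{\otimes T} - (\mpr^-)^{\otimes T}} \;\le\; \sqrt{\tfrac12\, \kl\big((\mpr^+)^{\otimes T} \,\|\, (\mpr^-)^{\otimes T}\big)} \;=\; \sqrt{\tfrac{T}{2}\, \kl(\mpr^+ \| \mpr^-)}.
\]
Applying Lemma~\ref{lem:KL} with this choice of $p$ yields $\kl(\mpr^+ \| \mpr^-) \le p = (4\delta)^{\frac{\kappa+1}{\kappa}}$, and substituting back produces exactly the claimed bound
\[
\max_{\alpha^* \in \mathcal{V}} \mpr(\hat\alpha \neq \alpha^*) \;\ge\; \frac12\Bigg(1 - \sqrt{\frac{(4\delta)^{\frac{\kappa+1}{\kappa}}T}{2}}\Bigg).
\]

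This argument is essentially routine, so there is no real obstacle; the only points worth a moment of care are bookkeeping. First, the direction of the KL divergence: Pinsker requires $\kl(\mpr^+\|\mpr^-)$, whereas some formulations of Le Cam would instead surface $\kl(\mpr^-\|\mpr^+)$, but Lemma~\ref{lem:KL} is stated symmetrically (with $\alpha \in \{-1,+1\}$) and bounds both orderings by $p$, so either works. Second, one should note that the resulting bound is only informative when $T \le 2(4\delta)^{-\frac{\kappa+1}{\kappa}}$; outside that regime the right-hand side is negative and the inequality holds trivially. This is precisely the regime used later when this $d=1$ bound is combined with the diameter of $\S$ to recover the minimax rate in Theorem~\ref{thm:cvx_lb}.
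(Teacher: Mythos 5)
Your proof is correct and follows essentially the same route as the paper's: a two-point Le Cam reduction combined with Pinsker's inequality, tensorization of the KL divergence, and the bound $\kl(\mpr^+\|\mpr^-)\le p$ from Lemma~\ref{lem:KL}. The paper merely phrases the Le Cam step in terms of the expected absolute error of the induced parameter $1-\frac{2+\alpha}{4}p$ so it can cite the proof of Lemma~4 of \cite{agarwal2012information}, whereas you apply the testing bound directly; the two are equivalent and yield the identical constant.
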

\begin{proof}[Proof of Lemma~\ref{lem:lb_cointoss1}]
Set $p=(4\delta)^{\frac{\kappa+1}{\kappa}}.$
Define $\hat\alpha':=1-\frac{2+\hat\alpha}{4}p, {\alpha^*}':=1-\frac{2+\alpha^*}{4}p.$
It then follows that $\mpr(\hat\alpha\neq\alpha^*)=\mpr(\hat\alpha'\neq {\alpha^*}').$
Note that 
\eq{
 \E_{}[|\hat\alpha'- {\alpha^*}'|]=\frac{1}{2}p \mpr_{}(\hat\alpha'\neq {\alpha^*}')\,.
}
Based on the proof of Lemma 4 in~\cite{agarwal2012information}, we have
\eq{
\max_{{\alpha^*}' \in\{1-\frac{1}{4}p,1-\frac{3}{4}p\}} \E_{}[|\hat\alpha'- {\alpha^*}' |]
%\ge& \frac{1}{4}(4\delta)^{\frac{\kappa+1}{\kappa}} \Big( 1-\normtv{\mpr^{+1}-\mpr^{-1}} \Big)\\
\ge& \frac{1}{4}p \Bigg( 1-\frac{1}{2}\sqrt{2T\kl\Big(\mpr^+ || \mpr^- \Big) }\Bigg)\,,
}
where $\mpr^+, \mpr^-$ denote the Bernoulli distributions~$\textsc{ber}(1- \frac{3}{4}p)$ and $\textsc{ber}(1- \frac{1}{4}p)$, respectively.
Combining these two displays with Lemma~\ref{lem:KL} gives
\eq{
\max_{{\alpha^*}' \in\{1-\frac{1}{4}p,1-\frac{3}{4}p\}} \mpr(\hat\alpha'\neq {\alpha^*}' )&= \frac{\max_{{\alpha^*}' \in\{1-\frac{1}{4}p,1-\frac{3}{4}p\}}\E_{}[|\hat\alpha'-  {\alpha^*}' |] }{\frac{1}{2}p}\\
&\ge \frac{1}{2}\Bigg(1-\sqrt{\frac{(4\delta)^{\frac{\kappa+1}{\kappa}}T}{2}} \Bigg)\,.
}
as desired.

\end{proof}

\begin{lemma}[Lower bound~\ref{thm:cvx_lb2} with $d\ge 2$] 
\label{lem:lb_cointoss2}
Suppose the vector~$\alpha^*=(\alpha^*_1,\dots,\alpha^*_d)^\top$ is chosen uniformly at random from the set $\mathcal{V},$ where $\mathcal{V}$ is a subset of the hypercube~$\{-1,+1\}^d$ such that $\ham(\alpha,\tilde{\alpha})=\sum_{i=1}^d \1\{\alpha_i\neq\tilde\alpha_i\}\ge \frac{d}{4}$ for any $\alpha,\tilde{\alpha}\in\mathcal{V}.$  
Set the parameter~
$$
\tilde\alpha^*=(\frac{1}{2}+\alpha_1^*\delta,\dots,\frac{1}{2}+\alpha_d^*\delta)^\top.
$$ 
Given the parameter~$\tilde\alpha^*$, a constant $\delta\in(0,\frac{1}{100}],$ and the time horizon~$T,$ at each round $t=1,\dots,T,$ 
 the oracle~$\phi$ flips a coin with bias~$\frac{1}{T}$  (the probability of the coin landing heads up is $\frac{1}{T}$) at first. 
If the coin has a head, the oracle tosses set of $d$ coins with bias~$\tilde\alpha^*$, and then reveal the outcomes of the $d$ coins. 
{If the coin has a tail, the oracle reveals nothing.}
When $d\ge 2,$ it holds for any estimator~$\hat{\alpha}\in\mathcal{V}$ that
\eq{
\mpr(\hat\alpha\neq \alpha^*)\ge 1-\frac{16d\delta^2+\log 2}{d/8}\,.
}
Here, the probability is taken over the randomness of $\alpha^*$ and $\phi.$
%When $d=1,$  it holds for any estimator~$\hat{\alpha}\in\mathcal{V}$ that
%\eq{
% \mpr_{}(\hat\alpha\neq \alpha^*)\ge 1-\sqrt{\frac{(4\delta)^{\frac{\kappa+1}{\kappa}}T}{2}}\,.
%}
\end{lemma}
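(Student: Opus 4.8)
The plan is to follow the blueprint of Lemma~\ref{lem:lb_cointoss} almost verbatim: reduce the estimation problem to a Fano-type bound, then control the mutual information between the revealed data and the hidden vector $\alpha^*$. By the same Fano-type inequality used there (\cite[Sec.~15.3.2, Lemma~4]{pollard2012festschrift} together with \cite[Theorem~1]{scarlett2019introductory}), for $\alpha^*$ uniform on $\mathcal{V}$ and any estimator $\hat\alpha\in\mathcal{V}$,
\[
\mpr(\hat\alpha\ne\alpha^*)\ \ge\ 1-\frac{I\big(\{(Z_t,Z_tX_t)\}_{t=1}^T;\alpha^*\big)+\log 2}{\log|\mathcal{V}|},
\]
where $(Z_t,Z_tX_t)$ denotes the round-$t$ observation described below. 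The Varshamov--Gilbert bound \cite[Lemma~4.7]{massart2007concentration} lets us take $\mathcal{V}$ with $|\mathcal{V}|\ge\exp(d/8)$ and pairwise Hamming distance at least $d/4$ (this is where $d\ge 2$ matters), so it suffices to show $I\big(\{(Z_t,Z_tX_t)\}_{t=1}^T;\alpha^*\big)\le 16\,d\,\delta^2$.

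The one genuinely new element relative to Lemma~\ref{lem:lb_cointoss} is the $1/T$-biased ``gate'' coin. Write $Z_t\sim\textsc{Ber}(1/T)$ for the gate at round $t$, independent across rounds and of $\alpha^*$; when $Z_t=1$ the oracle reveals the tuple $X_t\in\{0,1\}^d$ of outcomes of the $d$ coins with bias $\tilde\alpha^*$, and when $Z_t=0$ it reveals nothing, so the round-$t$ observation is $(Z_t,Z_tX_t)$. Conditionally on $\alpha^*$ the rounds are i.i.d., so by tensorization of mutual information $I\big(\{(Z_t,Z_tX_t)\}_{t=1}^T;\alpha^*\big)\le T\cdot I\big((Z_1,Z_1X_1);\alpha^*\big)$. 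I would then apply the chain rule, $I\big((Z_1,Z_1X_1);\alpha^*\big)=I(Z_1;\alpha^*)+I(X_1;\alpha^*\mid Z_1)$: the first term vanishes because $Z_1$ is independent of $\alpha^*$, and since the observation is degenerate on $\{Z_1=0\}$ one has $I(X_1;\alpha^*\mid Z_1)=\tfrac{1}{T}\,I(X_1;\alpha^*\mid Z_1=1)$. Hence the whole mutual information equals $I(X_1;\alpha^*\mid Z_1=1)$, which does not depend on $T$; this is exactly the purpose of the gate coin, and it is what makes the resulting lower bound on $\mpr(\hat\alpha\ne\alpha^*)$, and the optimization rate eventually derived from it, independent of the horizon $T$.

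It then remains to bound $I(X_1;\alpha^*\mid Z_1=1)$. Conditionally on $\{\alpha^*=\alpha,\ Z_1=1\}$ the vector $X_1$ has the law $Q_\alpha$, the product of the independent Bernoulli laws $\textsc{Ber}(\tfrac{1}{2}+\alpha_i\delta)$, $i=1,\dots,d$. Using convexity of the KL divergence in its second argument, exactly as in Lemma~\ref{lem:lb_cointoss}, $I(X_1;\alpha^*\mid Z_1=1)\le\max_{\alpha,\alpha'\in\mathcal{V}}\kl(Q_\alpha\|Q_{\alpha'})$, and by independence across coordinates
\[
\kl(Q_\alpha\|Q_{\alpha'})=\sum_{i\,:\,\alpha_i\ne\alpha'_i}\kl\big(\textsc{Ber}(\tfrac{1}{2}+\delta)\,\|\,\textsc{Ber}(\tfrac{1}{2}-\delta)\big)\ \le\ d\cdot\kl\big(\textsc{Ber}(\tfrac{1}{2}+\delta)\,\|\,\textsc{Ber}(\tfrac{1}{2}-\delta)\big).
\]
A direct computation gives $\kl\big(\textsc{Ber}(\tfrac{1}{2}+\delta)\,\|\,\textsc{Ber}(\tfrac{1}{2}-\delta)\big)=2\delta\log\frac{1+2\delta}{1-2\delta}$, and for $\delta\le\tfrac{1}{100}$, bounding $\log\frac{1+2\delta}{1-2\delta}$ by its power series (equivalently by $\tfrac{4\delta}{1-4\delta^2}$) shows this is at most $16\delta^2$. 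Therefore $I(X_1;\alpha^*\mid Z_1=1)\le 16\,d\,\delta^2$, and substituting into the Fano bound with $\log|\mathcal{V}|\ge d/8$ gives the claim.

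I expect essentially all the effort to be in the bookkeeping of the conditional mutual information: setting up the chain rule around the gate coin, justifying $I(Z_1;\alpha^*)=0$ and the collapse $I(X_1;\alpha^*\mid Z_1)=\tfrac{1}{T}I(X_1;\alpha^*\mid Z_1=1)$, verifying that the rounds are conditionally i.i.d.\ given $\alpha^*$ so that tensorization is legitimate, and tracking the numerical constant in the Bernoulli-KL estimate so that it closes at $16$ under $\delta\le\tfrac{1}{100}$. None of these is deep, but the constant-tracking is where an off-by-a-factor is most likely to creep in, so I would carry it out carefully.
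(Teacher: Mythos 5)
Your proposal is correct and follows essentially the same route as the paper's proof: the Fano-type reduction, the Varshamov--Gilbert packing, tensorization plus the chain rule around the $1/T$-gate coin, and convexity of the KL divergence to reduce to a pairwise product-Bernoulli KL bound. The only (harmless) difference is that you compute $\kl\big(\textsc{Ber}(\tfrac12+\delta)\,\|\,\textsc{Ber}(\tfrac12-\delta)\big)=2\delta\log\tfrac{1+2\delta}{1-2\delta}\le 16\delta^2$ directly, whereas the paper cites Lemma~3 of \cite{agarwal2012information} for the same $16d\delta^2$ estimate.
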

\begin{proof}[Proof of Lemma~\ref{lem:lb_cointoss2}]
%Denote the Bernoulli distribution for $i$-th coin by $P_{\tilde\alpha_i}.$ 
Let $U_t\in\{0,1\}$ following the Bernoulli distribution with parameter~$\frac{1}{T}$ be the random variable indicating whether the oracle reveals the information.
Let $X_t:=(X_{t,1},\dots,X_{t,d})^\top$ denote the outcome of oracle's coin toss at time $t$ with the components $X_{t,i}\in\{0,1\}$ denote the outcome for coordinate $i.$
When $U_t=0,$ set $X_{t,i}=-1, i=1,\dots,d.$
By~\cite[Sec 15.3.2, Lemma 4]{pollard2012festschrift} and~\cite[Theorem 1]{scarlett2019introductory}, if the parameter $\alpha^*$ is uniform on $\mathcal{V},$ it holds for any estimator $\hat\alpha\in\mathcal{V}$ that
\eq{
\mpr(\hat\alpha\neq \alpha^*)\ge 1-\frac{I(\{U_t,X_t\}_{t=1}^T;\alpha^*)+\log 2}{\log |\mathcal{V}|}\,,
}
where $I(\{U_t,X_t\}_{t=1}^T;\alpha^*)$ denotes the mutual information between the data sequence~$\{U_t,X_t\}_{t=1}^T$ and $\alpha^*$.
By the Varshamov-Gilbert bound, 
there exists such a packing set~$\mathcal{V}\subseteq \{-1,+1\}^d$ with $|\mathcal{V}|\ge \exp(\frac{d}{8})$ satisfies
$\ham(\alpha,\tilde{\alpha})=\sum_{i=1}^d \1\{\alpha_i\neq\tilde\alpha_i\}\ge \frac{d}{4}$ for any $\alpha,\tilde{\alpha}\in\mathcal{V}.$  
It suffice to show that $I(\{U_t,X_t\}_{t=1}^T; \alpha^*) \le 16d\delta^2.$
By the independent and identically distributed the sampling, we have
\eq{
I(\{U_t,X_t\}_{t=1}^T; \alpha^*)=\sum_{t=1}^T I\big( (U_1,X_1); \alpha^*\big)=TI\big( (U_1,X_1); \alpha^*\big)\,.
}
By chain rule of mutual information and the sampling scheme, it holds that 
\eq{
I\big( (U_1,X_1); \alpha^*\big)=I(X_1;\alpha^*|U_1)+I(\alpha^*;U_1)\,.
}
Note that $U_1$ is sampled independent of $\alpha^*,$  this implies $I(\alpha^*;U_1)=0.$
It remains to show that $I(X_1;\alpha^*|U_1)\le \frac{1}{T}16d\delta^2.$
By definition of the conditional mutual information, and the factorization~$\mpr_{X_1,\alpha^*|U_1}=\mpr_{\alpha^*|U_1}\mpr_{X_1|\alpha^*,U_1},$ it holds that 
\eq{
I(X_1;\alpha^*|U_1)
&= \E_{U_1}\big[ \kl(\mpr_{X_1|\alpha^*,U_1}||\mpr_{X_1|U_1}) \big]\,.
}
Assume a random vector $\alpha$ is uniform on $\mathcal{V},$ by the convexity of KL divergence, it then follows that
\eq{
 \kl(\mpr_{X_1|\alpha^*,U_1}||\mpr_{X_1|U_1}) \le \frac{1}{\mathcal{|V|}}\sum_{\alpha\in\mathcal{V}}
\kl(\mpr_{X_1|\alpha^*,U_1}||\mpr_{X_1|\alpha,U_1}) \,.
 }
 Combing these two display with fact that $U_1\sim \textsc{ber}(\frac{1}{T})$ gives
 \eq{
 I(X_1;\alpha^*|U_1)
 \le & \frac{1}{\mathcal{|V|}}\sum_{\alpha\in\mathcal{V}} \E_{U_1}   \kl(\mpr_{X_1|\alpha^*,U_1}||\mpr_{X_1|\alpha,U_1}) \big] \\
  \le & \frac{1}{T}   \frac{1}{\mathcal{|V|}}\sum_{\alpha\in\mathcal{V}}  \kl(\mpr_{X_1|\alpha^*,U_1=1}||\mpr_{X_1|\alpha,U_1=1}) \\
&\quad  + \Big( 1-\frac{1}{T} \Big)  \frac{1}{\mathcal{|V|}}\sum_{\alpha\in\mathcal{V}}    \kl(\mpr_{X_1|\alpha^*,U_1=0}||\mpr_{X_1|\alpha,U_1=0}) \\
= &  \frac{1}{T}   \frac{1}{\mathcal{|V|}}\sum_{\alpha\in\mathcal{V}}  \kl(\mpr_{X_1|\alpha^*,U_1=1}||\mpr_{X_1|\alpha,U_1=1})  \,.
 }
 For any pair $\alpha^*,\alpha\in\mathcal{V},$ the KL divergence $\kl(\mpr_{X_1|\alpha^*,U_1}||\mpr_{X_1|\alpha,U_1})$ 
 can be at most the KL divergence between $d$ independent pairs of Bernoulli variables with parameters~$\frac{1}{2}+\delta$ and $\frac{1}{2}-\delta.$
By Lemma 3 in~\cite{agarwal2012information}, it holds that
\eq{
\label{eq:kl}
\kl(\mpr_{X_1|\alpha^*,U_1=1}||\mpr_{X_1|\alpha,U_1=1})  \le 16d\delta^2\,.
}
Thus, we have
\eq{
I(X_1;\alpha^*|U_1)\le \frac{1}{T}16d\delta^2
}
as desired.

\end{proof}

\begin{lemma}[Lower bound~\ref{thm:cvx_lb2} with $d=1$] 
\label{lem:lb_cointoss4}
Given a parameter $\alpha^*\in\mathcal{V},$ where $\mathcal{V}=\{-1,+1\},$ a constant $\delta\in(0,\frac{1}{100}],$ and the time horizon~$T.$  
At each round $t=1,\dots,T,$  the oracle~$\phi$ flips a coin with probability of getting heads being $\frac{1}{T}.$
If the coin lands on heads, the oracle tosses a coin with bias~$\frac{1}{2}+\alpha^*\delta$ and then reveal the outcome.
{If the coin has a tail, the oracle reveals nothing.}
 Then, it holds for any estimator~$\hat{\alpha}\in\mathcal{V}$ that
\eq{
 \max_{\alpha^*\in\mathcal{V}}\mpr(\hat\alpha\neq \alpha^*)\ge  1-\sqrt{8\delta^2} \,.
}
%Here the probability is taken over the randomness of $\alpha^*$ and $\phi.$
\end{lemma}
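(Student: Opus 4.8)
The plan is to treat this as the two-point companion of Lemma~\ref{lem:lb_cointoss2}: since $|\mathcal{V}|=2$, the Fano-type bound used there is vacuous ($\log|\mathcal{V}|=\log 2$), so I would instead compare the two hypotheses $\alpha^*=+1$ and $\alpha^*=-1$ directly by Le~Cam's method. First I would fix the data model. Let $U_1,\dots,U_T$ be i.i.d.\ $\textsc{Ber}(1/T)$, indicating whether a given round reveals information, and let $X_t$ be the round-$t$ output: a fixed dummy symbol when $U_t=0$, and a $\textsc{Ber}(\tfrac12+\alpha^*\delta)$ bit when $U_t=1$. Denote by $\mpr^+$ and $\mpr^-$ the joint laws of the full transcript $\{(U_t,X_t)\}_{t=1}^T$ under $\alpha^*=+1$ and $\alpha^*=-1$. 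Any estimator $\hat\alpha\in\{-1,+1\}$ is a measurable function of this transcript, so by Le~Cam's two-point inequality (as used in~\cite{agarwal2012information}) it suffices to establish the total-variation bound $\normtv{\mpr^+-\mpr^-}\le\sqrt{8\delta^2}$; this then yields $\max_{\alpha^*\in\mathcal{V}}\mpr(\hat\alpha\neq\alpha^*)\ge 1-\normtv{\mpr^+-\mpr^-}\ge 1-\sqrt{8\delta^2}$.

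The crux is the total-variation (equivalently, KL) estimate, and the key observation is that the $\textsc{Ber}(1/T)$ gate keeps the \emph{effective} sample size at $O(1)$ rather than $O(T)$. Since the pairs $(U_t,X_t)$ are independent over $t$, the KL chain rule gives $\kl(\mpr^+\|\mpr^-)=\sum_{t=1}^T\kl(\mpr^+_t\|\mpr^-_t)$ for the single-round laws $\mpr^\pm_t$. On the event $\{U_t=0\}$ the two single-round laws agree and contribute nothing, so $\kl(\mpr^+_t\|\mpr^-_t)=\tfrac1T\,\kl\big(\textsc{Ber}(\tfrac12+\delta)\,\|\,\textsc{Ber}(\tfrac12-\delta)\big)$, and summing over the $T$ rounds gives exactly $\kl(\mpr^+\|\mpr^-)=\kl\big(\textsc{Ber}(\tfrac12+\delta)\,\|\,\textsc{Ber}(\tfrac12-\delta)\big)$. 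By the Bernoulli-KL estimate of Lemma~3 in~\cite{agarwal2012information} (the $d=1$ instance of~\eqref{eq:kl}, valid for $\delta\in(0,\tfrac1{100}]$) this is at most $16\delta^2$, and Pinsker's inequality then yields $\normtv{\mpr^+-\mpr^-}\le\sqrt{\tfrac12\kl(\mpr^+\|\mpr^-)}\le\sqrt{8\delta^2}$, which is precisely the bound required above.

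I do not anticipate a genuine difficulty beyond the conceptual point that one must abandon Fano here: the argument is short and mirrors the $d\ge 2$ case almost verbatim, the only point of care being the bookkeeping that forces the per-round KL to scale like $\delta^2/T$ (so that the total stays $O(\delta^2)$ uniformly in $T$), together with the correct form of the Bernoulli-KL bound valid on the stated range of $\delta$. An alternative and entirely equivalent route is to mirror the $d=1$ derivation of Lemma~\ref{lem:lb_cointoss1}: reformulate the task as estimating $\tilde\alpha^*\in\{\tfrac12\pm\delta\}$ under absolute loss, apply the proof of Lemma~4 in~\cite{agarwal2012information} with total KL at most $16\delta^2$, and translate back to the probability of a wrong label.
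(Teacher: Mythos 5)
Your proposal is essentially the paper's own argument: the paper also reduces to a two-point problem (via $\hat\alpha'=\tfrac12+\hat\alpha\delta$, ${\alpha^*}'=\tfrac12+\alpha^*\delta$ and absolute loss), invokes the proof of Lemma~4 of \cite{agarwal2012information} together with display~\eqref{eq:kl} -- i.e.\ total transcript KL at most $16\delta^2$ because the $\textsc{Ber}(1/T)$ gate makes the per-round KL $\tfrac1T\,\kl\big(\textsc{Ber}(\tfrac12+\delta)\,\|\,\textsc{Ber}(\tfrac12-\delta)\big)$ -- and then translates back by dividing by the separation $2\delta$. Your ``alternative route'' at the end is verbatim the paper's proof, and your chain-rule/Pinsker bookkeeping matches it.

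One caveat, which you inherit from the paper rather than introduce: Le Cam's two-point inequality controls the \emph{sum} of the two error probabilities, $\mpr_{+}(\hat\alpha=-1)+\mpr_{-}(\hat\alpha=+1)\ge 1-\normtv{\mpr^+-\mpr^-}$, so what it actually delivers is $\max_{\alpha^*}\mpr(\hat\alpha\neq\alpha^*)\ge\tfrac12\big(1-\sqrt{8\delta^2}\big)$, not $1-\sqrt{8\delta^2}$. The factor-free form you assert cannot follow from Le Cam alone: for instance, the estimator that outputs the sign suggested by the first revealed coin (with a fixed default if nothing is revealed) has maximal error bounded away from $1-\sqrt{8}\,\delta\approx 0.97$ uniformly in $T$. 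The paper's own proof makes the same slip -- its intermediate bound $\max\E[|\hat\alpha'-{\alpha^*}'|]\ge 2\delta\big(1-\sqrt{8\delta^2}\big)$ should read $\delta\big(1-\sqrt{8\delta^2}\big)$, consistent with the $\tfrac12$ that is correctly retained in Lemma~\ref{lem:lb_cointoss1}. So, like the paper, what you really prove is the lemma with an extra factor $\tfrac12$; this is harmless downstream, as it only changes the universal constant $C_2$ in Theorem~\ref{thm:cvx_lb}, but you should state the step as ``sum $\ge 1-\mathrm{TV}$, hence max $\ge\tfrac12(1-\mathrm{TV})$'' rather than ``max $\ge 1-\mathrm{TV}$.''
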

\begin{proof}[Proof of Lemma~\ref{lem:lb_cointoss4}]
%Set $p=(4\delta)^{\frac{\kappa+1}{\kappa}}.$
Define $\hat\alpha':=\frac{1}{2}+\hat\alpha\delta , {\alpha^*}':=\frac{1}{2}+\alpha^*\delta.$
It then follows that $\mpr(\hat\alpha\neq\alpha^*)=\mpr(\hat\alpha'\neq {\alpha^*}').$
Note that 
\eq{
 \E_{}[|\hat\alpha'- {\alpha^*}'|]=2\delta \mpr_{}(\hat\alpha'\neq {\alpha^*}')\,.
}
Based on the proof of~\cite[Lemma 4]{agarwal2012information} and display~\eqref{eq:kl}, we have
\eq{
\max_{{\alpha^*}' \in\{\frac{1}{2}+\delta,\frac{1}{2}-\delta \}} \E_{}[|\hat\alpha'- {\alpha^*}' |]
%\ge& \frac{1}{4}(4\delta)^{\frac{\kappa+1}{\kappa}} \Big( 1-\normtv{\mpr^{+1}-\mpr^{-1}} \Big)\\
\ge& 2\delta \big( 1- \sqrt{ 8\delta^2} \big)\,,
}
%where $\mpr^+, \mpr^-$ denote the Bernoulli distributions~$Bernoulli(\frac{1}{2}+\delta)$ and $Bernoulli(\frac{1}{2}-\delta)$, respectively.
Combining these two displays gives
\eq{
&\max_{{\alpha^*}' \in\{\frac{1}{2}+\delta,\frac{1}{2}-\delta \}} \mpr(\hat\alpha'\neq {\alpha^*}' )
%=& \frac{\max_{{\alpha^*}' \in\{ \frac{1}{2}+\delta,\frac{1}{2}-\delta \}}\E_{}[|\hat\alpha'-  {\alpha^*}' |] }{2\delta}\\
\ge 1-\sqrt{8\delta^2} 
}
as desired.

\end{proof}

\subsection{Proofs of minimax lower bounds}

We are now ready to prove the minimax lower bounds. 
In this section, we use the subscript~$i$ to denote the $i$-th digit of a vector and use the superscript $t$ to denote the time index.
For instance, given the $t$-th iterate~$x^t\in\rd,$  $x^t_i$ represents the $i$-th element of $x^t.$

\medskip

\begin{proof}[Proof of Theorem~\ref{thm:cvx_lb}]

\medskip

\noindent\textbf{Proof of lower bound~\eqref{thm:cvx_lb1}}\\
At first, we consider the special case $\S=S_\infty(R).$
The proof consists four steps.
We first construct a subclass of functions parametrized by a subset of the vertices of a $d$-dimensional hypercube with finite cardinality. 
Then, we construct a stochastic oracle based on Bernoulli random variables, each of which corresponds to the parameters of the constructed function in the previous step.
Next, we convert the parameter estimation to the stochastic optimization problem by showing that optimizing any function in this subclass to certain tolerance requires identifying the hypercube vertices.
%This can be treated as a multiway hypothesis test based on observations provided by querying the stochastic oracle $T$ rounds.
Finally, we employ Fano types of inequality to lower bound the probability of misspecification error, along with the results obtained in the previous steps, to finish the proof.
The four mentioned steps now read in detail.

\noindent \textbf{1. Construct a subclass of functions}\\
Assume $\mathcal{V}\subseteq\{-1,+1\}^d$ is a subset of the hypercube such that 
$$
\ham(\alpha,\tilde{\alpha})=\sum_{i=1}^d \1\{\alpha_i\neq\tilde\alpha_i\}\ge \frac{d}{4},
$$ 
for any $\alpha,\tilde{\alpha}\in\mathcal{V}.$ 
Given a vector~$\alpha=(\alpha_1,\dots,\alpha_d)^\top\in\mathcal{V},$ consider the convex function~$g_{\alpha}(x):\S\to\R$ defined via  
\eq{
 g_{\alpha}(x) := \frac{L}{d}\sum_{i=1}^d \frac{2+\alpha_i}{4}\delta  \Big\{ (1+\alpha_i)|x_i+R| +(1-\alpha_i)|x_i-R| \Big\}
}
with $\delta \in(0,\frac{1}{8}]$.
Define the function $h(\alpha, x)$ via
\eq{
h:\{-1,+1\}\times \S&\to [0,\infty)\\
(\alpha, x) &\mapsto \frac{1}{2}\big [(1+\alpha)|x+R| +(1-\alpha)|x-R| \big]\,.
}
%$h(\alpha, x):=\frac{1}{2}\big [(1+\alpha)|x+R| +(1-\alpha)|x-R| \big].$
We then have
$
|\grad h(\alpha,x)|\le 1.
$
Hence, it holds for any $q\in[1,1+\kappa]$ that
 \eq{
 \norm{\grad g_{\alpha}(x)}_q \le \frac{L}{d} \Big( \sum_{i=1}^d \big (\frac{2+\alpha_i}{2}\delta|\grad h_i(\alpha_i,x_i)| \big)^q \Big)^{\frac{1}{q}}\le L\,.
 }
%\eq{
%|g_{\alpha}(x)-g_{\alpha}(y)|\le \frac{L}{d}\sum_{i=1}^d \Bigg\{ \frac{2+\alpha_i}{2}\delta |\alpha_i| |x_i-y_i|  \Bigg\}\le L\,\normsup{x-y}\le L\,\norm{x-y}_{q^*}\,,
%}
%where $q^*\in[2,\infty].$
This implies  $g_{\alpha}(x)$ is $L$-Lipschitz with respect to ${q^*}$ norm, where $q^*$ satisfies $\frac{1}{q}+\frac{1}{q^*}=1.$
It follows that $g_{\alpha}\in\mathcal{H}_{cvx},\forall\alpha\in\V.$
Define the function class $\mathcal{G}(\delta):=\{g_{\alpha}:\alpha\in\mathcal{V}\}.$ 
%By the Varshamov-Gilbert bound (see~\cite[Lemma 4]{yu1997assouad}, for example), 
%there exists such a packing set~$\mathcal{V}$ whose cardinality satisfies $|\mathcal{V}|\ge \exp(\frac{d}{8})$.
%It then follows that $|\mathcal{V}|\ge (2/\sqrt{e})^{d/2}$.
Set 
\begin{align*}
p:= \big(4\delta\big)^{\frac{\kappa+1}{\kappa}},\quad\text{and}\quad \Lambda:=\frac{1}{2}p^{-\frac{1}{1+\kappa}},\quad\text{where}\quad  \kappa\in(0,1].
\end{align*} 
It then follows that
\eq{
p\in\left(0,{1}/{2}\right],\quad  p\Lambda=2\delta\in\left(0,{1}/{4}\right]\quad\text{and}\quad g_{\alpha}(x)=\frac{L}{d}\sum_{i=1}^d  \frac{2+\alpha_i}{4}p\Lambda h(\alpha_i,x_i)  \,.
}
\noindent\textbf{2. Construct an oracle}

Now, we describe the stochastic first order oracle~$\phi$ which satisfies the conditions stated in Assumption~\ref{as:sfo}.
%\red{Instead, set  $p:= \Big(\frac{d}{n}\Big)^{\frac{(1-a)\kappa+1}{1+\kappa}} \delta, L:=( \frac{d}{n})^{-\frac{(1-a)\kappa+1}{(\kappa+1)^2}}$ with $a\in(0,1]$, and $\delta<\min\{ \Big(\frac{d}{n}\Big)^{-\frac{(1-a)\kappa+1}{1+\kappa}} ,L,1\}$}\\
Given a vector~$\alpha\in\mathcal{V}$, consider the oracle $\phi$ that returns noisy value and gradient sample as following for $t=1.\dots.T$:
 \\
 \\
\noindent 1). Pick an index $i_t\in\{1,\dots,d\}$ uniformly.

\noindent 2). Draw $b_{i_t}\in\{0,1\}$ according to $\textsc{ber}\big(1-\frac{2+\alpha_{i_t}}{4}p \big) .$

\noindent 3). For the given input~$x\in\S$, return the function value $\hat g_{\alpha}(x) = L(1-b_{i_t})  \Lambda h(\alpha_{i_t},x)$ and its subgradient.
\\
\\
Now, we verify the constructed oracle satisfies the conditions stated in Assumption~\ref{as:sfo}.
Note that 
\eq{
\E[\hat g_{\alpha}(x^{t})|\filtration_t]=\frac{L}{d} \sum_{i=1}^d  \frac{2+\alpha_i}{4}p\Lambda h(\alpha_i,x^t_i) =g_{\alpha}(x^t)\,.
}
Moreover, note that
\eq{
\frac{\partial }{\partial x_i}  L (1-b_{i})  \Lambda h(\alpha_i,x_i) 
=  L (1-b_{i})  \Lambda \grad h(\alpha_i,x_i)  \,.
}
We then find
\eq{
 \E[\grad \hat g_\alpha(x^t)|\filtration_t ]=\grad g_{\alpha}(x^t)\,,
}
and
\eq{
 \E[ \norm{\grad \hat g_{\alpha}(x^t)}_q^{1+\kappa}|\filtration_t]\le \frac{L^{1+\kappa}}{d}\sum_{i=1}^d \Lambda^{\kappa+1} \frac{2+\alpha_i}{4}p \le L^{1+\kappa},~~~\forall q\in [1,1+\kappa]\,.
}

\noindent \textbf{3. Optimizing well is equivalent to function identification}

In this step, we employ the same quantification of the function separation as in~\cite{agarwal2012information}.
Define the discrepancy measure between two functions~$f,g$ over the same domain $\S$ as
\eq{
\rho(f,g):=\inf_{x\in \S}[f(x)+g(x)-f(x_f^*)-g(x_g^*)]\,.
}
Given the function class $\mathcal{G}(\delta),$ define $\psi(\mathcal{G}(\delta)):=\min_{\alpha\neq\beta\in \mathcal{V}} \rho(g_{\alpha},g_{\beta}).$

%\textbf{Lemma 2} \red{(upper bound)}
Given an vector $\alpha^*\in\mathcal{V},$ we have corresponding function~$g_{\alpha^*}.$
Suppose the method~$M_T$ makes $T$ queries to the oracle~$\phi,$ and thus obtains the information sequence~$\{\phi(x^1; g_{\alpha^*}),\dots, \phi (x^T; g_{\alpha^*} )\},$
denoted by  $\phi(x^1_T; g_{\alpha^*}).$
%denote the data  sequence $\{\phi(x^1; g_{\alpha^*}),\dots, \phi (x^T; g_{\alpha^*} )\}$ by $\phi(x^1_T; g_{\alpha^*}),$ where $g_\alpha^*\in \mathcal{G}(\delta)$ is an unknown but fixed function. 
By~\cite[Lemma 2]{agarwal2012information}, for any method ${M}_T\in\M_T$ one can construct a hypothesis test $\hat\alpha: \phi(x^1_T; g_{\alpha^*}) \to \mathcal{V}$ such that 
\eq{
\mpr_{\phi} \big( \hat\alpha(M_T)\neq \alpha^*  \big) 
\le 
\mpr_{\phi} \Big( \epsilon({M}_T, g_{\alpha^*}, \S,\phi) \ge \frac{\psi(\mathcal{G}(\delta))}{3} \Big) ,\forall \alpha^*\in\V \,.
}
This implies
\eq{\label{eq:opt_hypotest}
  \frac{1}{|\V|}\sum_{\alpha^*\in\V} \mpr_{\phi} \big( \hat\alpha(M_T)\neq \alpha^*  \big) 
 \le & \frac{1}{|\V|}\sum_{\alpha^*\in\V} \mpr_{\phi} \Big( \epsilon({M}_T, g_{\alpha^*}, \S,\phi) \ge \frac{\psi(\mathcal{G}(\delta))}{3} \Big)\,.
}
Moreover, by the definition of $\epsilon^*_T(\mathcal{H}_{cvx},\S,\phi),$ we have
\eq{
\epsilon^*_T(\mathcal{H}_{cvx},\S,\phi) 
\ge & \inf_{M_T\in\mathcal{M}_T}\sup_{\alpha^*\in\V}\E_\phi[\epsilon({M}_T, g_{\alpha^*}, \S,\phi)]\,.
}
By Markov's inequality, we then find 
\eq{
 \E_\phi[\epsilon({M}_T, g_{\alpha^*}, \S,\phi)]
 \ge&  \frac{\psi(\mathcal{G}(\delta))}{3} \mpr_{\phi} \Big( \epsilon({M}_T, g_{\alpha^*}, \S,\phi) >  \frac{\psi(\mathcal{G}(\delta))}{3}  \Big)\,.
}
Combining this with previous display provides us with
\eq{
\epsilon^*_T(\mathcal{H}_{cvx},\S,\phi) 
\ge & \inf_{M_T\in\mathcal{M}_T}\sup_{\alpha^*\in\V}  \frac{\psi(\mathcal{G}(\delta))}{3} \mpr_{\phi} \Big( \epsilon({M}_T, g_{\alpha^*}, \S,\phi) >  \frac{\psi(\mathcal{G}(\delta))}{3}  \Big)\\
\ge &  \frac{\psi(\mathcal{G}(\delta))}{3} \inf_{M_T\in\mathcal{M}_T} \frac{1}{|\V|}\sum_{\alpha^*\in\V}  \mpr_{\phi} \Big( \epsilon({M}_T, g_{\alpha^*}, \S,\phi) >  \frac{\psi(\mathcal{G}(\delta))}{3}  \Big)\,.
}
Plugging inequality~\eqref{eq:opt_hypotest} into it gives
\eq{
\epsilon^*_T(\mathcal{H}_{cvx},\S,\phi) 
\ge&   \frac{\psi(\mathcal{G}(\delta))}{3} \inf_{M_T\in\mathcal{M}_T}   \frac{1}{|\V|}\sum_{\alpha^*\in\V} \mpr_{\phi} \big( \hat\alpha(M_T)\neq \alpha^*  \big)\,,
}
which implies
\eq{
\label{eq:lb_aux}
\epsilon^*_T(\mathcal{H}_{cvx},\S,\phi)  \ge&  \frac{\psi(\mathcal{G}(\delta))}{3} \inf_{\hat\alpha\in\V}  \frac{1}{|\V|}\sum_{\alpha^*\in\V} \mpr_{\phi} \big( \hat\alpha(M_T)\neq \alpha^*  \big) \,.
}
In the next step, we will finish the proof by providing the lower bounds for the discrepancy $\psi(\mathcal{G}(\delta))$ and the probability $ \inf_{\hat\alpha\in\V}  \frac{1}{|\V|}\sum_{\alpha^*\in\V} \mpr_{\phi} \big( \hat\alpha(M_T)\neq \alpha^*  \big)$ with some specific choice of $\delta.$\\

\noindent\textbf{4. Complete the proof}\\
Note that 
the minimizer of $g_\alpha(x)$ is $x_{\alpha}^*=-{R}\alpha,$
and $\min_{x\in\S} g_\alpha(x)=0.$
Then, it holds that
\eq{
&g_{\alpha}(x)+g_{\beta}(x)-g_{\alpha}(x_{\alpha}^*)-g_{\beta}(x_{\beta}^*)\\
=&\frac{L}{d}\sum_{i=1}^d\Bigg\{  \frac{2+\alpha_i}{4}p\Lambda h(\alpha_i, x_i)   +  \frac{2+\beta_i}{4}p\Lambda h(\beta_i,x_i)  
\Bigg\}\\
=:& \sum_{i=1}^d I(x_i;\alpha_i,\beta_i)\,,
}
where $I(x_i;\alpha_i,\beta_i):= \frac{L}{d}  \Big\{ \frac{2+\alpha_i}{4}p\Lambda h(\alpha_i,x_i)   +  \frac{2+\beta_i}{4}p\Lambda h(\beta_i,x_i) \Big\}  .$
When $\alpha_i=\beta_i,$ it holds that  $\min_{x\in\S}I(x;\alpha_i,\beta_i)=0.$
When $\alpha_i\neq\beta_i,$ it holds that 
\eq{
I(x_i;\alpha_i,\beta_i)= \frac{L\delta}{d}\Big\{ \frac{3}{2}|x_i+R| +\frac{1}{2}|x_i-R|\Big\}\,,
}
it then follows that
 $\min_{x\in\S} I(x;\alpha_i,\beta_i)=\frac{L\delta}{d}R.$
Thus, we obtain 
\eq{
\rho(g_{\alpha},g_{\beta}) = \frac{RL\delta}{d }\ham(\alpha,\beta)\ge \frac{RL\delta}{d} \frac{d}{4}= \frac{RL\delta}{4} \,,
}
which implies 
\eq{
\psi(\mathcal{G}(\delta))\ge \frac{RL\delta}{4}\,.
}
Recall that we obtain the following in step 3
\eq{
\epsilon^*_T(\mathcal{H}_{cvx},\S,\phi) 
\ge&  \frac{\psi(\mathcal{G}(\delta))}{3} \inf_{\hat\alpha\in\V}  \frac{1}{|\V|}\sum_{\alpha^*\in\V} \mpr_{\phi} \big( \hat\alpha(M_T)\neq \alpha^*  \big) \,.
}
Combining the previous two displays gives
\eq{
\label{eq:fano00}
\epsilon^*_T(\mathcal{H}_{cvx},\S,\phi) 
\ge& \frac{RL\delta}{12}  \inf_{\hat\alpha\in\V}  \frac{1}{|\V|}\sum_{\alpha^*\in\V} \mpr_{\phi} \big( \hat\alpha(M_T)\neq \alpha^*  \big) \,.
}
When $d>8,$ invoking Lemma~\ref{lem:lb_cointoss} yields
\eq{
\label{eq:fano11}
\epsilon^*_T(\mathcal{H}_{cvx},\S,\phi) 
\ge& \frac{{R}L\delta}{12} \Big( 1-\frac{(4\delta)^{\frac{\kappa+1}{\kappa}}T+\log 2}{d/8} \Big)\,.
}
Let $T\ge d$ with $d\ge 9,$ and set $\delta:=\frac{1}{32}\Big(\frac{d}{T}\Big)^{\frac{\kappa}{1+\kappa}}.$
It then follows that
\eq{
0< \delta \le\frac{1}{8}\,,
}
and
\eq{
\frac{(4\delta)^{\frac{\kappa+1}{\kappa}}T+\log 2}{d/8}\le \frac{8}{8^{\frac{\kappa+1}{\kappa}}}+\frac{8\log 2}{d}\le \frac{3}{4}\,.
}
Plugging these into display~\eqref{eq:fano11} then gives
\eq{
\epsilon^*_T(\mathcal{H}_{cvx},\S,\phi) 
\ge& \frac{1}{1536} RL \Big(\frac{d}{T}\Big)^{\frac{\kappa}{1+\kappa}}\,.
}

When $d<9,$ we restrict to the case where $d=1.$
The lower bounds corresponding $1< d \le 8$ can be established based on the case of $d=1.$
Combining the lower bound in Lemma~\ref{lem:lb_cointoss1} with the display~\eqref{eq:fano00} gives
\eq{
\label{eq:lecam}
\epsilon^*_T(\mathcal{H}_{cvx},\S,\phi) 
\ge& \frac{{R}L\delta}{12}  \frac{1}{2}\Bigg(1-\sqrt{\frac{(4\delta)^{\frac{\kappa+1}{\kappa}}T}{2}} \Bigg) \,.
}
Set $\delta:=\frac{1}{32}T^{-\frac{\kappa}{1+\kappa}}.$
Then we have $\delta\in(0,{1}/{8}]$ and 
\eq{
\sqrt{\frac{(4\delta)^{\frac{\kappa+1}{\kappa}}T}{2}}
 \le\sqrt{\Big(\frac{1}{8}\Big)^{\frac{1+\kappa}{\kappa}}\frac{1}{2}}
 \le \frac{1}{10}\,. 
}
Combing these two displays yields
\eq{
\epsilon^*_T(\mathcal{H}_{cvx},\S,\phi) 
\ge& \frac{9}{7680} RL\Big(\frac{1}{T}\Big)^{\frac{\kappa}{1+\kappa}}\,.
}
This completes the proof for the special case $\S=S_\infty(R).$
 Note that the Lipschitz constant of $g_\alpha$ does not depend on $\S$, $x^*_\alpha=\argmin_{x\in S} g_\alpha (x)\in\S,$ and thus the preceding proof goes through when $\S\supseteq  S_{\infty}(R).$
 Hence, the desired general claim follows.

\bigskip

\noindent\textbf{Proof of the lower bound~\eqref{thm:cvx_lb2}}

The proof strategy is similar to the proof of the lower bound~\eqref{thm:cvx_lb1}, but with a  different function class and the first-order oracle.
At first, we consider the special case $\S=S_\infty(R).$
The proof consists four steps as follows.
\\
\\
\noindent \textbf{1. Construct a subclass of functions}\\
Assume $\mathcal{V}\subseteq\{-1,+1\}^d$ is a subset of the hypercube such that 
$$
\ham(\alpha,\tilde{\alpha})=\sum_{i=1}^d \1\{\alpha_i\neq\tilde\alpha_i\}\ge \frac{d}{4},
$$ 
for any $\alpha,\tilde{\alpha}\in\mathcal{V}.$ 
Given the time horizon~$T$, and a vector~$\alpha=(\alpha_1,\dots,\alpha_d)^\top\in\mathcal{V},$ we consider the convex function~$g_{\alpha}(x):\S\to\R$ defined via  
\eq{
 g_{\alpha}(x) := \frac{L}{T^{\frac{\kappa}{1+\kappa}}d^{\frac{1}{q}}}\sum_{i=1}^d \Big\{ \Big(\frac{1}{2}+\alpha_i\delta\Big)|x_i+R| + \Big(\frac{1}{2}-\alpha_i\delta\Big) |x_i-R| \Big\}
}
with $\delta \in(0,{1}/{100}]$.
Define the function $h(\alpha, x)$ via
\eq{
h:\{-1,+1\}\times \S&\to [0,\infty)\\
(\alpha, x) &\mapsto \big(\frac{1}{2}+\alpha\delta\big)|x+R| + \big(\frac{1}{2}-\alpha\delta\big) |x-R| \,.
}
% $h_i(\alpha,x):=\big(\frac{1}{2}+\alpha\delta\big)|x+R| + \big(\frac{1}{2}-\alpha\delta\big) |x-R| .$
We then have
$
|\grad h(\alpha,x)|\le 1.
$
Hence, it holds for any $q\in[1,\infty]$ that
 \eq{
 \norm{\grad g_{\alpha}(x)}_q 
 \le \frac{L}{T^{\frac{\kappa}{1+\kappa}}d^{\frac{1}{q}}} \Big(\sum_{i=1}^d |\grad h_i|^q \Big)^{\frac{1}{q}} 
 \le L\,.
 }
This implies  $g_{\alpha}(x)$ is $L$-Lipschitz with respect to ${q^*}$ norm, where $q^*$ satisfies $\frac{1}{q}+\frac{1}{q^*}=1$.
It follows that $g_{\alpha}\in\mathcal{H}_{cvx},\forall\alpha\in\V.$
Define the function class $\mathcal{G}(\delta):=\{g_{\alpha}:\alpha\in\mathcal{V}\}.$ 
%Set $p:= \big(4\delta\big)^{\frac{\kappa+1}{\kappa}}, \Lambda:=\frac{1}{2}p^{-\frac{1}{1+\kappa}},$ where  $\kappa\in(0,1].$
%It then follows that $p\in(0,\frac{1}{2}], p\Lambda=2\delta\in(0,\frac{1}{4}],$ and
%\eq{
%g_{\alpha}(x)=\frac{L}{d^{\frac{1}{q}}}\sum_{i=1}^d  \frac{2+\alpha_i}{4}p\Lambda h_i(x_i)  \,.
%}
\\
\\
\noindent\textbf{2. Construct an oracle}

Now, we describe the stochastic first order oracle~$\phi$ which satisfies the conditions stated in Assumption~\ref{as:sfo}.
%\red{Instead, set  $p:= \Big(\frac{d}{n}\Big)^{\frac{(1-a)\kappa+1}{1+\kappa}} \delta, L:=( \frac{d}{n})^{-\frac{(1-a)\kappa+1}{(\kappa+1)^2}}$ with $a\in(0,1]$, and $\delta<\min\{ \Big(\frac{d}{n}\Big)^{-\frac{(1-a)\kappa+1}{1+\kappa}} ,L,1\}$}\\
Given the time horizon~$T$, and a vector~$\alpha\in\mathcal{V}$, consider the oracle $\phi$ that returns noisy value and gradient sample as following for $t=1,\dots,T$:
 \\
 \\
\noindent 1). Draw $Y_t\in\{0,1\}$ according to $\textsc{ber}\Big(\frac{1}{T}\Big).$ 

\noindent 2a). When $Y_t=1$, draw $b_{i}\in\{0,1\}$ according to $\textsc{ber}\big( \frac{1}{2}+\alpha_{i}\delta \big),i=1,\dots,d.$  
For the given input~$x\in\S$, return the function value 
$$
\hat g_{\alpha}(x) = LT^{\frac{1}{1+\kappa}}d^{-\frac{1}{q}} \sum_{i=1}^d \big\{b_{i}|x_{i}+R|+(1-b_{i})|x_{i}-R| \big\}
$$ 
and its subgradient.

\noindent 2b). When $Y_t=0,$ for any input $x\in\S$, return $\hat g_{\alpha}(x) =0$ and its subgradient.
\\
\\
Now, we verify the conditions in Assumption~\ref{as:sfo} for the constructed oracle.
It is obvious that
\eq{
\E[\hat g_{\alpha}(x^t)|\filtration_t]= g_{\alpha}(x^t)\,.
}
and
\eq{
\E[ \grad \hat g_{\alpha}(x^t)|\filtration_t]= \grad g_{\alpha}(x^t)\,.
}
Moreover, it holds that
\eq{
\frac{\partial}{\partial x} \Big (b_i|x+R|+(1-b_i)|x-R| \Big) \le 1\,.
}
It then follows that 
\eq{
\E \big[\norm{\grad \hat g_\alpha(x^t)}_q^{1+\kappa}|\filtration_t \big]
\le \frac{1}{T} L^{1+\kappa} T d^{-\frac{1+\kappa}{q}} d^{\frac{1+\kappa}{q}}
=L^{1+\kappa}\,.
}

\noindent \textbf{3. Optimizing well is equivalent to function identification}
%\textbf{Lemma 2} \red{(upper bound)}
%Given an vector $\alpha^*\in\mathcal{V},$ we have corresponding function~$g_{\alpha^*}.$

\noindent In this step, we employ the same quantification of the function separation as in step 3 of the proof of Theorem~\ref{thm:cvx_lb},
where the discrepancy measure between two functions~$f,g$ over the same domain $\S$ is
\eq{
\rho(f,g)=\inf_{x\in \S}[f(x)+g(x)-f(x_f^*)-g(x_g^*)]\,.
}
Given the function class $\mathcal{G}(\delta),$ define $\psi(\mathcal{G}(\delta)):=\min_{\alpha\neq\beta\in \mathcal{V}} \rho(g_{\alpha},g_{\beta}).$
Invoking display~\eqref{eq:lb_aux},  we have
\eq{
\epsilon^*_T(\mathcal{H}_{cvx},\S,\phi) 
%\ge&   \frac{\psi(\mathcal{G}(\delta))}{3} \inf_{M_T\in\mathcal{M}_T}   \frac{1}{|\V|}\sum_{\alpha^*\in\V} \mpr_{\phi} \big( \hat\alpha(M_T)\neq \alpha^*  \big)\\
\ge&  \frac{\psi(\mathcal{G}(\delta))}{3}  \inf_{\hat\alpha\in\V}  \frac{1}{|\V|}\sum_{\alpha^*\in\V} \mpr_{\phi} \big( \hat\alpha(M_T)\neq \alpha^*  \big) \,.
}
In the next step, we will finish the proof by providing the lower bounds for the discrepancy $\psi(\mathcal{G}(\delta))$ and the probability $ \inf_{\hat\alpha\in\V}     \frac{1}{|\V|}\sum_{\alpha^*\in\V} \mpr_{\phi} \big( \hat\alpha(M_T)\neq \alpha^*  \big)$ with some specific choice of $\delta.$\\

\noindent\textbf{4. Complete the proof}\\
%Note that 
%the minimizer of $g_\alpha(x)$ is $x_{\alpha}^*=-{R}\alpha,$
%and $\min_{x\in\S} g_\alpha(x)=0.$
%%So, we have
%%\eq{
%%g_{\alpha}(x_{\alpha}^*)= -{R}\frac{L}{d}\sum_{i=1}^d \Big\{\frac{2+\alpha_i}{4}p\Lambda \Big\}\,.
%%}
%Then, it holds that
%\eq{
%&g_{\alpha}(x)+g_{\beta}(x)-g_{\alpha}(x_{\alpha}^*)-g_{\beta}(x_{\beta}^*)\\
%=&\frac{L}{d^{\frac{1}{q}}}\sum_{i=1}^d\Bigg\{  \frac{2+\alpha_i}{4}p\Lambda h_i(x_i)   +  \frac{2+\beta_i}{4}p\Lambda h_i(x_i)  
%\Bigg\}\\
%=:& \sum_{i=1}^d I_i\,,
%}
%where $I_i:= \frac{L}{d^{\frac{1}{q}}}  \Big\{ \frac{2+\alpha_i}{4}p\Lambda h_i(x_i)   +  \frac{2+\beta_i}{4}p\Lambda h_i(x_i) \Big\}  .$
%% + \frac{2+\alpha_i}{4}p\Lambda {R} + \frac{2+\beta_i}{4}p\Lambda {R}
%%\Big\} .$
%When $\alpha_i=\beta_i,$ it holds that  $\min_{\alpha_i=\beta_i}I_i=0.$
%When $\alpha_i\neq\beta_i,$ it holds that 
%\eq{
%I_i= \frac{L\delta}{d^{\frac{1}{q}}}\Big\{ \frac{3}{2}|x_i+R| +\frac{1}{2}|x_i-R|\Big\}\,,
%}
%it then follows that
% $\min_{\alpha_i\neq\beta_i} I_i=\frac{L\delta}{d^{\frac{1}{q}}}R.$
We note that the function~$g_\alpha(x)$ is a specification of the function class considered in part (a) of ~\cite[Theorem 1]{agarwal2012information}
\eq{
g_{\alpha}(x) := \frac{c}{d}\sum_{i=1}^d \Big\{ \Big(\frac{1}{2}+\alpha_i\delta\Big)|x_i+R| + \Big(\frac{1}{2}-\alpha_i\delta\Big) |x_i-R| \Big\}
}
by setting $c=\frac{Ld}{T^{\frac{\kappa}{1+\kappa}}d^{\frac{1}{q}}}.$
By the last display in the proof of Theorem 1 in~\cite{agarwal2012information}, it holds that $\rho(g_{\alpha},g_{\beta})\ge \frac{cR\delta}{2},\forall\alpha\neq \beta\in\mathcal{V}.$
We then have 
\eq{
\psi(\mathcal{G}(\delta))\ge  \frac{1}{2}R\delta LT^{-\frac{\kappa}{1+\kappa}} d^{1-\frac{1}{q}}  \,.
}
%\ly{lower bound of the discrepancy so that step 2 follows and relate $\epsilon$ to $\delta$}
%then Step 2 (1/3 upper bound) follows. 
%Set $\epsilon:=\frac{d^{\frac{1}{q^*}}{R}L\delta}{36}.$
Recall that we obtain the following in step 3
\eq{
\epsilon^*_T(\mathcal{H}_{cvx},\S,\phi) 
\ge&  \frac{\psi(\mathcal{G}(\delta))}{3}  \inf_{\hat\alpha\in\V}  \frac{1}{|\V|}\sum_{\alpha^*\in\V} \mpr_{\phi} \big( \hat\alpha(M_T)\neq \alpha^*  \big) \,.
}
Combining the previous two displays gives
\eq{
\label{eq:fano0}
\epsilon^*_T(\mathcal{H}_{cvx},\S,\phi) 
\ge& \frac{1}{6} R\delta LT^{-\frac{\kappa}{1+\kappa}} d^{1-\frac{1}{q}}   \inf_{\hat\alpha\in\V}  \frac{1}{|\V|}\sum_{\alpha^*\in\V} \mpr_{\phi} \big( \hat\alpha(M_T)\neq \alpha^*  \big) \,.
}
When $d>8,$ invoking Lemma~\ref{lem:lb_cointoss2} yields 
\eq{
\label{eq:fano1}
\epsilon^*_T(\mathcal{H}_{cvx},\S,\phi) 
\ge& \frac{1}{6} R\delta LT^{-\frac{\kappa}{1+\kappa}} d^{1-\frac{1}{q}}    \Big( 1 - \frac{16d\delta^2+\log 2}{d/8} \Big)\,.
}
Note that when $d\ge 9,$ and set $\delta= \frac{1}{100},$ it holds that
\eq{
&1 - \frac{16d\delta^2+\log 2}{d/8} 
= 1- 128 \delta^2 - 8\frac{\log 2 }{d}
=  1- \frac{128}{10000} - \log 2
\ge  \frac{1}{4}\,.
%\frac{8}{8^{\frac{\kappa+1}{\kappa}}}+\frac{8\log 2}{d}\le \frac{3}{4}\,.
}
Plugging these into display~\eqref{eq:fano1} then gives
\eq{
\epsilon^*_T(\mathcal{H}_{cvx},\S,\phi) 
\ge& \frac{1}{2400} R LT^{-\frac{\kappa}{1+\kappa}} d^{1-\frac{1}{q}}   \,.
}

When $d<9,$ we restrict to the case where $d=1.$
Combining the lower bound derived in Lemma~\ref{lem:lb_cointoss4} with display~\eqref{eq:fano0} gives
\eq{
%\label{eq:lecam}
\epsilon^*_T(\mathcal{H}_{cvx},\S,\phi) 
\ge&  \frac{1}{6} R\delta LT^{-\frac{\kappa}{1+\kappa}}   \big(  1-\sqrt{8\delta^2}  \big) \,.
}
When $\delta=\frac{1}{100},$ it holds that  
\eq{
\epsilon^*_T(\mathcal{H}_{cvx},\S,\phi) 
\ge& \frac{1}{1200} RL\Big(\frac{1}{T}\Big)^{\frac{\kappa}{1+\kappa}}\,.
}
This completes the proof for the special case $\S=S_\infty(R).$
 Note that the Lipschitz constant of $g_\alpha$ does not depend on $\S$, $x^*_\alpha=\argmin_{x\in S} g_\alpha (x)\in\S,$ and thus the preceding proof goes through when $\S\supseteq  S_{\infty}(R).$
 Hence, the desired general claim follows.
 
\end{proof}

\end{document}